\documentclass[12pt]{article}
\usepackage{amsmath}
\usepackage{times}
\usepackage{graphicx}
\usepackage{color}
\usepackage{multirow}
\usepackage{cite}
\usepackage{rotating}
\usepackage{bbm}
\usepackage{latexsym}
\usepackage{url}

\usepackage{bm}
\usepackage{amssymb}
\usepackage{amsfonts}
\usepackage{amsthm}

\newtheorem{theorem}{Theorem}
\newtheorem{lemma}{Lemma}
\newtheorem{remark}{Remark}
\newtheorem{corollary}{Corollary}

\usepackage{algorithm,algorithmic}
\usepackage{setspace}

\usepackage{mcr}

\newcommand{\nold}{n^\mathrm{old}}
\newcommand{\nnew}{n^\mathrm{new}}

\newcommand{\dold}{d^\mathrm{old}}
\newcommand{\dnew}{d^\mathrm{new}}
\newcommand{\Xtest}{\bm{x}^\mathrm{test}}
\newcommand{\XtestT}{\bm{x}^\mathrm{test\top}}
\newcommand{\xtest}[1]{x^\mathrm{test}_{#1}}
\newcommand{\Xold}{X^\mathrm{old}}
\newcommand{\XoldT}{X^{\mathrm{old}\top}}
\newcommand{\Xnew}{X^\mathrm{new}}
\newcommand{\XnewT}{X^{\mathrm{new}\top}}

\newcommand{\Yold}{\bm{y}^\mathrm{old}}
\newcommand{\Ynew}{\bm{y}^\mathrm{new}}

\newcommand{\Wold}{\bm{w}^\mathrm{*old}}
\newcommand{\Woldh}{\hat{\bm{w}}^\mathrm{*old}}
\newcommand{\Wnew}{\bm{w}^\mathrm{*new}}
\newcommand{\wold}[1]{w^\mathrm{*old}_{#1}}
\newcommand{\woldh}[1]{\hat{w}^\mathrm{*old}_{#1}}
\newcommand{\wnew}[1]{w^\mathrm{*new}_{#1}}

\newcommand{\Alphaold}{\bm{\alpha}^\mathrm{*old}}
\newcommand{\Alphaoldh}{\hat{\bm{\alpha}}^\mathrm{*old}}
\newcommand{\Alphanew}{\bm{\alpha}^\mathrm{*new}}
\newcommand{\alphaold}[1]{\alpha^\mathrm{*old}_{#1}}
\newcommand{\alphaoldh}[1]{\hat{\alpha}^\mathrm{*old}_{#1}}
\newcommand{\alphanew}[1]{\alpha^\mathrm{*new}_{#1}}
\newcommand{\Prim}{\mathrm{P}}
\newcommand{\PrimL}[1]{\mathrm{PL}_{#1}}
\newcommand{\PrimR}{\mathrm{PR}}
\newcommand{\Dual}{\mathrm{D}}
\newcommand{\Pold}{\mathrm{P}_\mathrm{old}}
\newcommand{\Dold}{\mathrm{D}_\mathrm{old}}
\newcommand{\Pnew}{\mathrm{P}_\mathrm{new}}
\newcommand{\Dnew}{\mathrm{D}_\mathrm{new}}
\newcommand{\Gnew}{\mathrm{G}_\mathrm{new}}

\newcommand{\MinLin}[2]{{\rm MinLin}[{#1},{#2}]}
\newcommand{\MaxLin}[2]{{\rm MaxLin}[{#1},{#2}]}
\newcommand{\argmax}{\mathop{\rm argmax}\limits}
\newcommand{\argmin}{\mathop{\rm argmin}\limits}

\newcommand{\targmin}{{\rm argmin}}

\newcommand{\minpartial}{\underline{\partial}}
\newcommand{\maxpartial}{\overline{\partial}}
\newcommand{\minranpartial}[1]{\underline{\underline{\partial{#1}}}}
\newcommand{\maxranpartial}[1]{\overline{\overline{\partial{#1}}}}

\newcommand{\ccell}[1]{\multicolumn{1}{c}{#1}} 

\textheight 23.4cm
\textwidth 14.65cm
\oddsidemargin 0.375in
\evensidemargin 0.375in
\topmargin  -0.55in
\interfootnotelinepenalty=10000
%

\newcommand{\captionfonts}{\normalsize}

\makeatletter  
\long\def\@makecaption#1#2{%
  \vskip\abovecaptionskip
  \sbox\@tempboxa{{\captionfonts #1: #2}}%
  \ifdim \wd\@tempboxa >\hsize
    {\captionfonts #1: #2\par}
  \else
    \hbox to\hsize{\hfil\box\@tempboxa\hfil}%
  \fi
  \vskip\belowcaptionskip}
\makeatother   

\begin{document}
\hspace{13.9cm}1

\ \vspace{20mm}\\

{\LARGE Generalized Low-Rank Update: Model Parameter Bounds for Low-Rank Training Data Modifications}

\ \\
{\bf \large
	Hiroyuki Hanada$^{\displaystyle 1}$,
	Noriaki Hashimoto$^{\displaystyle 1}$,
	\\
	Kouichi Taji$^{\displaystyle 2}$,
	Ichiro Takeuchi$^{\displaystyle 2, \displaystyle 1, \dagger}$
}\\
{$^{\displaystyle 1}$ Center for Advanced Intelligence Project, RIKEN, Tokyo, Japan, 103-0027.}\\
{$^{\displaystyle 2}$ Department of Mechanical Systems Engineering, Nagoya University, Nagoya, Japan, 464-8603.}\\
{$^{\dagger}$ Corresponding author. e-mail: ichiro.takeuchi@mae.nagoya-u.ac.jp}\\
%

{\bf Keywords:} Low-rank update, Leave-one-out cross-validation, Stepwise feature selection, Regularized empirical risk minimization, Strongly convex function

\thispagestyle{empty}
\markboth{}{NC instructions}
\ \vspace{-0mm}\\
%
\begin{center} {\bf Abstract} \end{center}

In this study, we have developed an incremental machine learning (ML) method that efficiently obtains the optimal model when a small number of instances or features are added or removed.
This problem holds practical importance in model selection, such as cross-validation (CV) and feature selection.
Among the class of ML methods known as linear estimators, there exists an efficient model update framework called the \emph{low-rank update} that can effectively handle changes in a small number of rows and columns within the data matrix.
However, for ML methods beyond linear estimators, there is currently no comprehensive framework available to obtain knowledge about the updated solution within a specific computational complexity.
In light of this, our study introduces a method called the \emph{Generalized Low-Rank Update (GLRU)} which extends the low-rank update framework of linear estimators to ML methods formulated as a certain class of regularized empirical risk minimization, including commonly used methods such as SVM and logistic regression.
The proposed GLRU method not only expands the range of its applicability but also provides information about the updated solutions with a computational complexity proportional to the amount of dataset changes.
To demonstrate the effectiveness of the GLRU method, we conduct experiments showcasing its efficiency in performing cross-validation and feature selection compared to other baseline methods.

\clearpage

\section{Introduction} \label{sec:introduction}

In the selection of machine learning (ML) models, it is often necessary to retrain the model parameters multiple times for slightly different datasets.
However, retraining the model from scratch whenever there are minor changes in the dataset is computationally expensive.
To address this, several approaches have been explored to achieve efficient model updates across different problems.
In this study, we focus on the process of adding or removing instances or features from a training dataset, which is known as \emph{incremental learning}.

Within a simple class of ML methods known as \emph{linear estimators}, the techniques for updating the model when instances or features are added or removed are well-established~\cite{orr1996radial,Nocedal99,Boyd04a}.
This updating technique is commonly referred to as a \emph{low-rank update} since it involves updating a small number of rows and/or columns in the matrix used for the computation of the linear estimator.
Although this low-rank update framework is significantly more efficient than retraining the model from scratch, the class of ML methods to which these efficient updates are applicable is limited to linear estimators.
In this study, we introduce a method called \emph{Generalized Low-Rank Update (GLRU)} as a means to generalize the low-rank update framework for a wider range of ML methods.

To provide a concrete explanation, let us introduce some notations.
Consider a training dataset denoted as $(X, \bm y)$ for a supervised learning problem, such as regression and classification, with $n$ instances and $d$ features.
Here, $X$ represents an $n \times d$ input matrix, where the $(i, j)$-th element representes the value of the $j$-th feature in the $i$-th instance, while $\bm y$ is an $n$-dimensional vector, where the $i$-th element represents the output value of the $i$-th instance.
Furthermore, let us denote the training datasets \emph{before} and \emph{after} the update as $(X^{\rm old}, \bm y^{\rm old})$ and $(X^{\rm new}, \bm y^{\rm new})$, respectively, where the former has $n^{\rm old}$ instances and $d^{\rm old}$ features, while the latter has $n^{\rm new}$ instances and $d^{\rm new}$ features. 
Moreover, we denote the optimal model parameters for the old and the new datasets as $\bm w^{*{\rm old}}$ and $\bm w^{*{\rm new}}$, respectively.

To provide a concrete explanation, let us introduce some notations.
Consider a training dataset denoted as $(X, \bm y)$ for a supervised learning problem, such as regression and classification, with $n$ instances and $d$ features.
Here, $X$ is an $n \times d$ input matrix, where the element at position $(i, j)$ represents the value of the $j$-th feature in the $i$-th instance, while $\bm y$ is an $n$-dimensional vector, where the element at index $i$ represents the output value of the $i$-th instance.
Furthermore, let us denote the training datasets \emph{before} and \emph{after} the update as $(X^{\rm old}, \bm y^{\rm old})$ and $(X^{\rm new}, \bm y^{\rm new})$, respectively, where the former has $n^{\rm old}$ instances and $d^{\rm old}$ features, while the latter has $n^{\rm new}$ instances and $d^{\rm new}$ features.
Moreover, we denote the optimal model parameters for the old and the new datasets as $\bm w^{*{\rm old}}$ and $\bm w^{*{\rm new}}$, respectively.

With these notations, incremental learning refers to the problem of efficiently obtaining $\bm w^{*{\rm new}}$ from $\bm w^{*{\rm old}}$ when the new dataset $(X^{\rm new}, \bm y^{\rm new})$ is obtained by adding or removing a small number of instances and/or features from the original dataset $(X^{\rm old}, \bm y^{\rm old})$.
To discuss computational complexity, let us consider an example scenario where $\Delta n$ instances are added or removed, while the number of features remains the same ($d = d^{\rm old} = d^{\rm new}$).
In the case of the aforementioned linear estimator, it is known that the computational cost required for this incremental learning problem is $\cO(d^2 \Delta n)$, which is significantly smaller compared to the computational cost of updating from scratch, which is $\cO(nd^2 + d^3)$.
Similarly, considering the case where $\Delta d$ features are added or removed, while the number of instances remains unchanged ($n = n^{\rm old} = n^{\rm new}$), the computational cost of updating a linear estimator is $\cO(n d \Delta d)$, which is significantly smaller compared to the individual costs of updating from scratch, which is $\cO(n d^2 + d^3)$.

The main contributions of this study are to generalize the low-rank update framework of linear estimator in two key aspects.
The first contribution is to broaden the applicability of efficient row-rank update computation to a wider range of ML methods.
Specifically, the GLRU method is applicable to ML methods that can be formulated as regularized empirical risk minimization, where the loss and regularization are represented as a certain class of convex functions (see Section \ref{sec:learning} for the details).
This class includes several commonly used ML methods, such as Support Vector Machine (SVM) and Logistic Regression (LR), to which the aforementioned row-rank update technique for linear estimators cannot be applied.
The second contribution is to provide information about the updated optimal solution with the minimum computational cost.
Here, the minimum computational cost refers to the computational complexity proportional to the number of elements that differ between $X^{\rm old}$ and $X^{\rm new}$.
For example, if $\Delta n$ instances are added/removed as discussed above, the minimum computational complexity is $\mathcal{O}(d \Delta n)$, and if $\Delta d$ features are added/removed, it becomes $\mathcal{O}(n \Delta d)$.
While the GLRU method has advantages in terms of applicability and computational complexity, it cannot provide the updated optimal solution $\bm w^{*{\rm new}}$ itself --- instead, it provides the range of the updated optimal solution $\cW$ so that $\Wnew\in\cW$.
%
Figure~\ref{fg:GLRU-concept} schematically illustrates how the GLRU method provides information on the updated optimal solution.

While some readers may argue that knowing only the range of the updated optimal solution is insufficient, in practical machine learning (ML) problems, having knowledge of the optimal solution's range alone can enable valuable decision-making.
For example, in the case of leave-one-out CV (LOOCV), let the dataset that includes all training instances be $(X^{\rm old}, \bm y^{\rm old})$, while let the new dataset obtained by removing the $i$-th instance be $(X^{\rm new}, \bm y^{\rm new})$.
In this scenario, the GLRU method indicates that the optimal solution $\bm w^{*{\rm new}}$ is located somewhere in $\cW$ with a computational complexity of $\cO(d)$.
Using the $\cW$, it is possible to calculate the lower and upper bounds of $\bm x_i^\top \bm w^{*{\rm new}}$ for the $i$-th training instance. 
This means that in a binary classification problem with a threshold of zero, if the lower bound is greater than zero, the instance can be classified as the positive class, while if the upper bound is smaller than zero, it can be classified as the negative class.
By repeating this process for all instances, we can obtain the upper and lower bounds of the LOOCV error, which is useful for decision-making in model selection.
%
We can also enjoy similar advantages in feature selection problems (see Section \ref{sec:GLRU-stepwise} for the details).
Figure \ref{fg:GLRU-LOOCV} schematically illustrates how the GLRU method efficiently performs LOOCV.

This paper is an expanded version of a conference paper \cite{Hanada2018AAAI-Modification}.
The conference paper focused on cases where the size of the training data matrix remained the same, but specific elements changed.
Therefore, with the method described in the conference paper, we need to adopt an alternative approach to handle changes in size, for example, such as filling a row or a column of $X$ with zeros.
In this paper, we specifically consider cases where the dataset size changes due to additions or removals of instances or features.
Additionally, this paper addresses the problem of low-rank updates in ML methods, which is an important practical issue.
To our knowledge, the low-rank update framework in incremental learning settings has mainly been limited to linear estimators.
The main contribution of this extended version is to expand the scope of the row-rank update framework in a more general and comprehensive way.
We also provide a theoretical discussion on the computational complexity of adding or removing instances and/or features for both primal and dual optimal solutions.
These complexity analyses allow us to confirm the theoretical and practical advantages of the GLRU method over several existing approaches developed for specific settings and models (refer to the related works section for details and Section \ref{sec:experiment} for numerical comparisons).

\begin{figure}[tp]
\begin{center}
\includegraphics[width=0.7\hsize]{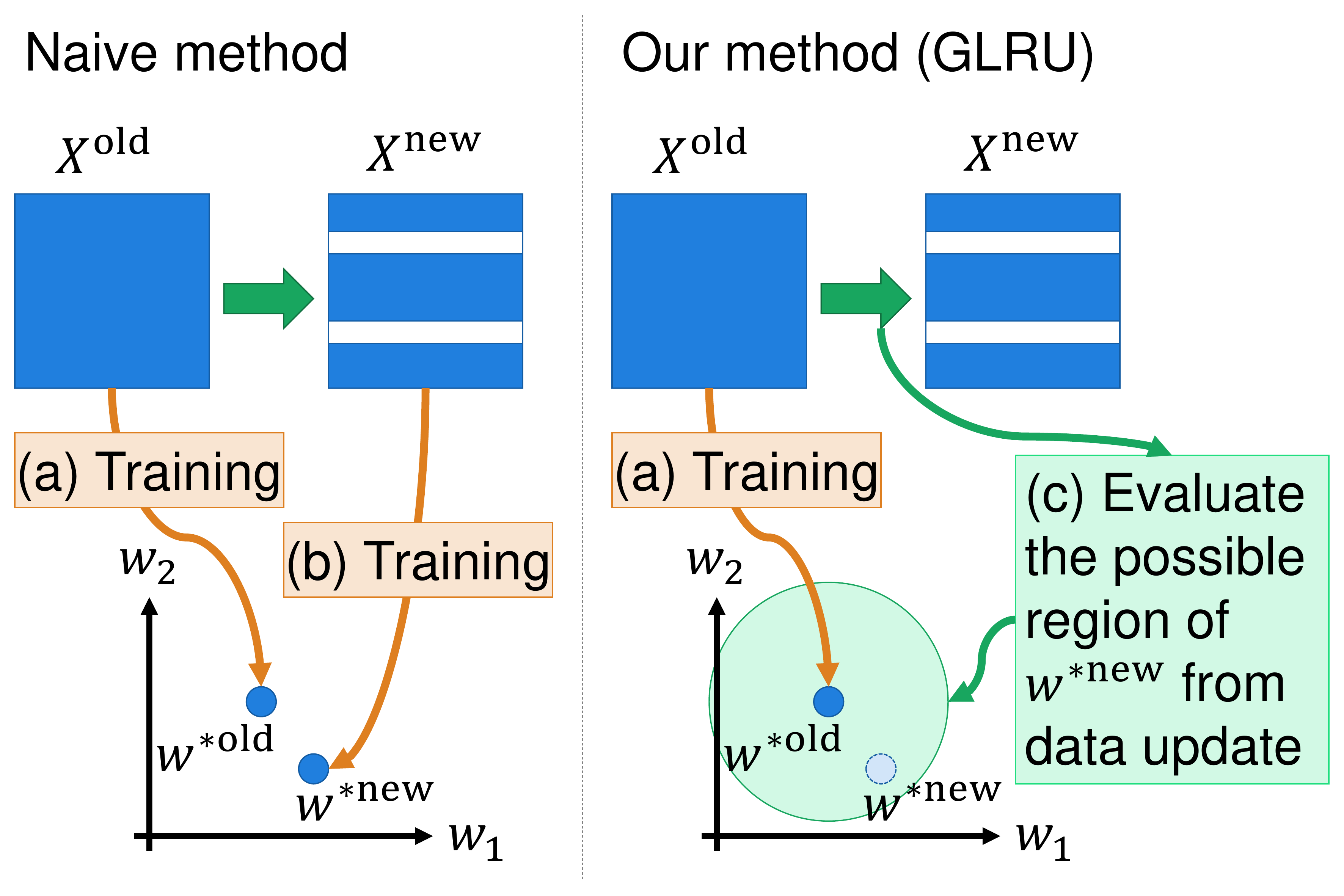}
\end{center}
\caption{
Concept of GLRU.
(a) Suppose that we have already trained the model parameter $\Wold$ for the original training set $\Xold$.
(b) However, even if $\Wold$ is known, computing the model parameter $\Wnew$ for the modified training set $\Xnew$ is costly in general.
(c) GLRU instead aims to evaluate the possible region of $\Wnew$ from $\Wold$ and the data modification, which can be computed in cost only depending on the size of the modification.}
\label{fg:GLRU-concept}
\end{figure}

\begin{figure}[tp]
\begin{center}
\includegraphics[width=0.7\hsize]{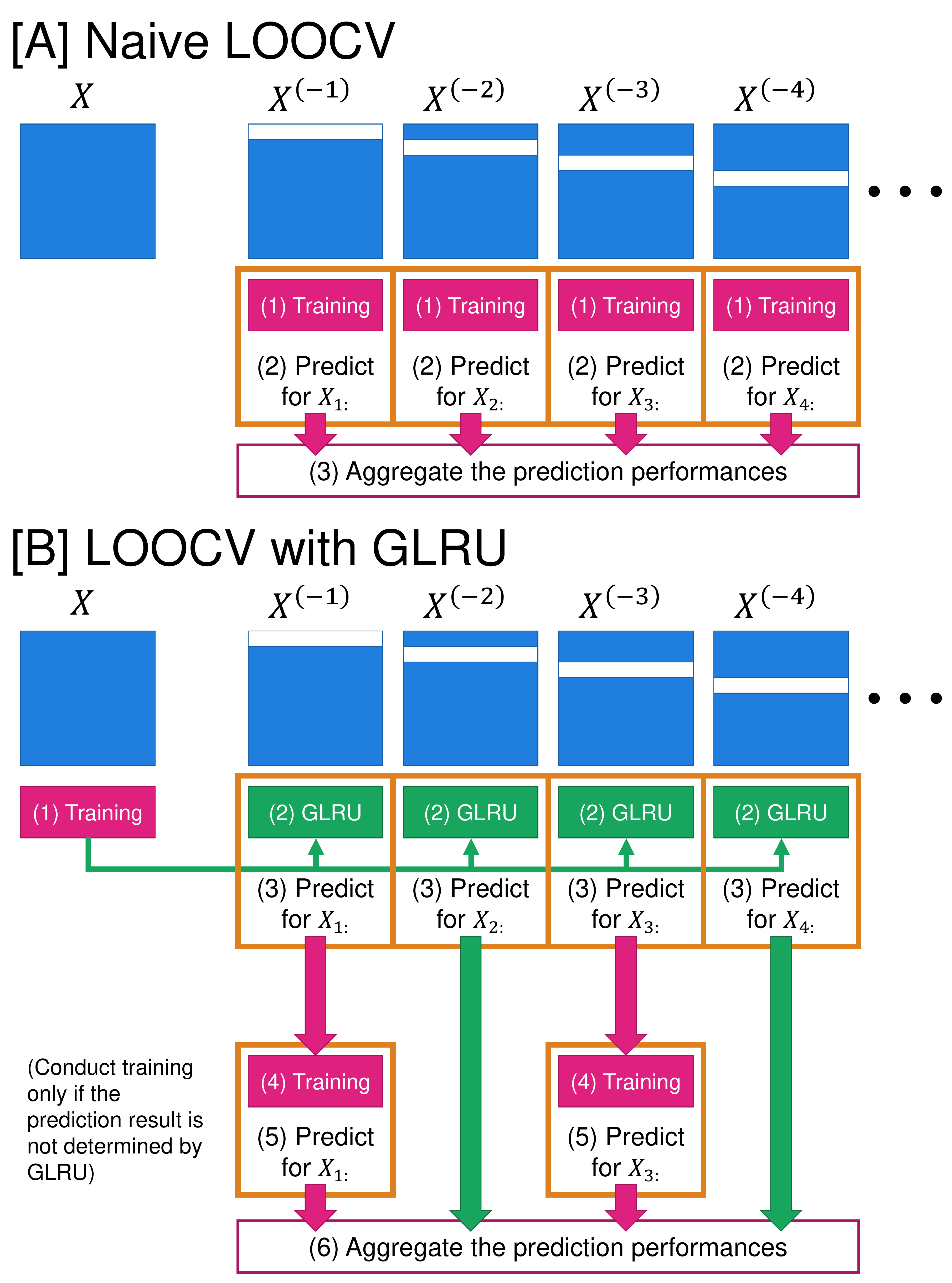}
\end{center}
\caption{Concept of LOOCV with GLRU compared to the naive one. $X_{i:}$ denotes the $i^\mathrm{th}$ row (instance) of $X$, and $X^{(-i)}$ denotes the dataset of $i^\mathrm{th}$ instance being removed. [A] With naive LOOCV, we need to conduct training for $n$ times, where $n$ is the number of instances. [B] However, with GLRU, we conduct training computations only when the prediction result is not determined by GLRU.}
\label{fg:GLRU-LOOCV}
\end{figure}

\subsection{Related works} \label{sec:related}

In general, methods for re-training the model parameter when the training dataset can be modified is referred to as {\em incremental learning}~\cite{solomonoff1989incremental,gepperth2016incremental}.
For complex models such as deep neural network, incremental learning simply refers to fine-tuning the model on a new dataset using the previous model as the initial value for optimization. However, the computational cost of incremental learning through fine-tuning is unknown until the actual retraining process is conducted, and the theoretical properties related to computational complexity have not been explored.
To our knowledge, computational complexity analysis of incremantal learning is limited to the class of linear estimators.
In the case of linear estimator, incremental learning is conducted in the framework called low-rank update~\cite{orr1996radial,hager1989updating,pan1990least,davis2005row}.
In the context of machine learning, the low-rank update of a linear estimator is applied to enhance the efficiency of cross-validation, and various specialized extensions for specific problems are also being performed.
Furthermore, in addition to the CV, low-rank update framework is also used in other issues such as label assignment in semi-supervised learning~\cite{gong2017label} and dictionary learning~\cite{wohlberg2016convolutional}.

Approximate methods are commonly employed to efficiently update ML models when low-rank update frameworks are not available.
For instance, in learning algorithms that require iterative optimization, \cite{bertsekas2015incremental} proposed the use of approximate solutions obtained through fewer iterations, assuming that the old and new models do not significantly differ.
Another example is the algorithm proposed by \cite{schlimmer1986case} for approximate incremental learning of decision trees, where only the tree nodes close to the updated instance are updated instead of updating all the tree nodes.
However, in our proposed GLRU method, we can provide a range within which an optimal solution may exist.
This enables decision-making that does not have to rely on approximation.

Fast methods for cross-validation and stepwise feature selection are also valuable topics.
However, most fast cross-validation methods employ approximate methods, that is, the results may differ slightly from those computed exactly.
A well-known method is to apply the Newton's method for only one step; see, for example, \cite{giordano2019swiss,rad2020scalable}.
Note that such methods may not be fast for datasets with large number of features, since the Newton's method requires the hessian (discussed in the experiment in Section \ref{sec:exp-loocv}).
There have been limited number of fast and exact cross-validation methods \cite{okumura2015quick}, and cross-validation with GLRU can also provide exact results.
Fast methods for stepwise feature selection have been less discussed; we found some fast methods with approximations (e.g., AIC-/BIC-based method \cite{an1989stepwise}, p-value-based method \cite{lin2011vif} and a method using the nearest-neighbor search of features \cite{zogalasiudem2014fast}), however, we could not find fast and non-approximate methods.

In contrast, GLRU can provide non-approximate results even for stepwise feature selections.
GLRU borrows techniques to provide a possible region of model parameters originally developed for {\em safe screening} methods~\cite{ghaoui2012safe,wang2013lasso,wang2013scaling,ogawa2013safe,liu2014safe,wang2014safe,xiang2017screening,fercoq2015mind,ndiaye2015gap,Zimmert2015,shibagaki2015regularization,nakagawa2016safe,shibagaki2016simultaneous}.
Safe screening aims to detect and remove model parameters that are surely zero at the optimum when the model parameter is expected to be sparse (containing many zeros) for fast computation.
Similar to the proposed method, some methods that provide a possible region of model parameters have been proposed based on the safe screening method \cite{okumura2015quick,gabel2015monitoring}.

\section{Problem setups}

\subsection{Definitions} \label{sec:definitions}

Let $\mathbb{R}$ and $\mathbb{N}$ be the set of all real numbers and positive integers, respectively.
Moreover, for $k\in\mathbb{N}$, we define $[k] := \{1, 2, \dots, k\}$.
For the two sets $A$ and $B$, $A\equiv B$ (resp. $A\not\equiv B$) denotes that $A$ and $B$ are identical (resp. not identical) set.

In this paper, a matrix is denoted by a capital letter. Given a matrix, its submatrix is denoted by accompanying indices (here we use ``:'' for ``all elements of the dimension''), while its element is denoted by corresponding {\em small} letter accompanied by indices. For example, for a matrix $X$, $X_{(\{2,3\}):}$ denotes the submatrix of $X$ consisting of the second and the third row of all columns. Additionally, $x_{24}$ denotes the element of $X$ in the second row of the fourth column.

In this study we often use the following minimization:
\begin{align*}
\MinLin{\bm{a}}{\bm{b}}(\bm{c}) := \min_{\bm{v}\in\mathbb{R}^k,~a_j\leq v_j\leq b_j} \bm{c}^\top\bm{v},
\end{align*}
where $\bm{a}, \bm{b}, \bm{c}\in\mathbb{R}^k$. This minimization can be easily solved as
\begin{align*}
\MinLin{\bm{a}}{\bm{b}}(\bm{c}) = \sum_{j:~c_j < 0} c_j b_j + \sum_{j:~c_j > 0} c_j a_j.
\end{align*}
Similarly, the corresponding maximization is given by
\begin{align*}
\MaxLin{\bm{a}}{\bm{b}}(\bm{c}) & := \max_{\bm{v}\in\mathbb{R}^k,~a_j\leq v_j\leq b_j} \bm{c}^\top\bm{v}
	= \sum_{j:~c_j < 0} c_j a_j + \sum_{j:~c_j > 0} c_j b_j.
\end{align*}

\subsection{Learning algorithm: Regularized empirical risk minimization} \label{sec:learning}

In this study, we consider the {\em regularized empirical risk minimization} (regularized ERM) for linear predictions.

We consider the following linear prediction:
For a $d$-dimensional input $\bm{x}\in\mathbb{R}^d$,
its scalar outcome $y\in\mathbb{R}$
and a {\em loss function} $\ell_y: \mathbb{R}\to\mathbb{R}$,
we assume that there exists a parameter vector $\bm{w}^*\in\mathbb{R}^d$
such that $\ell_y(\bm{x}^\top\bm{w}^*)$ is small for any pair of $(\bm{x}, y)$.

We then estimate $\bm{w}^*$ from an $n$-instance training dataset of inputs $X\in\mathbb{R}^{n\times d}$ and outcomes $\bm{y}\in\mathbb{R}^n$.
The regularized ERM estimates $\bm{w}^*$ as the optimal parameter for the following problem:
\begin{align}
& \bm{w}^* := \argmin_{\bm{w}\in\mathbb{R}^d} \Prim_X(\bm{w}),
	\quad\text{where}\quad
	\Prim_X(\bm{w}) := \frac{1}{n}\sum_{i\in[n]} \ell_{y_i}(X_{i:}\bm{w}) + \sum_{j\in[d]} \rho_j(w_j).
	\label{eq:primal}
\end{align}
Here, $\rho_j: \mathbb{R}\to\mathbb{R}$ is a {\em regularization function}\footnote{
	This formulation assumes that the regularization function
	must have the form $\sum_{j\in[d]} \rho_j(w_j)$, that is,
	must be defined as the sum of the values separately computed for $d$ features.
	The following discussions do not consider regularization functions that cannot be
	defined as the form above, e.g., fused lasso regularization
	$\kappa\sum_{j\in[d-1]} |w_j - w_{j+1}|$ \cite{tibshirani2005sparsity}.
} which imposes a ``good'' property on the model parameters such as stability and sparsity.
Common examples of loss and regularization functions are presented in Appendix \ref{app:losses-regularizations}.
In this study, we assume that $\ell_y$ and $\rho_j$ are both convex and continuous.
(As a consequence, $\Prim_X(\bm{w})$ is also convex and continuous with respect to $\bm{w}$.)

In this paper, as we discuss the modifications of training dataset, we introduce additional notations.
Under the same definitions of $\Xold$, $\Yold$, $\Xnew$ and $\Ynew$ as in Section \ref{sec:introduction},
we write $\Pold := \Prim_{\Xold}$ and $\Pnew := \Prim_{\Xnew}$ in short.
Then $\Wold$ and $\Wnew$ in Section \ref{sec:introduction} are defined as:
\begin{align*}
\Wold := \argmin_{\bm{w}\in\mathbb{R}^{\dold}} \Pold(\bm{w}),
\quad
\Wnew := \argmin_{\bm{w}\in\mathbb{R}^{\dnew}} \Pnew(\bm{w}).
\end{align*}

\section{Mathematical backgrounds}

\subsection{Convex optimization} \label{sec:convex}

Unless otherwise specified, we consider minimizations of convex functions
$f: \mathbb{R}^k\to\mathbb{R}\cup\{+\infty\}$.\footnote{
	Another common formulation of convex functions is that the domain is a {\em convex subset} of
	$\mathbb{R}^k$ and the range is a subset of $\mathbb{R}$ ($+\infty$ cannot be included).
	However, for simplicity of treating convex conjugates as explained later,
	we adopted the formulation here.
}
For function $f: \mathbb{R}^k\to\mathbb{R}\cup\{+\infty\}$,
we define the domain $\mathrm{dom}f := \{ \bm{v} \mid \bm{v}\in\mathbb{R}^k, f(\bm{v}) < +\infty \}$
as the ``feasible'' set for $f$, that is, the set when we consider minimizing $f$.

For function $f: \mathbb{R}^k\to\mathbb{R}\cup\{+\infty\}$,
we denote the gradient at $\bm{v}$ by $\nabla f(\bm{v})$.
For convex function $f: \mathbb{R}^k\to\mathbb{R}\cup\{+\infty\}$,
its {\em subgradient} at point $\bm{v}\in\mathbb{R}^k$ is defined as
\begin{align}
& \partial f(\bm{v}) := \{ \bm{g} \mid \bm{g}\in\mathbb{R}^k,~\forall \bm{v}^\prime\in\mathbb{R}^k:
	f(\bm{v}^\prime) - f(\bm{v}) \geq \bm{g}^\top(\bm{v}^\prime - \bm{v}) \}.
\label{eq:subgradient}
\end{align}
Note that, if $f$ is convex and $\nabla f(\bm{v})$ exists,
then $\partial f(\bm{v}) = \{ \nabla f(\bm{v}) \}$.
For example, if $f(t) = |t|$, $\nabla f(0)$ does not exist, whereas $\partial f(0) = \{ s \mid -1 \leq s \leq 1 \}$.

For any univariate convex function $f: \mathbb{R}\to\mathbb{R}\cup\{+\infty\}$,
it is known that $\partial f(t)$ must be a closed interval if it exists\footnote{
	In the multivariate case, $\partial f(\bm{v})$ must be a closed convex set if it exists \cite{rockafellar1970convex}.
}.
Thus, we define
$\minpartial f(t) := \inf \{ s \mid s\in\partial f(t) \}$ and
$\maxpartial f(t) := \sup \{ s \mid s\in\partial f(t) \}$.
Moreover, we define
$\minranpartial{f} := \inf_{t\in\mathbb{R}} \minpartial f(t)$ and
$\maxranpartial{f} := \sup_{t\in\mathbb{R}} \maxpartial f(t)$.

For convex function $f: \mathbb{R}^k\to\mathbb{R}\cup\{+\infty\}$,
its {\em convex conjugate} $f^*(\bm{v}): \mathbb{R}^k\to\mathbb{R}\cup\{+\infty\}$ is defined as follows:
\begin{align*}
f^*(\bm{v}) := \sup_{\bm{v}^\prime\in\mathbb{R}^k} [ \bm{v}^\top\bm{v}^\prime - f(\bm{v}^\prime) ].
\end{align*}
For univariate convex function $f: \mathbb{R}\to\mathbb{R}\cup\{+\infty\}$,
\begin{align}
\mathrm{dom}f^* = [\minranpartial{f}, \maxranpartial{f}]
\label{eq:conjugate-finite}
\end{align}
holds \cite{rockafellar1970convex}.
Furthermore, it is known that, if $f$ is convex and lower semi-continuous, then $f = f^{**}$ holds (Fenchel-Moreau theorem).

Function $f: \mathbb{R}^k\to\mathbb{R}\cup\{+\infty\}$ is called $\lambda$-{\em strongly convex} ($\lambda > 0$) if
\begin{align}
& \forall \bm{u}\in\mathbb{R}^k, \forall \bm{v}\in\mathrm{dom}f, \forall \bm{g}\in \partial f(\bm{v}): \nonumber\\
& f(\bm{u}) - f(\bm{v}) \geq \bm{g}^\top (\bm{u} - \bm{v}) + \frac{\lambda}{2}\|\bm{u} - \bm{v}\|_2^2. \label{eq:strongly-convex}
\end{align}
A function $f: \mathbb{R}^k\to\mathbb{R}$ is called $\nu$-{\em smooth} ($\nu > 0$) if its gradient exists in its domain and is $\nu$-Lipschitz continuous with respect to the L2-norm, that is,
\begin{align*}
\forall \bm{u}, \bm{v}\in\mathrm{dom}f:
\quad
\|\nabla f(\bm{u}) - \nabla f(\bm{v})\|_2 \leq \nu\|\bm{u} - \bm{v}\|_2.
\end{align*}
The smoothness turns into the strong convexity after taking the convex conjugate, and vice versa.
Specifically, for any convex function $f$, $f^*$ is $(1/\nu)$-strongly convex if $f$ is $\nu$-smooth;
whereas $f^*$ is $(1/\lambda)$-smooth if $f$ is $\lambda$-strongly convex,
under certain regularity conditions (see Section X.4.2 of \cite{hiriart1993convex}).

\subsection{Dual problem of regularized ERM} \label{sec:dual-problem}

To identify the possible region of the model parameters after data modification,
we use the {\em dual problem} of \eqref{eq:primal}.
This is derived from {\em Fenchel's duality theorem} in the literature on convex optimization \cite{rockafellar1970convex}.

For notational simplicity, we rewrite $\Prim_X$ defined in \eqref{eq:primal}
using $\PrimL{n}(\bm{u}): \mathbb{R}^n\to\mathbb{R}$ and $\PrimR(\bm{v}): \mathbb{R}^d\to\mathbb{R}$ as
\begin{align*}
& \Prim_X(\bm{w}) = \PrimL{n}(X\bm{w}) + \PrimR(\bm{w}),\quad\text{where} \\
& \PrimL{n}(\bm{u}) := \frac{1}{n}\sum_{i\in[n]} \ell_{y_i}(u_i),
	\quad
	\PrimR(\bm{v}) := \sum_{j\in[d]} \rho_j(v_j).
\end{align*}

First, note that the convex conjugates of $\PrimL{n}$ and $\PrimR$ are computed as follows
(the details are presented in Appendix \ref{app:primal-part-conjugate}):
\begin{align*}
\PrimL{n}^*(\bm{u}) = \frac{1}{n}\sum_{i\in[n]} \ell^*_{y_i}(n u_i),
\quad
\PrimR^*(\bm{v}) = \sum_{j\in[d]} \rho_j^*(v_j).
\end{align*}
Then, the {\em dual problem} of \eqref{eq:primal} is defined as follows:
\begin{align}
\bm{\alpha}^* :=& \argmax_{\bm{\alpha}\in\mathbb{R}^n} \Dual_X(\bm{\alpha}),
	\quad\text{where} \nonumber \\
\Dual_X(\bm{\alpha}) &:= -\PrimL{n}^*\Bigl(-\frac{1}{n}\bm{\alpha}\Bigr) - \PrimR^*\Bigl(\frac{1}{n}X^\top\bm{\alpha}\Bigr) \nonumber \\
	& = - \frac{1}{n} \sum_{i\in[n]} \ell^*_{y_i}(- \alpha_i)
		- \sum_{j\in[d]} \rho_j^*\Bigl( \frac{1}{n}X_{:j}^\top\bm{\alpha} \Bigr).
	\label{eq:dual}
\end{align}
Fenchel's duality theorem provides the following relationship:
\begin{align}
& \Prim_X(\bm{w}^*) = \Dual_X(\bm{\alpha}^*), \label{eq:strong-duality}\\
& \forall j\in[d]:\quad w^*_j \in \partial\rho_j^*\Bigl(\frac{1}{n}X_{:j}^\top\bm{\alpha}^*\Bigr), \label{eq:KKT-dual2primal}\\
& \forall i\in[n]:\quad -\alpha^*_i \in \partial\ell_{y_i}(X_{i:}\bm{w}^*). \label{eq:KKT-primal2dual}
\end{align}
The relationship \eqref{eq:strong-duality} is known as {\em strong duality}\footnote{
	The strong duality does not hold in general, however, it does hold for this setup
	because we assume that $\ell_y$ and $\rho_j$ are finite, convex, and continuous.
	The detailed condition under which a strong duality holds is discussed in \cite{rockafellar1970convex}.
},
whereas \eqref{eq:KKT-dual2primal} and \eqref{eq:KKT-primal2dual} as the {\em Karush-Kuhn-Tucker condition}.

For any $\bm{w}\in\mathbb{R}^d$ and $\bm{\alpha}\in\mathbb{R}^n$, we refer to $\Prim_X(\bm{w}) - \Dual_X(\bm{\alpha})$ as the {\em duality gap} of $\Prim_X$ and $\Dual_X$ for $\bm{w}$ and $\bm{\alpha}$.
According to \eqref{eq:strong-duality}, the duality gap must be nonnegative because $\Prim_X(\bm{w}^*)$ is the minimum of $\Prim_X$ whereas $\Dual_X(\bm{\alpha}^*)$ is the maximum of $\Dual_X$.
This means that, as $\bm{w}$ and $\bm{\alpha}$ approach $\bm{w}^*$ and $\bm{\alpha}^*$, respectively, the duality gap approaches zero. 

We define $\Dold$, $\Dnew$, $\Alphaold$ and $\Alphanew$ as like in Section \ref{sec:learning}, that is,
\begin{align*}
& \Dold := \Dual_{\Xold}, \qquad \Dnew := \Dual_{\Xnew}, \\
& \Alphaold := \argmax_{\bm{\alpha}\in\mathbb{R}^{\nold}} \Dold(\bm{\alpha}),
	\qquad
	\Alphanew := \argmax_{\bm{\alpha}\in\mathbb{R}^{\nnew}} \Dnew(\bm{\alpha}).
\end{align*}

\section{Construction of GLRU for regularized ERM} \label{ch:GLRU-method}

To compute the region ${\cal W}$ such that $\Wnew\in{\cal W}$ (Section \ref{sec:introduction})
for regularized ERM,
first we present a method for identifying the possible regions of the (unknown) model parameters
for modified dataset
$\Wnew\in\mathbb{R}^{\dnew}$ and $\Alphanew\in\mathbb{R}^{\nnew}$, using other known parameters
$\hat{\bm{w}}\in\mathbb{R}^{\dnew}$ and $\hat{\bm{\alpha}}\in\mathbb{R}^{\nnew}$
explained in Section \ref{ch:bound-common}.
Then we present a method for calculating the region with computational cost
proportional to the size of the data modification,
which is achieved by plugging in $\hat{\bm{w}}\gets\Wold$ and $\hat{\bm{\alpha}}\gets\Alphaold$,
respectively, in Section \ref{ch:bound-fast}.

\subsection{Possible regions of $\Wnew$ and $\Alphanew$} \label{ch:bound-common}

\subsubsection{Main result}

First we describe how the possible regions of $\Wnew$ and $\Alphanew$ are obtained.
The following theorem is a refinement of \cite{Hanada2018AAAI-Modification}, however, we give the proof in Appendix \ref{app:proof-th:bounds} for completeness.
Similar discussions are found in
\cite{ndiaye2015gap}, 
\cite{Zimmert2015} 
and
\cite{shibagaki2016simultaneous}.

\begin{theorem} \label{th:bounds}
We define $\Pnew$ and $\Wnew$ as described in Section \ref{sec:learning},
and $\Dnew$ and $\Alphanew$ as described in Section \ref{sec:dual-problem}.
For any
$\hat{\bm{w}}\in\mathbb{R}^{\dnew}$ and
$\hat{\bm{\alpha}}\in\mathrm{dom}\Dnew\subseteq\mathbb{R}^{\nnew}$,
define
\begin{align*}
\Gnew(\hat{\bm{w}}, \hat{\bm{\alpha}}) := \Pnew(\hat{\bm{w}}) - \Dnew(\hat{\bm{\alpha}}).
\end{align*}
Then, we can identify the possible regions for the model parameters for a modified dataset $\Wnew$ or $\Alphanew$ as follows:
\begin{description}
\item [(i-P)]
	If $\rho_j(t)$ is $\lambda$-strongly convex ($\lambda > 0$) for all $j\in[d]$, then:
	\begin{align}
	& \|\Wnew - \hat{\bm{w}}\|_2 \leq r_P := \sqrt{\frac{2}{\lambda}\Gnew(\hat{\bm{w}}, \hat{\bm{\alpha}})}.
\label{eq:primal-bound}
	\end{align}
\item [(i-D)]
	Under the same assumption as in (i-P), we have
	\begin{align}
	& \forall i\in[\nnew]:\nonumber\\
	& \alphanew{i} \in [
		-\maxpartial \ell_{y_i}(\Xnew_{i:}\hat{\bm{w}} + r_P\|\Xnew_{i:}\|_2),
		-\minpartial \ell_{y_i}(\Xnew_{i:}\hat{\bm{w}} - r_P\|\Xnew_{i:}\|_2)]. \label{eq:primal-dual-bound}
	\end{align}
\item [(ii-D)]
	If $\ell_{y_i}(t)$ is $\mu$-smooth ($\mu > 0$), then:
	\begin{align}
	& \|\Alphanew - \hat{\bm{\alpha}}\|_2 \leq r_D := \sqrt{2\nnew\mu\Gnew(\hat{\bm{w}}, \hat{\bm{\alpha}})}.
		\label{eq:dual-bound}
	\end{align}
\item [(ii-P)]
	Under the same assumption as in (ii-D), we have
	\begin{align}
	& \forall j\in[\dnew]:
	\qquad
	\wnew{j} \in \Bigl[
		\minpartial\rho_j^*\Bigl(\frac{\underline{F}_j(\hat{\bm{\alpha}})}{\nnew} \Bigr), 
		\maxpartial\rho_j^*\Bigl(\frac{\overline{F}_j(\hat{\bm{\alpha}})}{\nnew} \Bigr) \Bigr],
			\label{eq:dual-primal-bound}
	\end{align}
	where $\underline{F}_j$ and $\overline{F}_j$ are defined as follows:
	\begin{align}
	\underline{F}_j(\hat{\bm{\alpha}}) := \XnewT_{:j}\hat{\bm{\alpha}} - r_D \|\XnewT_{:j}\|_2,
	\quad
	\overline{F}_j(\hat{\bm{\alpha}}) := \XnewT_{:j}\hat{\bm{\alpha}} + r_D \|\XnewT_{:j}\|_2.
	\label{eq:dual2primal-input-naive}
	\end{align}
\end{description}
\end{theorem}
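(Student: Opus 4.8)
The plan is to prove the four claims as two linked pairs: the ``ball'' bounds (i-P) and (ii-D) follow directly from strong convexity together with strong duality \eqref{eq:strong-duality}, and the ``coordinate'' bounds (i-D) and (ii-P) are then deduced from the corresponding ball bound via the KKT relations \eqref{eq:KKT-primal2dual} and \eqref{eq:KKT-dual2primal}.

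First, for (i-P) I would observe that $\Pnew(\bm w) = \PrimL{\nnew}(\Xnew\bm w) + \PrimR(\bm w)$ is $\lambda$-strongly convex: the first term is convex as a convex function composed with an affine map, while $\PrimR(\bm w) = \sum_j \rho_j(w_j)$ inherits $\lambda$-strong convexity from the $\rho_j$ because the objective is separable across coordinates (summing the per-coordinate strong-convexity inequalities reproduces \eqref{eq:strongly-convex} for $\PrimR$). Since $\Wnew$ minimizes $\Pnew$ we have $\bm 0 \in \partial\Pnew(\Wnew)$, so taking $\bm g = \bm 0$ in \eqref{eq:strongly-convex} gives $\Pnew(\hat{\bm w}) - \Pnew(\Wnew) \geq \frac{\lambda}{2}\|\hat{\bm w} - \Wnew\|_2^2$. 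Combining this with strong duality $\Pnew(\Wnew) = \Dnew(\Alphanew) \geq \Dnew(\hat{\bm\alpha})$ yields $\Gnew(\hat{\bm w},\hat{\bm\alpha}) \geq \frac{\lambda}{2}\|\hat{\bm w}-\Wnew\|_2^2$, which rearranges to \eqref{eq:primal-bound}.

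The bound (ii-D) is the dual mirror image. Using that each $\ell_{y_i}$ is $\mu$-smooth, its conjugate $\ell_{y_i}^*$ is $(1/\mu)$-strongly convex, invoking the smoothness-to-strong-convexity duality cited in Section \ref{sec:convex}. Tracking the scaling carefully, $\bm\alpha \mapsto -\Dnew(\bm\alpha) = \frac{1}{\nnew}\sum_i \ell_{y_i}^*(-\alpha_i) + \PrimR^*(\frac{1}{\nnew}\XnewT\bm\alpha)$ is $(1/(\nnew\mu))$-strongly convex, since each $\alpha_i \mapsto \frac{1}{\nnew}\ell_{y_i}^*(-\alpha_i)$ is $(1/(\nnew\mu))$-strongly convex (reflection and the factor $1/\nnew$ scale the modulus) and the remaining term is convex. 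Applying the same minimizer argument to $-\Dnew$ at $\Alphanew$ and then strong duality $\Dnew(\Alphanew) = \Pnew(\Wnew) \leq \Pnew(\hat{\bm w})$ gives $\Gnew(\hat{\bm w},\hat{\bm\alpha}) \geq \frac{1}{2\nnew\mu}\|\hat{\bm\alpha}-\Alphanew\|_2^2$, i.e. \eqref{eq:dual-bound}. For (i-D) and (ii-P) I would then combine the respective ball bound with a Cauchy--Schwarz estimate and the relevant KKT relation. For (i-D): from (i-P) and Cauchy--Schwarz, $\Xnew_{i:}\Wnew$ lies in $[\Xnew_{i:}\hat{\bm w} - r_P\|\Xnew_{i:}\|_2,\ \Xnew_{i:}\hat{\bm w} + r_P\|\Xnew_{i:}\|_2]$; then \eqref{eq:KKT-primal2dual} gives $-\alphanew{i} \in \partial\ell_{y_i}(\Xnew_{i:}\Wnew)$, and monotonicity of the subdifferential of a univariate convex function transfers the argument interval to the stated interval for $\alphanew{i}$. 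The derivation of (ii-P) is structurally identical, using (ii-D), Cauchy--Schwarz on $\XnewT_{:j}\Alphanew$, the relation \eqref{eq:KKT-dual2primal}, and monotonicity of $\partial\rho_j^*$.

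I expect two genuinely delicate points. The first is tracking the factor of $\nnew$ through the conjugate and the argument scaling $-\frac{1}{\nnew}\bm\alpha$ so that the strong-convexity modulus of $-\Dnew$ comes out as exactly $1/(\nnew\mu)$, giving $r_D = \sqrt{2\nnew\mu\,\Gnew(\hat{\bm w},\hat{\bm\alpha})}$ rather than an off-by-$\nnew$ constant. The second is the precise use of subdifferential monotonicity in (i-D) and (ii-P): the fact that $t_1 < t_2$ implies $\maxpartial f(t_1) \le \minpartial f(t_2)$ is what lets the lower endpoint of the argument interval control one side and the upper endpoint control the other. One must be careful that in (i-D) the negative sign in $-\alphanew{i}\in\partial\ell_{y_i}(\cdot)$ reverses the endpoint correspondence, so the lower argument endpoint $\Xnew_{i:}\hat{\bm w} - r_P\|\Xnew_{i:}\|_2$ produces $-\minpartial$ (the upper bound of $\alphanew{i}$) and the upper endpoint produces $-\maxpartial$, whereas in (ii-P) no such flip occurs. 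Everything else is routine bookkeeping.
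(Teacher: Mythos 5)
Your proposal is correct and follows essentially the same route as the paper's proof: strong convexity plus the zero-subgradient optimality condition gives the distance-to-optimum bound, strong duality and optimality of $\Alphanew$ (resp.\ $\Wnew$) replace the unknown optimal value by the duality gap $\Gnew(\hat{\bm{w}},\hat{\bm{\alpha}})$, and the coordinate bounds (i-D), (ii-P) follow from Cauchy--Schwarz, monotonicity of the univariate subdifferential, and the KKT conditions \eqref{eq:KKT-primal2dual}, \eqref{eq:KKT-dual2primal}, with the sign flip in (i-D) handled exactly as in the paper. The only cosmetic difference is that the paper isolates the two auxiliary facts as Lemmas \ref{lm:distance-to-optimum} and \ref{lm:sphere-to-interval} and merely asserts the $(1/(\nnew\mu))$-strong convexity of $-\Dnew$, whereas you verify that scaling explicitly --- a detail worth keeping.
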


\begin{remark} \label{rm:finite-subderiv-conj}
In (ii-P) of Theorem \ref{th:bounds}, the upper (resp. the lower) bound \eqref{eq:dual-primal-bound} may be a positive infinity (resp. a negative infinity).
In fact, for $j\in[d]$ and $t\in\mathbb{R}$ such that
\begin{align}
\rho_j^*(t) < +\infty ~~\text{and}~~ (-\infty\in\partial\rho_j^*(t)~~\text{or}~~+\infty\in\partial\rho_j^*(t)),
\label{eq:finite-subderiv-conj}
\end{align}
the bounds become infinite if $\underline{F}_j(\hat{\bm{\alpha}})$ or $\overline{F}_j(\hat{\bm{\alpha}})$ includes such $t$.
\begin{itemize}
\item In case $\rho_j(t)$ is $\lambda$-strongly convex ($\lambda > 0$) for all $j\in[d]$, then \eqref{eq:finite-subderiv-conj} cannot hold since $\rho_j^*(t)$ is a $(1/\lambda)$-smooth (Section \ref{sec:convex}) and therefore $\partial\rho_j^*(t)$ must be finite.
\item For the L1-regularization $\rho_j(t) = \kappa|t|$, the expression \eqref{eq:finite-subderiv-conj} holds for $t = \pm\kappa$: we have $\rho_j^*(\pm\kappa) = 0 < +\infty$ but $\partial\rho_j^*(\pm\kappa) = [0, +\infty]$.
\end{itemize}
\end{remark}

\begin{remark} \label{rm:dual2primal-input-feasible}
In (ii-P) of Theorem \ref{th:bounds}, $\underline{F}_j(\hat{\bm{\alpha}})$ and $\overline{F}_j(\hat{\bm{\alpha}})$
may be tighter by imposing the constraint $\hat{\bm{\alpha}}\in\mathrm{dom}\Dual_{\Xnew}$.
See Appendix \ref{app:dual2primal-input-feasible} for further details.
\end{remark}

\subsubsection{Bounds of prediction results from those of model parameters} \label{ch:bound-predict}

Given a test instance $\Xtest\in\mathbb{R}^{\dnew}$, we present the lower and upper bounds of $\XtestT\Wnew$ derived from the possible regions of $\Wnew$ or $\Alphanew$ in Theorem \ref{th:bounds}.
\begin{corollary} \label{co:bounds-predict}
For any $\Xtest\in\mathbb{R}^{\dnew}$,
\begin{enumerate}
\item {\em Primal Strong Convexity Bound (Primal-SCB)}: If the assumption (i-P) in Theorem \ref{th:bounds} holds,
	\begin{align}
	& \XtestT\hat{\bm{w}} - r_P \|\Xtest\|_2
	\leq
	\XtestT\Wnew
	\leq
	\XtestT\hat{\bm{w}} + r_P \|\Xtest\|_2 \label{eq:prediction-bound-primal}
	\end{align}
\item {\em Dual Strong Convexity Bound (Dual-SCB)}: If the assumption (ii-P) in Theorem \ref{th:bounds} holds,
\end{enumerate}
\begin{align}
	& \MinLin{\Bigl\{\frac{\minpartial\rho_j^*(\underline{F}_j(\hat{\bm{\alpha}}))}{\nnew}\Bigr\}_j}{\Bigl\{\frac{\maxpartial\rho_j^*(\overline{F}_j(\hat{\bm{\alpha}}))}{\nnew}\Bigr\}_j}(\Xtest) \nonumber\\
	\leq & \XtestT\Wnew \nonumber\\
	\leq & \MaxLin{\Bigl\{\frac{\minpartial\rho_j^*(\underline{F}_j(\hat{\bm{\alpha}}))}{\nnew}\Bigr\}_j}{\Bigl\{\frac{\maxpartial\rho_j^*(\overline{F}_j(\hat{\bm{\alpha}}))}{\nnew}\Bigr\}_j}(\Xtest). \label{eq:prediction-bound-dual}
\end{align}
\end{corollary}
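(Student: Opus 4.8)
The plan is to handle the two items of the corollary separately, each reducing the problem to an elementary optimization of the linear functional $\bm{v}\mapsto\XtestT\bm{v}$ over the region that Theorem~\ref{th:bounds} guarantees contains $\Wnew$.

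For the Primal-SCB bound, I would begin from the identity $\XtestT\Wnew = \XtestT\hat{\bm{w}} + \XtestT(\Wnew - \hat{\bm{w}})$ and control the second term by the Cauchy--Schwarz inequality, $|\XtestT(\Wnew - \hat{\bm{w}})| \leq \|\Xtest\|_2\,\|\Wnew - \hat{\bm{w}}\|_2$. Since assumption (i-P) supplies $\|\Wnew - \hat{\bm{w}}\|_2 \leq r_P$ via \eqref{eq:primal-bound}, this gives $|\XtestT(\Wnew - \hat{\bm{w}})| \leq r_P\|\Xtest\|_2$, and rearranging yields the two-sided inequality \eqref{eq:prediction-bound-primal} directly. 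No further work is required for this item.

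For the Dual-SCB bound, I would note that assumption (ii-P) furnishes, for every coordinate $j\in[\dnew]$, an interval containing $\wnew{j}$. Writing $\underline{b}_j$ and $\overline{b}_j$ for the lower and upper endpoints of the interval in \eqref{eq:dual-primal-bound}, we have $\underline{b}_j \leq \wnew{j} \leq \overline{b}_j$ for all $j$, so $\Wnew$ lies in the axis-aligned box $\prod_j [\underline{b}_j, \overline{b}_j]$. Consequently $\XtestT\Wnew$ is the value of a linear functional at a point of this box, and by the very definitions of $\MinLin{\cdot}{\cdot}$ and $\MaxLin{\cdot}{\cdot}$ in Section~\ref{sec:definitions} its extreme values over the box are exactly the two sides of \eqref{eq:prediction-bound-dual}.

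The only delicate point --- and the one I expect to be the main, though mild, obstacle --- is the possibility, flagged in Remark~\ref{rm:finite-subderiv-conj}, that some endpoint $\overline{b}_j$ equals $+\infty$ or $\underline{b}_j$ equals $-\infty$, so that the box is unbounded. In that case I would verify that the closed-form expressions for $\MinLin{\cdot}{\cdot}$ and $\MaxLin{\cdot}{\cdot}$ remain correct under the usual extended-arithmetic conventions: a coordinate with $\xtest{j} = 0$ is excluded from both sums and so contributes nothing regardless of its infinite endpoint, whereas a coordinate with $\xtest{j}\neq 0$ drives the relevant bound to $\pm\infty$, which is consistent since the per-coordinate information genuinely fails to bound $\XtestT\Wnew$ from that side.
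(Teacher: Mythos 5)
Your proof is correct and matches the paper's (implicit) argument: the corollary is stated as an immediate consequence of Theorem \ref{th:bounds}, with the primal bound following from \eqref{eq:primal-bound} by exactly the Cauchy--Schwarz step you use (this is Lemma \ref{lm:sphere-to-interval} in the appendix, applied with $\bm{\xi} = \Xtest$ and $R = r_P$), and the dual bound following from the coordinatewise box \eqref{eq:dual-primal-bound} together with the closed-form expressions for $\MinLin{\cdot}{\cdot}$ and $\MaxLin{\cdot}{\cdot}$ from Section \ref{sec:definitions}. Your extra care about infinite endpoints (Remark \ref{rm:finite-subderiv-conj}) is a sound refinement consistent with the paper's treatment, but it does not change the argument.
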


\subsection{Fast computation of GLRU for data modifications} \label{ch:bound-fast}

Our aim is not only identifying the possible regions of $\Wnew$ and $\Alphanew$ from $\Wold$ and $\Alphaold$
but also computing them quickly, specifically, proportional to the size of the update of the dataset.
To do this, we show that the computational complexity of computing $\Gnew(\hat{\bm{w}}, \hat{\bm{\alpha}})$
is proportinal to the modified size of the dataset.

To reduce its cost, $\hat{\bm{w}}$ and $\hat{\bm{\alpha}}$ in Theorem \ref{th:bounds} should be derived from $\Wold$ and $\Alphaold$, respectively.

Additionally, when we compute $\Wold$ and $\Alphaold$,
we retain the following values computed together with $\Wold$ and $\Alphaold$:
\begin{subequations}
\label{eq:precomputed}
\begin{align}
& \Xold\Wold,\quad\XoldT\Alphaold, \label{sec:precomputed-ip}\\
& \PrimL{\nold}(X\Wold),\quad\PrimR(\Wold), \label{sec:precomputed-primal}\\
& \PrimL{\nold}^*\Bigl(-\frac{1}{\nold}\Alphaold\Bigr),\quad\PrimR^*\Bigl(\frac{1}{\nold}\XoldT\Alphaold\Bigr). \label{sec:precomputed-dual}
\end{align}
\end{subequations}

\begin{theorem} \label{th:bound-computations}
Under the same assumption as in Theorem \ref{th:bounds},
suppose that $\Wold$, $\Alphaold$, and the values \eqref{eq:precomputed} are already calculated. Then,
\begin{itemize}
\item If $(\nold - \nnew)$ instances are removed, then the bound \eqref{eq:bound-instance-removal} can be computed in $O((\nold - \nnew)d)$ time.
\item If $(\nnew - \nold)$ instances are added, then the bound \eqref{eq:bound-instance-addition} can be computed in $O((\nnew - \nold)d)$ time.
\item If $(\dold - \dnew)$ features are removed, then the bound \eqref{eq:bound-feature-removal} can be computed in $O((\dold - \dnew)n)$ time.
\item If $(\dnew - \dold)$ features are added, then the bound \eqref{eq:bound-feature-addition} can be computed in $O((\dnew - \dold)n)$ time.
\end{itemize}
\end{theorem}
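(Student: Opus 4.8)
The plan is to instantiate the bounds of Theorem~\ref{th:bounds} and Corollary~\ref{co:bounds-predict} with the concrete choice $\hat{\bm{w}}\gets\Wold$ and $\hat{\bm{\alpha}}\gets\Alphaold$, each restricted to the surviving coordinates or zero-padded on the new ones, and then to account for the cost of every quantity that enters the resulting bound. Both radii $r_P$ and $r_D$ are $O(1)$ functions of the single scalar $\Gnew(\hat{\bm{w}},\hat{\bm{\alpha}})=\Pnew(\hat{\bm{w}})-\Dnew(\hat{\bm{\alpha}})$, and the ``modification-facing'' bound costs only a constant amount of work per modified row or column: for instance changes it is the prediction bound of Corollary~\ref{co:bounds-predict} read off at the modified instances, which needs $\XtestT\hat{\bm{w}}$ (a stored entry of $\Xold\Wold$) and $\|\Xtest\|_2$ ($O(d)$ per instance); for feature changes it is the coefficient bound (ii-P) at the modified features, which needs $\XnewT_{:j}\hat{\bm{\alpha}}$ and $\|\XnewT_{:j}\|_2$ ($O(n)$ per feature) together with $O(1)$ subgradient evaluations. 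Hence the whole statement reduces to bounding the cost of evaluating $\Gnew$.

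First I would expand $\Gnew(\hat{\bm{w}},\hat{\bm{\alpha}})$ into its four separable pieces $\PrimL{\nnew}(\Xnew\hat{\bm{w}})$, $\PrimR(\hat{\bm{w}})$, $\PrimL{\nnew}^*(-\tfrac{1}{\nnew}\hat{\bm{\alpha}})$ and $\PrimR^*(\tfrac{1}{\nnew}\XnewT\hat{\bm{\alpha}})$. The two loss pieces are sums over instances and the two regularization pieces are sums over features, so each piece is affected only through the added or removed indices and through the entries of the inner-product vectors $\Xnew\hat{\bm{w}}$ and $\XnewT\hat{\bm{\alpha}}$ that actually change. The key observation is that these two vectors are the only objects of $O(nd)$ scale, and that both are available for the old problem in the cache \eqref{eq:precomputed} as $\Xold\Wold$ and $\XoldT\Alphaold$; the plan is to maintain them incrementally instead of recomputing them from scratch.

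Then I would go through the four cases, in each identifying which pieces are reused verbatim and which are updated. For instance removal the feature dimension is fixed and $\hat{\bm{w}}=\Wold$, so the surviving arguments $\Xnew_{i:}\hat{\bm{w}}$ are unchanged and the two loss pieces are the cached sums minus the $O(\nold-\nnew)$ dropped terms (each read off $\Xold\Wold$ or $\Alphaold$); the bottleneck is $\XnewT_{:j}\hat{\bm{\alpha}}=(\XoldT\Alphaold)_j-\sum_{i\ \mathrm{removed}}x_{ij}\alphaold{i}$, whose update over all $j$ costs $O((\nold-\nnew)d)$. Instance addition is symmetric, the new rows contributing both $\Xnew_{i:}\Wold$ to the loss and an outer term to $\XnewT\hat{\bm{\alpha}}$, again $O((\nnew-\nold)d)$. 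For feature changes the instance count is fixed, $\hat{\bm{\alpha}}=\Alphaold$, so the dual-loss piece is cached verbatim and the roles swap: for removal the bottleneck is $\Xnew\hat{\bm{w}}=\Xold\Wold-\sum_{j\ \mathrm{removed}}\Xold_{:j}\wold{j}$, costing $O((\dold-\dnew)n)$, while the dual-regularization piece loses only the $O(\dold-\dnew)$ dropped features; for addition $\hat{\bm{w}}$ is zero-padded so $\Xnew\hat{\bm{w}}=\Xold\Wold$ is unchanged and the cost instead sits on the new columns, whose inner products $\XnewT_{:j}\hat{\bm{\alpha}}$ and norms $\|\XnewT_{:j}\|_2$ are formed in $O((\dnew-\dold)n)$.

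The main obstacle, and the only point needing care, is the normalization: for instance changes the factor $\tfrac{1}{\nnew}$ differs from the cached $\tfrac{1}{\nold}$ and sits \emph{inside} the conjugate argument of $\PrimR^*$, so the cached value $\PrimR^*(\tfrac{1}{\nold}\XoldT\Alphaold)=\sum_j\rho_j^*(\tfrac{1}{\nold}(\XoldT\Alphaold)_j)$ cannot be reused term by term and every $\rho_j^*$ must be re-evaluated at its rescaled argument, an $O(d)$ cost even for sparse modifications (the same wrinkle produces an $O(n)$ loss-conjugate re-evaluation for feature changes). The step that rescues the claimed complexity is simply the remark that this $O(d)$ (resp.\ $O(n)$) re-evaluation is dominated by the inner-product update cost $O((\nold-\nnew)d)$ or $O((\nnew-\nold)d)$ (resp.\ $O((\dold-\dnew)n)$ or $O((\dnew-\dold)n)$) whenever at least one instance (resp.\ feature) is modified; the four per-case budgets of the theorem then follow by adding the dominated terms.
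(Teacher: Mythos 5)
Your proposal is correct and is essentially the paper's own argument (Appendix \ref{app:computation-dualitygap}): expand $\Gnew$ into its four separable pieces, reuse the cached scalars and the cached vectors $\Xold\Wold$ and $\XoldT\Alphaold$ from \eqref{eq:precomputed}, update those two vectors incrementally over only the modified rows or columns, and observe that the residual $O(d)$ re-evaluation of $\PrimR^*$ (for instance changes) or $O(n)$ re-evaluation of $\PrimL{n}$ (for feature changes) is dominated by the $O((\nold-\nnew)d)$, $O((\nnew-\nold)d)$, $O((\dold-\dnew)n)$ or $O((\dnew-\dold)n)$ update cost. Two small corrections. First, for the addition cases the paper does not zero-pad: it extends $\hat{\bm{\alpha}}$ (resp.\ $\hat{\bm{w}}$) to the added coordinates via the KKT conditions \eqref{eq:KKT-primal2dual} (resp.\ \eqref{eq:KKT-dual2primal}); since the theorem asserts computability of the specific formulas \eqref{eq:bound-instance-addition} and \eqref{eq:bound-feature-addition}, you should adopt that choice rather than zero-padding --- it costs only $O(1)$ extra per added index after the inner products $\Xnew_{i:}\Wold$ or $\XnewT_{:j}\Alphaold$ you already form, and it is the reason these formulas give a usefully small gap (zero-padding yields a valid but generally looser bound, and a different quantity from the one the theorem names). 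Second, for feature changes the $O(n)$ re-evaluation is of the \emph{primal} loss term $\PrimL{n}$ at the shifted argument $\Xold\Wold \mp \sum_j \woldh{j}X_{:j}$, not of a loss conjugate; the dual loss piece $\PrimL{n}^*\bigl(-\tfrac{1}{n}\Alphaold\bigr)$ is reused verbatim because $n$ and $\hat{\bm{\alpha}}$ are unchanged, exactly as your case analysis (as opposed to your closing paragraph) states.
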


The results are presented in the following sections, and the proofs are presented in Appendix \ref{app:computation-dualitygap}.

\subsubsection{Bound computations for instance removals/additions} \label{sec:bound-inst-change}

First we consider a case in which some instances are removed.
Without loss of generality, we assume that the last $(\nold-\nnew)$ instances are removed ($d:=\dnew=\dold$, $\nnew < \nold$, and $\Xold_{i:} = \Xnew_{i:}~\forall i\in[\nnew]$).
In this case, we simply set $\hat{\bm{w}}\gets\Wold$ in Theorem \ref{th:bounds},
whereas we construct $\hat{\bm{\alpha}}\in\mathbb{R}^{\nnew}$ as
$\hat{\bm{\alpha}}\gets\Alphaoldh := \{ \alphaold{i} \}_{i\in[\nnew]}$.
Then, using \eqref{eq:precomputed},
the duality gap $\Gnew(\Wold, \Alphaoldh)$ can be computed in $O((\nold-\nnew)d)$ time as follows:
\begin{align}
& \Gnew(\Wold, \Alphaoldh) := \Pnew(\Wold) - \Dnew(\Alphaoldh) \nonumber\\
= & \frac{1}{\nnew} \biggl[ -\sum_{i=\nnew+1}^{\nold} [\ell_{y_i}(\Xold_{i:}\Wold) + \ell^*_{y_i}(-\alphaold{i})] \nonumber\\
	& + \nold\PrimL{\nold}(\Xold\Wold) + \nnew\PrimR(\Wold) \nonumber\\
	& + \nold\PrimL{\nold}^*\Bigl( -\frac{1}{\nold}\Alphaold \Bigr) \nonumber\\
	& + \nnew \PrimR^*\Bigl(\frac{1}{\nnew}\Bigl[\XoldT\Alphaold - \sum_{i=\nnew+1}^{\nold}\alphaold{i}\Xold_{i:}\Bigr]\Bigr)
	\biggr].
	\label{eq:bound-instance-removal}
\end{align}

We then consider the case in which some instances are added ($d:=\dnew=\dold$, $\nnew > \nold$, and $\Xold_{i:} = \Xnew_{i:}~\forall i\in[\nold]$).
In this case, we set $\hat{\bm{w}}\gets\Wold$ in Theorem \ref{th:bounds},
whereas we construct $\hat{\bm{\alpha}}\in\mathbb{R}^{\nnew}$ as
\begin{align*}
& \hat{\bm{\alpha}}\gets\Alphaoldh, \quad\text{where} \\
& \alphaoldh{i} = \alphaold{i}, & (i\in[\nold]) \\
& -\alphaoldh{i} \in \partial\ell_{y_i}(\Xnew_{i:}\Wold). & (i\in[\nnew]\setminus[\nold])
\end{align*}
The latter is based on the KKT condition \eqref{eq:KKT-primal2dual}.
Then, using \eqref{eq:precomputed},
the duality gap $\Gnew(\Wold, \Alphaoldh)$ can be computed in $O((\nnew-\nold)d)$ time as follows:
\begin{align}
& \Gnew(\Wold, \Alphaoldh) := \Pnew(\Wold) - \Dnew(\Alphaoldh) \nonumber\\
= & \frac{1}{\nnew} \biggl[ \sum_{i=\nold+1}^{\nnew} [\ell_{y_i}(\Xnew_{i:}\Wold) + \ell^*_{y_i}(-\alphaoldh{i})] \nonumber\\
	& + \nold\PrimL{\nold}(\Xold\Wold) + \nnew\PrimR(\Wold) \nonumber\\
	& + \nold\PrimL{\nold}^*\Bigl( -\frac{1}{\nold}\Alphaold \Bigr) \nonumber\\
	& + \nnew \PrimR^*\Bigl(\frac{1}{\nnew}\Bigl[\XoldT\Alphaold + \sum_{i=\nold+1}^{\nnew}\alphaoldh{i}\Xnew_{i:}\Bigr]\Bigr)
	\biggr].
	\label{eq:bound-instance-addition}
\end{align}

\subsubsection{Bound computations for feature removals/additions} \label{sec:bound-feat-change}

We can conduct a similar procedure to Section \ref{sec:bound-inst-change} for feature additions and removals.

First, we consider that the last $\dold-\dnew$ features are removed
($n:=\nnew=\nold$, $\dnew < \dold$ and $\Xold_{j:} = \Xnew_{j:}~\forall j\in[\dnew]$).
In this case, we simply set $\hat{\bm{\alpha}}\gets\Alphaold$ in Theorem \ref{th:bounds},
whereas we construct $\hat{\bm{w}}\in\mathbb{R}^{\dnew}$ as
$\hat{\bm{w}} = \Woldh := \{ \wold{j} \}_{j\in[\dnew]}$.
Then, using \eqref{eq:precomputed},
the duality gap $\Gnew(\Wold, \Alphaoldh)$ can be computed in $O((\dold-\dnew)n)$ time as follows:
\begin{align}
& \Gnew(\Woldh, \Alphaold) := \Pnew(\Woldh) - \Dnew(\Alphaold) \nonumber\\
= & - \sum_{j = \dnew+1}^{\dold} \Bigl[ \rho_j(\wold{j}) + \rho_j^*\Bigl( \frac{1}{n}\XoldT_{:j}\Alphaold \Bigr) \Bigr] \nonumber\\
	& + \PrimL{n}\Bigl(\Xold\Wold - \sum_{j = \dnew+1}^{\dold} \wold{j}\Xold_{:j} \Bigr) + \PrimR(\Wold) - \Dold(\Alphaold)
	\label{eq:bound-feature-removal}
\end{align}

Then we consider that the last $\dnew-\dold$ features are added
($n:=\nnew=\nold$, $\dnew > \dold$ and $\Xold_{j:} = \Xnew_{j:}~\forall j\in[\dold]$).
In this case, we set $\hat{\bm{\alpha}}\gets\Alphaold$ in Theorem \ref{th:bounds},
whereas we construct $\hat{\bm{w}}\in\mathbb{R}^{\dnew}$ as
\begin{align*}
& \hat{\bm{w}}\gets\Woldh, \quad\text{where} \\
& \woldh{j} = \wold{j}, & (j\in[\dold]) \\
& \woldh{j} \in \partial\rho_j^*((1/n)\Xnew_{:j}\Alphaold), & (j\in[\dnew]\setminus[\dold])
\end{align*}
The latter is based on the KKT condition \eqref{eq:KKT-dual2primal}.
Then, with \eqref{eq:precomputed},
the duality gap $\Gnew(\Wold, \Alphaoldh)$ can be computed in $O((\dnew-\dold)n)$ time as follows:
\begin{align}
& \Gnew(\Woldh, \Alphaold) := \Pnew(\Woldh) - \Dnew(\Alphaold) \nonumber\\
= & \sum_{j = \dold+1}^{\dnew} \Bigl[ \rho_j(\woldh{j}) + \rho_j^*\Bigl( \frac{1}{n}\XnewT_{:j}\Alphaold \Bigr) \Bigr] \nonumber\\
	& + \PrimL{n}\Bigl(\Xold\Wold + \sum_{j = \dold+1}^{\dnew} \woldh{j}\Xnew_{:j} \Bigr) + \PrimR(\Wold) - \Dold(\Alphaold).
	\label{eq:bound-feature-addition}
\end{align}

\section{Application to fast LOOCV} \label{sec:GLRU-LOOCV}

\begin{algorithm}[tp]
\caption{Naive LOOCV.}
\label{alg:ordinary-loocv}
\begin{algorithmic}[1]
\REQUIRE $X\in\mathbb{R}^{n\times d}, \bm{y}\in\{-1, +1\}^n$.
\STATE $e \gets 0$. \COMMENT{number of classification errors}
\FOR{$i\in[n]$}
	\STATE \label{alg:ordinary-loocv-train} $\bm{w}^{*(-i)} \gets \targmin_{\bm{w}\in\mathbb{R}^d}\Prim_{X^{(-i)}}(\bm{w})$.
		\\ \COMMENT{Here, if we know $\bm{w}^* := \targmin_{\bm{w}\in\mathbb{R}^d}\Prim_{X}(\bm{w})$,}
		\\ \COMMENT{then we can start training from $\bm{w}^*$ for faster}
		\\ \COMMENT{training (known as ``warm-start'').}
	\STATE {\bf if}~$y_i \neq \mathrm{sign}(X_{i:}\bm{w}^{*(-i)})$~{\bf then}~$e\gets e+1$.
\ENDFOR
\STATE {\bf return}~$e$.
\end{algorithmic}
\end{algorithm}

\begin{algorithm}[tp]
\caption{LOOCV-GLRU: Fast LOOCV by GLRU. We primarily present the processes when the precondition (i-P) in Theorem \ref{th:bounds} is satisfied; we present the different processes for the case when the precondition (ii-P) is satisfied with brackets $\langle\quad\rangle$.}
\label{alg:loocv-GLRU}
\begin{algorithmic}[1]
\REQUIRE $X\in\mathbb{R}^{n\times d}, \bm{y}\in\{-1, +1\}^n$.
\STATE $e \gets 0$. \COMMENT{number of classification errors}
\STATE \label{alg:loocv-GLRU:initial-training} $\bm{w}^* \gets \targmin_{\bm{w}\in\mathbb{R}^d}\Prim_{X}(\bm{w})$.
	\\ \COMMENT{Compute the model parameter with all instances}
\FOR{$i\in[n]$}
	\STATE Compute $\Gnew(\Wold, \Alphaoldh)$ by \eqref{eq:bound-instance-removal} with $\nold\gets n$, $\nnew\gets n-1$, and the removed instance being the $i^\mathrm{th}$.
		\\ \COMMENT{Here we assume $\Wold = \bm{w}^*$ (already computed)}
		\\ \COMMENT{and $\Wnew = \bm{w}^{*(-i)}$ (not computed).}
		\\ \COMMENT{$\Alphaoldh = \bm{\alpha}^*$ can be computed from $\bm{w}^*$ by \eqref{eq:KKT-primal2dual}.}
	\STATE Compute $r_P$ by \eqref{eq:primal-bound}~$\langle${}$r_D$ by \eqref{eq:dual-primal-bound}$\rangle$.
	\STATE Compute the lower and the upper bounds of $X_{i:}\bm{w}^{*(-i)}$ by \eqref{eq:prediction-bound-primal} $\langle$by \eqref{eq:prediction-bound-dual}$\rangle$, denoted by $L_i$ and $U_i$, respectively.
	\IF[Classified positive]{$L_i > 0$} \label{alg:loocv-GLRU:check-label-positive}
		\STATE {\bf if}~$y_i = -1$~{\bf then}~$e\gets e+1$.
	\ELSIF[Classified negative]{$U_i < 0$} \label{alg:loocv-GLRU:check-label-negative}
		\STATE {\bf if}~$y_i = +1$~{\bf then}~$e\gets e+1$.
	\ELSE[Not determined]
		\STATE \label{alg:loocv-GLRU-train} $\bm{w}^{*(-i)} \gets \targmin_{\bm{w}\in\mathbb{R}^d}\Prim_{X^{(-i)}}(\bm{w})$.
		\STATE {\bf if}~$y_i \neq \mathrm{sign}(X_{i:}\bm{w}^{*(-i)})$~{\bf then}~$e\gets e+1$.
	\ENDIF \label{alg:loocv-GLRU:check-label-end}
\ENDFOR
\STATE {\bf return}~$e$.
\end{algorithmic}
\end{algorithm}

We demonstrate an algorithm of fast LOOCV
as an application of GRLU for instance removals.
Because we must solve numerous optimization problems for
slightly different datasets of one-instance removals in LOOCV,
we expect that GRLU works effectively.
Here we consider only the binary classification problem, and suppose that we would like to know the total classification errors for $n$ cases of removing one instance.

Given a dataset $(X, \bm{y})$ ($X\in\mathbb{R}^{n\times d}$, $\bm{y}\in\mathbb{R}^n$), let $(X^{(-i)}, \bm{y}^{(-i)})$ be the dataset of $i$th instance being removed, and $\bm{w}^{*(-i)}$ be the model parameters trained from them.
Naive LOOCV is implemented as Algorithm \ref{alg:ordinary-loocv}:
first we compute $\bm{w}^{*(-i)}$, then make a prediction for the removed instance $X_{i:}$.
However, it requires a costly operation of computing $\bm{w}^{*(-i)}$ for $n$ times (line \ref{alg:ordinary-loocv-train}).

Using GRLU, if the assumption of (i-P) (resp. (ii-P)) in Theorem \ref{th:bounds} holds, we can avoid some of the computations of $\bm{w}^{*(-i)}$:
After computing \eqref{eq:bound-instance-removal} in $O(d)$ time,
we can use the possible region of $\bm{w}^{*(-i)}$ with \eqref{eq:primal-bound} (resp. \eqref{eq:dual-primal-bound}) and thus the prediction result $X_{i:}\bm{w}^{*(-i)}$ with Primal-SCB \eqref{eq:prediction-bound-primal} (resp. Dual-SCB \eqref{eq:prediction-bound-dual}).
The entire process is described in Algorithm \ref{alg:loocv-GLRU}.
For each $i\in[n]$, we must compute $\bm{w}^{*(-i)}$ at a high computational cost (line \ref{alg:loocv-GLRU-train}) only if the sign of $X_{i:}\bm{w}^{*(-i)}$ cannot be determined by the Primal-SCB or Dual-SCB.

\begin{remark} \label{rm:stop-optimization-earlier}
Even if we must conduct the training because GLRU cannot determine the sign of $X_{i:}\bm{w}^{*(-i)}$, GLRU can accelerate this training process.
The training computation is usually stopped when the difference between the current model parameter and the optimal solution is lower than a predetermined threshold.
However, when we train of $\bm{w}^{*(-i)}$ (line \ref{alg:loocv-GLRU-train} in Algorithm \ref{alg:loocv-GLRU}), in addition to checking the difference from the optimal solution, the optimization can be stopped when the prediction result $X_{i:}\bm{w}^{*(-i)}$ is determined \cite{okumura2015quick}.

Suppose that $(\tilde{\bm{w}}, \tilde{\bm{\alpha}})$ are model parameters that are not fully optimized to $(\bm{w}^{*(-i)}, \bm{\alpha}^{*(-i)})$.
If the assumption (i-P) in Theorem \ref{th:bounds} holds (we can do the similar if the assumption (ii-P) in Theorem \ref{th:bounds} holds), that is, if $\rho_j(t)$ is $\lambda$-strongly convex ($\lambda > 0$) for all $j\in[d]$,
then we have the following by \eqref{eq:primal-bound} and \eqref{eq:prediction-bound-primal} with $\Wnew\gets\bm{w}^{*(-i)}$ as:
\begin{align}
& X_{i:}^\top\tilde{\bm{w}} - r_P^{(-i)} \|X_{i:}\|_2
	\leq X_{i:}^\top \bm{w}^{*(-i)} \nonumber\\
	& \leq X_{i:}^\top\tilde{\bm{w}} + r_P^{(-i)} \|X_{i:}\|_2, \label{eq:label-lo-instance} \\
& \text{where}\quad r_P^{(-i)} := \sqrt{\frac{2}{\lambda}\Gnew(\tilde{\bm{w}}, \tilde{\bm{\alpha}})}. \nonumber
\end{align}
Therefore, if the sign of $X_{i:}^\top \bm{w}^{*(-i)}$ is determined by \eqref{eq:label-lo-instance}, then we can stop the training.
More specifically, we periodically compute $\Gnew(\tilde{\bm{w}}, \tilde{\bm{\alpha}})$,
while optimizing $(\tilde{\bm{w}}, \tilde{\bm{\alpha}})$, and we stop the training if
\begin{align*}
r_P^{(-i)} \|X_{i:}\|_2 < |X_{i:}^\top\tilde{\bm{w}}|,
\end{align*}
or equivalently,
\begin{align*}
\Gnew(\tilde{\bm{w}}, \tilde{\bm{\alpha}}) < \frac{\lambda(X_{i:}^\top\tilde{\bm{w}})^2}{2 \|X_{i:}\|_2^2}.
\end{align*}
\end{remark}

\section{GLRU for fast stepwise feature elimination for binary classification} \label{sec:GLRU-stepwise}

\begin{algorithm}[tp]
\caption{Naive stepwise feature elimination.}
\label{alg:ordinary-stepwise}
\begin{algorithmic}[1]
\REQUIRE $X\in\mathbb{R}^{n\times d}, \bm{y}\in\{-1, +1\}^n$.
	\COMMENT{training set}
\REQUIRE $\check{X}\in\mathbb{R}^{m\times d}, \check{\bm{y}}\in\{-1, +1\}^m$.
	\COMMENT{validation set}
\STATE $S \gets [d]$. \COMMENT{set of currently selected features}
\STATE $\bm{w}^{*(S)} \gets \targmin_{\bm{w}\in\mathbb{R}^d}\Prim_X(\bm{w})$.
\STATE $E[\mathbf{null}] \gets \mathrm{err}(\bm{w}^*, S)$.
	\\ \COMMENT{$E[j]$: the validation error when the $j^\mathrm{th}$ feature}
	\\ \COMMENT{is removed.}
	\\ \COMMENT{$\mathrm{err}(\bm{w}, S) := |\{ i \mid i\in[m],~\check{y}_i\neq \mathrm{sign}(\check{X}_{i,S}\bm{w}) \}|$.}
\WHILE{$S\neq\emptyset$} \label{alg:ordinary-stepwise-elimination-step}
	\FOR{$j\in S$} \label{alg:ordinary-stepwise-each-feature}
		\STATE \label{alg:ordinary-stepwise-train} $\bm{w}^{*(S\setminus\{j\})} \gets \targmin_{\bm{w}\in\mathbb{R}^{|S|-1}}\Prim_{X_{:(S\setminus\{j\})}}(\bm{w})$.
			\\ \COMMENT{We can start training from $\bm{w}^{*(S)}_{S\setminus\{j\}}$ for faster}
			\\ \COMMENT{training (known as ``warm-start'').}
		\STATE $E[j] \gets \mathrm{err}(\bm{w}^{*(S\setminus\{j\})}, S\setminus\{j\})$.
	\ENDFOR
		\\ \COMMENT{End if no removal is the best}
	\STATE $j_\mathrm{best} \gets \targmin_{j\in\{\mathbf{null}\}\cup S}~E[j]$.
	\STATE {\bf if}~$j_\mathrm{best} = \mathbf{null}$~{\bf then}~{\bf exit while}.
		\\ ~
		\\ \COMMENT{Which feature to remove}
	\STATE $S \gets S\setminus\{j_\mathrm{best}\}$;~$E[\mathbf{null}] \gets E[j_\mathrm{best}]$.
\ENDWHILE
\STATE {\bf return}~$S$.
\end{algorithmic}
\end{algorithm}

\begin{algorithm}[tp]
\caption{Stepwise-GLRU: Fast stepwise feature elimination with GLRU.}
\label{alg:stepwise-GLRU}
\begin{algorithmic}[1]
\REQUIRE $X\in\mathbb{R}^{n\times d}, \bm{y}\in\{-1, +1\}^n$.
	\COMMENT{training set}
\REQUIRE $\check{X}\in\mathbb{R}^{m\times d}, \check{\bm{y}}\in\{-1, +1\}^m$.
	\COMMENT{validation set}
\STATE $S \gets [d]$. \COMMENT{set of currently selected features}
\STATE \label{alg:stepwise-GLRU:initial-training} $\bm{w}^{*(S)} \gets \targmin_{\bm{w}\in\mathbb{R}^d}\Prim_X(\bm{w})$.
\STATE $E[\mathbf{null}] \gets \mathrm{err}(\bm{w}^*, S)$. \COMMENT{Same as Algorithm \ref{alg:ordinary-stepwise}}
\WHILE{$S\neq\emptyset$} \label{alg:stepwise-GLRU-elimination-step}
	\FOR{$j\in S$} \label{alg:stepwise-GLRU-each-feature}
		\STATE Compute $C[j]$, $I[j]$ and $Z[j]$ in Algorithm \ref{alg:bound-stepwise-GLRU},
			where $I[j]$ is a lower bound of $E[j]$.
	\ENDFOR
		\\ ~
		\\ \COMMENT{Which feature to remove}
	\STATE $j_\mathrm{best} \gets \mathbf{null}$.
	\FOR{$j\in S$ (in the ascending order of $I[j]$)} \label{alg:stepwise-GLRU:each-feature}
		\IF{$I[j]\geq E[j_\mathrm{best}]$} \label{alg:stepwise-GLRU:end-training}
			\STATE {\bf exit for}.
				\\ \COMMENT{Removing the $j^\mathrm{th}$ feature (current or}
				\\ \COMMENT{upcoming) cannot be the best, since $j$'s are}
				\\ \COMMENT{given in the ascending order of $I[j]$.}
		\ELSE
			\STATE \label{alg:stepwise-GLRU-train2} $\bm{w}^{*(S\setminus\{j\})} \gets \targmin_{\bm{w}\in\mathbb{R}^{|S|-1}}\Prim_{X_{:(S\setminus\{j\})}}(\bm{w})$.
			\STATE $E[j]\gets\mathrm{err}(\bm{w}^{*(S\setminus\{j\})}, S\setminus\{j\})$.
			\STATE {\bf if}~$E[j] < E[j_\mathrm{best}]$~{\bf then}~$j_\mathrm{best}\gets j$.
		\ENDIF
	\ENDFOR
	\STATE {\bf if}~$j_\mathrm{best} = \mathbf{null}$~{\bf then}~{\bf exit while}.
	\STATE $S \gets S\setminus\{j_\mathrm{best}\}$;~$E[\mathbf{null}] \gets E[j_\mathrm{best}]$.
\ENDWHILE
\STATE {\bf return}~$S$.
\end{algorithmic}
\end{algorithm}

Similar to LOOCV in Section \ref{sec:GLRU-LOOCV}, GLRU for feature removal can be applied to stepwise feature elimination.
We considered only binary classification problems.

Let the training and validation datasets be $(X, \bm{y})$ ($X\in\mathbb{R}^{n\times d}$, $\bm{y}\in\mathbb{R}^n$) and $(\check{X}, \check{\bm{y}})$ ($\check{X}\in\mathbb{R}^{m\times d}$, $\check{\bm{y}}\in\mathbb{R}^m$), respectively.

Naive stepwise feature elimination is shown in Algorithm \ref{alg:ordinary-stepwise}.
For each elimination step of $S$ (line \ref{alg:ordinary-stepwise-elimination-step}) and $j\in S$ (line \ref{alg:ordinary-stepwise-each-feature}), we must compute $\bm{w}^{*(S\setminus\{j\})}$ at a high computational cost (line \ref{alg:ordinary-stepwise-train}).

Using GRLU, we can avoid some of the computations of $\bm{w}^{*(S\setminus\{j\})}$ if either assumption (i-P) or (ii-P) in Theorem \ref{th:bounds} holds.
Using GLRU we can identify the lower and the upper bounds of the validation errors for $m$ instances.

The entire process is described in Algorithm \ref{alg:stepwise-GLRU}.
First, for each feature $j$ that is not yet removed, we compute the lower bounds of the validation errors by GLRU
(line \ref{alg:stepwise-GLRU-each-feature} in Algorithm \ref{alg:stepwise-GLRU}; details are provided in Algorithm \ref{alg:bound-stepwise-GLRU}).
In contrast to naive stepwise feature elimination, which must attempt training computations for all feature removals $j\in S$,
with GLRU we can skip the training computation for a feature removal $j\in S$ if it cannot provide the lowest validation error with the strategy below:
\begin{itemize}
\item We ``train the model parameter for the feature removal and compute the true validation errors''
	in the ascending order of the lower bounds of validation errors $I[j]$ (line \ref{alg:stepwise-GLRU:each-feature} in Algorithm \ref{alg:stepwise-GLRU}).
\item If the lower bounds of validation errors $I[j]$ is equal to or larger than the lowest true validation error found so far, we can omit training the model parameter for the remaining feature removals (line \ref{alg:stepwise-GLRU:end-training} in Algorithm \ref{alg:stepwise-GLRU}).
\end{itemize}

\begin{algorithm}[tp]
\caption{Bounds of validation errors for stepwise feature elimination by GLRU. We primarily present the processes when the precondition (i-P) in Theorem \ref{th:bounds} is satisfied, and we present the different processes for the case when the precondition (ii-P) is satisfied with brackets $\langle\quad\rangle$.}
\label{alg:bound-stepwise-GLRU}
\begin{algorithmic}[1]
\REQUIRE $X\in\mathbb{R}^{n\times d}, \bm{y}\in\{-1, +1\}^n$.
\REQUIRE $\check{X}\in\mathbb{R}^{m\times d}, \check{\bm{y}}\in\{-1, +1\}^m$.
\REQUIRE $S\subseteq[d]$, $\bm{w}^{*(S)}\in\mathbb{R}^{|S|}$.
\STATE Compute $\Gnew(\Woldh, \Alphaold)$ by \eqref{eq:bound-instance-removal} with $\dold\gets|S|$, $\dnew\gets|S|-1$, and the removed feature being the $j^\mathrm{th}$.
	\\ \COMMENT{Here we assume $\Wold = \bm{w}^{*(S)}$ (computed) and}
	\\ \COMMENT{$\Wnew = \bm{w}^{*(S\setminus\{j\})}$ (not computed).}
\STATE Compute $r_P$ by \eqref{eq:primal-bound}~$\langle$by \eqref{eq:dual-primal-bound}$\rangle$.
\STATE $C[j]\gets 0$; $I[j]\gets 0$; $Z[j]\gets 0$.
	\\ \COMMENT{$C[j]$ (resp. $I[j]$, $Z[j]$) is the number of validations}
	\\ \COMMENT{instances that GLRU can identify as classified correct}
	\\ \COMMENT{(resp. classified as incorrect, or unable to identify).}
\FOR{$i\in[m]$}
	\STATE Compute the lower and the upper bounds of $\check{X}_{i,S}\bm{w}^{*(S\setminus\{j\})}$ by \eqref{eq:prediction-bound-primal} $\langle$by \eqref{eq:prediction-bound-dual}$\rangle$, denoted by $L_i$ and $U_i$, respectively.
	\IF[Classified positive]{$L_i > 0$}
		\STATE {\bf if}~{$\check{y}_i = +1$}~{\bf then}~$C[j]\gets C[j] + 1$;
		\STATE {\bf else}~$I[j]\gets I[j] + 1$.
	\ELSIF[Classified negative]{$U_i < 0$}
		\STATE {\bf if}~{$\check{y}_i = -1$}~{\bf then}~$C[j]\gets C[j] + 1$;
		\STATE {\bf else}~$I[j]\gets I[j] + 1$.
	\ELSE[Not determined]
		\STATE $Z[j]\gets Z[j] + 1$.
	\ENDIF
\ENDFOR
\end{algorithmic}
\end{algorithm}

\section{Experiments} \label{sec:experiment}

In this section we examine the experimental performance of the proposed GLRU for LOOCV (Section \ref{sec:GLRU-LOOCV}) and stepwise feature elimination (Section \ref{sec:GLRU-stepwise}).

In Sections \ref{sec:exp-loocv} and \ref{sec:exp-stepwise} we present the reduction in computational cost by GLRU in LOOCV (Section \ref{sec:GLRU-LOOCV}) and stepwise feature elimination (Section \ref{sec:GLRU-stepwise}), respectively.
For LOOCV, we also examine the cost compared to a fast but approximate LOOCV method.

In addition, in Section \ref{sec:exp-tightness} we examine the tightness of the prediction bounds for multiple additions or removals of instances or features for larger datasets.
In this experiment, we also discuss the differences between Primal-SCB \eqref{eq:prediction-bound-primal} and Dual-SCB \eqref{eq:prediction-bound-dual} and identify which method provides tighter bounds.

Because we use the datasets for binary classification, we used L2-regularized logistic regression: $\ell_y(t) = \log(1 + e^{-yt})$ and $\rho_j(t) = \frac{\lambda}{2}t^2$ (see Appendix \ref{app:losses-regularizations} for further details). Detailed training computation is provided in Appendix \ref{app:exp-training}.

\begin{table}[t]
\caption{Datasets used for the experiments of Sections \ref{sec:exp-loocv} and \ref{sec:exp-stepwise} (LOOCV and stepwise feature eliminations). All are from UCI Machine Learning Repository \cite{Dua2017UCI}. All are for binary classifications.}
\label{tb:datasets-loocv-stepwise}
See Appendix \ref{app:exp-normalization} for details of the ``normalization strategy''.
\begin{center}
\begin{tabular}{ccrr}
\hline
        & Normalization & \ccell{\#Instances} & \ccell{\#Features} \\
Dataset & strategy      & \ccell{$n$}         & \ccell{$d$} \\
\hline
{\tt arcene}  & Dense &   200 &  9,961 \\
{\tt dexter}  & Dense &   600 & 11,035 \\
{\tt gisette} & Dense & 6,000 &  4,955 \\
\hline
\end{tabular}
\end{center}
\end{table}

\begin{table}[tp]
\caption{Datasets used in the experiment of Section \ref{sec:exp-tightness}. All are for binary classifications.}
\label{tab:datasets}
\vspace*{-\baselineskip}
\begin{itemize}
\item {\bf Source}:
	The website where the datasets are distributed.
	``UCI'': UCI Machine Learning Repository \cite{Dua2017UCI},
	``LIBSVM'': LIBSVM Data \cite{libsvmDataset}.
	For the datasets {\tt covtype}, {\tt rcv1\_train} and {\tt news20},
	we used the datasets with the suffix ``.binary'',
	since they are originally for multiclass classifications and
	binary classification versions are separately distributed.
\item {\bf Normalization strategy}:
	For consistency with the theoretical results (Section \ref{sec:exp-tightness-discussion}),
	we normalized each feature $j\in[d]$ to have its L2-norm $\|X_{:j}\|_2 = \sqrt{n}$.
	``Dense'' means that each feature is normalized to mean zero and variance one,
	while ``Sparse'' means that each feature is only scaled.
	See Appendix \ref{app:exp-normalization} for details.
\item $n_\mathrm{train}$, $d$:
	``$\pm 10$'' denotes that modifications were made (one to ten removals and additions).
\end{itemize}
\begin{center}
{\scriptsize
\begin{tabular}{l|ccrr|rrr}
\hline
                           &        & Normalization & \multicolumn{2}{c|}{Size of whole dataset}            & \multicolumn{3}{c}{Size of the dataset in experiments} \\
 \multicolumn{1}{c|}{Name} & Source & strategy      & \ccell{\#Instances} & \multicolumn{1}{c|}{\#Features} & \ccell{$n_\mathrm{train}$} & \ccell{$n_\mathrm{test}$} & \ccell{$d$} \\
\hline
\multicolumn{8}{l}{Datasets for instance modifications} \\
\hline
{\tt Skin\_NonSkin} & UCI    & Dense  &   245,057 &         3 & {\sf   242,596$\pm$10} &  2,451 &      3 \\
{\tt covtype}       & LIBSVM & Dense  &   581,012 &        54 & {\sf   575,191$\pm$10} &  5,811 &     54 \\
{\tt SUSY}          & UCI    & Dense  & 5,000,000 &        18 & {\sf 4,949,990$\pm$10} & 50,000 &     18 \\
{\tt rcv1\_train}   & LIBSVM & Sparse &    20,242 &    47,236 & {\sf    20,029$\pm$10} &    203 & 47,236 \\
\hline
\multicolumn{8}{l}{Datasets for feature modifications} \\
\hline
{\tt rcv1\_train}   & LIBSVM & Sparse &    20,242 &    47,236 & 1000 & 19,242 & {\sf    47,226$\pm$10} \\
{\tt dorothea}      & UCI    & Sparse &     1,150 &    91,598 & 1000 &    150 & {\sf    91,588$\pm$10} \\
{\tt news20}        & LIBSVM & Sparse &    19,996 & 1,355,191 & 1000 & 18,996 & {\sf 1,355,181$\pm$10} \\
{\tt url}           & LIBSVM & Sparse & 2,396,130 & 3,231,961 & 1000 & 19,000 & {\sf 3,231,951$\pm$10} \\
\hline
\end{tabular}
}
\end{center}
\end{table}

\subsection{Leave-one-out cross-validation} \label{sec:exp-loocv}
\begin{table}[tp]
\caption{Computational costs of four LOOCV methods: Applying proposed method (GLRU) or not, and the training computation is conducted exactly or approximately.}
\label{tb:exp-loocv}
\begin{itemize}
\item ``Prep'' (preprocessing) denotes the cost for training for the dataset before feature removals (line \ref{alg:loocv-GLRU:initial-training} in Algorithm \ref{alg:loocv-GLRU}). In addition, for approximate training, the cost to compute the hessian $\left.\frac{\partial^2\Prim_X(\bm w)}{\partial \bm w^2}\right|_{\bm w = \bm w^*}$ is added.
\item ``(ratio)'' denotes the ratio of time with GLRU to the one without GLRU.
\item ``Number of trainings'' denotes the number of instance removals which the training computation of line \ref{alg:loocv-GLRU-train} in Algorithm \ref{alg:loocv-GLRU} are needed.
\end{itemize}
\begin{center}
{\scriptsize
\begin{tabular}{c|rrrr|rrrr|rrr}
\hline
& \multicolumn{4}{c|}{Exact LOOCV} & \multicolumn{4}{c|}{Approximate LOOCV} & \multicolumn{3}{c}{Computation}\\
\cline{2-5}
\cline{6-9}
          &              & \ccell{Exact-} & \multicolumn{2}{c|}{Exact-GLRU}  &              & \ccell{Approx-} & \multicolumn{2}{c|}{Approx-GLRU} & \multicolumn{3}{c}{of GLRU} \\
\cline{10-12}
          & \ccell{Prep} & \ccell{Naive}  & \multicolumn{2}{c|}{[sec]}       & \ccell{Prep} & \ccell{Naive}  & \multicolumn{2}{c|}{[sec]}       & Time  & \multicolumn{2}{c}{Number of} \\
$\lambda$ & \ccell{[sec]}& \ccell{[sec]}  & \multicolumn{2}{c|}{(ratio)}     & \ccell{[sec]}& \ccell{[sec]}  & \multicolumn{2}{c|}{(ratio)}     & [sec] & \multicolumn{2}{c}{trainings} \\
\hline
\multicolumn{11}{l}{Dataset {\tt arcene} ($n=200$, $d=9961$)} \\
\hline
$2^{0}$   &   2.2 & {\sf   106.7} & {\sf   37.6} & (35.2\%) &  4048.8 & {\sf  28.2} & {\sf 12.2} & (43.2\%) & {\sf 0.051} &  {\sf 86} & (43.0\%) \\
$2^{-1}$  &   2.9 & {\sf   137.5} & {\sf   48.8} & (35.5\%) &  3638.5 & {\sf  40.9} & {\sf 18.7} & (45.8\%) & {\sf 0.053} &  {\sf 91} & (45.5\%) \\
$2^{-2}$  &   3.6 & {\sf   147.3} & {\sf   55.4} & (37.6\%) &  3649.5 & {\sf  33.0} & {\sf 15.4} & (46.7\%) & {\sf 0.052} &  {\sf 93} & (46.5\%) \\
$2^{-3}$  &   3.8 & {\sf   165.0} & {\sf   65.4} & (39.6\%) &  5241.1 & {\sf  29.2} & {\sf 13.5} & (46.1\%) & {\sf 0.052} &  {\sf 92} & (46.0\%) \\
$2^{-4}$  &   5.9 & {\sf   193.6} & {\sf   74.7} & (38.6\%) &  4186.5 & {\sf  28.4} & {\sf 13.1} & (45.9\%) & {\sf 0.051} &  {\sf 92} & (46.0\%) \\
$2^{-5}$  &   5.3 & {\sf   214.2} & {\sf   84.7} & (39.5\%) &  3678.1 & {\sf  33.1} & {\sf 15.2} & (46.1\%) & {\sf 0.052} &  {\sf 92} & (46.0\%) \\
$2^{-6}$  &   7.7 & {\sf   219.8} & {\sf   84.9} & (38.6\%) &  5066.9 & {\sf  34.2} & {\sf 16.1} & (46.9\%) & {\sf 0.052} &  {\sf 93} & (46.5\%) \\
$2^{-7}$  &   6.4 & {\sf   228.3} & {\sf   86.3} & (37.8\%) &  3599.1 & {\sf  38.1} & {\sf 18.5} & (48.6\%) & {\sf 0.053} &  {\sf 96} & (48.0\%) \\
$2^{-8}$  &   9.6 & {\sf   256.6} & {\sf   97.6} & (38.0\%) &  5390.4 & {\sf  26.3} & {\sf 12.8} & (48.5\%) & {\sf 0.055} &  {\sf 96} & (48.0\%) \\
$2^{-9}$  &  10.0 & {\sf   280.5} & {\sf  102.9} & (36.7\%) &  3972.0 & {\sf  41.4} & {\sf 19.9} & (48.0\%) & {\sf 0.053} &  {\sf 96} & (48.0\%) \\
$2^{-10}$ &  10.8 & {\sf   269.0} & {\sf   97.5} & (36.2\%) &  5560.5 & {\sf  26.5} & {\sf 12.9} & (48.7\%) & {\sf 0.051} &  {\sf 97} & (48.5\%) \\
\hline
\multicolumn{11}{l}{Dataset {\tt dexter} ($n=600$, $d=11035$)} \\
\hline
$2^{0}$   &   2.8 & {\sf   401.4} & {\sf   28.3} & ( 7.1\%) & 10328.7 & {\sf 115.0} & {\sf  8.4} & ( 7.3\%) & {\sf 0.171} &  {\sf 42} & ( 7.0\%) \\
$2^{-1}$  &   3.7 & {\sf   533.9} & {\sf   38.2} & ( 7.1\%) & 11684.6 & {\sf 181.6} & {\sf 12.6} & ( 7.0\%) & {\sf 0.179} &  {\sf 43} & ( 7.2\%) \\
$2^{-2}$  &   4.1 & {\sf   543.8} & {\sf   38.2} & ( 7.0\%) & 11602.9 & {\sf 209.4} & {\sf 14.9} & ( 7.1\%) & {\sf 0.194} &  {\sf 44} & ( 7.3\%) \\
$2^{-3}$  &   5.0 & {\sf   617.1} & {\sf   42.3} & ( 6.8\%) & 13320.5 & {\sf 173.6} & {\sf 13.2} & ( 7.6\%) & {\sf 0.169} &  {\sf 44} & ( 7.3\%) \\
$2^{-4}$  &   7.2 & {\sf   625.4} & {\sf   43.0} & ( 6.9\%) & 13656.5 & {\sf 132.4} & {\sf 10.5} & ( 7.9\%) & {\sf 0.164} &  {\sf 44} & ( 7.3\%) \\
$2^{-5}$  &   8.7 & {\sf   728.3} & {\sf   49.7} & ( 6.8\%) & 14314.2 & {\sf 106.2} & {\sf  8.0} & ( 7.5\%) & {\sf 0.159} &  {\sf 44} & ( 7.3\%) \\
$2^{-6}$  &   7.1 & {\sf   731.3} & {\sf   50.0} & ( 6.8\%) & 13986.5 & {\sf 111.7} & {\sf  8.3} & ( 7.4\%) & {\sf 0.161} &  {\sf 44} & ( 7.3\%) \\
$2^{-7}$  &   7.4 & {\sf   954.3} & {\sf   64.3} & ( 6.7\%) & 11466.0 & {\sf 205.8} & {\sf 14.9} & ( 7.2\%) & {\sf 0.181} &  {\sf 44} & ( 7.3\%) \\
$2^{-8}$  &  11.3 & {\sf   847.8} & {\sf   57.7} & ( 6.8\%) & 14353.8 & {\sf 111.0} & {\sf  8.1} & ( 7.3\%) & {\sf 0.160} &  {\sf 43} & ( 7.2\%) \\
$2^{-9}$  &  12.7 & {\sf  1020.8} & {\sf   69.0} & ( 6.8\%) & 11907.4 & {\sf  92.4} & {\sf  6.8} & ( 7.4\%) & {\sf 0.168} &  {\sf 43} & ( 7.2\%) \\
$2^{-10}$ &  13.7 & {\sf  1014.5} & {\sf   69.0} & ( 6.8\%) & 11093.4 & {\sf 102.7} & {\sf  7.7} & ( 7.5\%) & {\sf 0.170} &  {\sf 43} & ( 7.2\%) \\
\hline
\multicolumn{11}{l}{Dataset {\tt gisette} ($n=6000$, $d=4955$)} \\
\hline
$2^{0}$   &  24.6 & {\sf 18839.3} & {\sf  743.8} & ( 3.9\%) & 15057.3 & {\sf 214.0} & {\sf  7.0} & ( 3.3\%) & {\sf 0.726} & {\sf 176} & ( 2.9\%) \\
$2^{-1}$  &  28.1 & {\sf 18919.4} & {\sf  915.7} & ( 4.8\%) & 14931.6 & {\sf 209.7} & {\sf  7.7} & ( 3.6\%) & {\sf 0.731} & {\sf 196} & ( 3.3\%) \\
$2^{-2}$  &  40.1 & {\sf 20135.3} & {\sf 1286.1} & ( 6.4\%) & 14517.6 & {\sf 211.2} & {\sf  8.2} & ( 3.9\%) & {\sf 0.730} & {\sf 209} & ( 3.5\%) \\
$2^{-3}$  &  62.4 & {\sf 22400.3} & {\sf 1543.6} & ( 6.9\%) & 15298.4 & {\sf 212.9} & {\sf  8.5} & ( 4.0\%) & {\sf 0.731} & {\sf 216} & ( 3.6\%) \\
$2^{-4}$  &  68.8 & {\sf 25000.8} & {\sf 2086.2} & ( 8.3\%) & 13861.5 & {\sf 209.0} & {\sf  9.1} & ( 4.3\%) & {\sf 0.747} & {\sf 240} & ( 4.0\%) \\
$2^{-5}$  & 117.8 & {\sf 27696.6} & {\sf 2492.1} & ( 9.0\%) & 14981.5 & {\sf 207.4} & {\sf  9.5} & ( 4.6\%) & {\sf 0.728} & {\sf 255} & ( 4.2\%) \\
$2^{-6}$  & 145.5 & {\sf 25897.6} & {\sf 2670.8} & (10.3\%) & 15457.4 & {\sf 202.4} & {\sf 10.1} & ( 5.0\%) & {\sf 0.727} & {\sf 278} & ( 4.6\%) \\
$2^{-7}$  & 212.4 & {\sf 29116.8} & {\sf 3220.7} & (11.1\%) & 15333.4 & {\sf 198.2} & {\sf 10.4} & ( 5.3\%) & {\sf 0.720} & {\sf 296} & ( 4.9\%) \\
$2^{-8}$  & 259.7 & {\sf 30690.8} & {\sf 3770.1} & (12.3\%) & 14851.1 & {\sf 197.7} & {\sf 11.1} & ( 5.6\%) & {\sf 0.718} & {\sf 313} & ( 5.2\%) \\
$2^{-9}$  & 327.2 & {\sf 32558.0} & {\sf 4318.2} & (13.3\%) & 14516.8 & {\sf 202.9} & {\sf 11.9} & ( 5.9\%) & {\sf 0.720} & {\sf 332} & ( 5.5\%) \\
$2^{-10}$ & 455.6 & {\sf 33874.5} & {\sf 4879.8} & (14.4\%) & 15525.9 & {\sf 198.0} & {\sf 12.1} & ( 6.1\%) & {\sf 0.712} & {\sf 345} & ( 5.7\%) \\
\hline
\end{tabular}
}
\end{center}
\end{table}

First, we present the effect of GLRU on LOOCV explained in Section \ref{sec:GLRU-LOOCV}.
We used the dataset in Table \ref{tb:datasets-loocv-stepwise}.
In the experiment, we compared the computational costs of the following four setups:
\begin{description}
\item[Exact-GLRU]: Proposed method (Algorithm \ref{alg:loocv-GLRU}, including Remark \ref{rm:stop-optimization-earlier})
\item[Exact-Naive]: We just run training computations for each instance removal (Algorithm \ref{alg:ordinary-loocv}).
\item[Approx-GLRU]: We replaced the training computation in ``Exact-GLRU'' with the approximate one.
\item[Approx-Naive]: We replaced the training computation in ``Exact-Naive'' with the approximate one.
\end{description}
Here, as stated in Section \ref{sec:related}, the approximate training computation is implemented by applying the Newton's method for only one step.
To conduct this, we need $O(d^3)$ time (a matrix inversion) for preprocessing (only once before LOOCV), and $O(d^2)$ time for the approximate training (that must be done for each instance removal). See Appendix \ref{app:exp-training-approx} for details.

The results are presented in Table \ref{tb:exp-loocv}.
First, comparing ``Exact-GLRU'' and ``Exact-Naive'', or comparing ``Approx-GLRU'' and ``Approx-Naive'',
the proposed method GLRU makes the computational cost much smaller.
We can also find that the computation time of GLRU itself is negligible compared to the total cost.
Moreover, in case $d$ is large and $n$ is small, the preprocessing computation for approximation becomes significantly large
because it requires $O(d^3)$-time computation of a matrix inversion.
In fact, the total computational cost of existing {\em approximate} method (the time of ``Approx-Naive'' plus ``Prep'')
is larger than that of proposed {\em exact} method (the time of ``Exact-GLRU'' plus ``Prep'')
for datasets {\tt arcene} and {\tt dexter}. This is another advantage of applying GLRU to LOOCV.

Here, we discuss the effect of GLRU on LOOCV in detail.
To this aim, we focus on the differences between
the reduction percentages in (i) ``Exact-GLRU'' coluumn and (ii) ``Number of trainings'' in Table \ref{tb:exp-loocv}.
Intuitively, we can expect the percentages in (i) to be smaller than those in (ii).
Because we employ the strategy in Remark \ref{rm:stop-optimization-earlier},
if the training computation times are the same for all instance removals,
and the time required for bound computations by GLRU was negligible,
then the total time (i) is reduced than the number of training computations (ii).
However, this is not the case: percentages (i) are significantly smaller than (ii) for {\tt arcene} dataset,
and significantly larger for {\tt gisette} dataset.
We believe that this is because the training computations skipped by GLRU but conducted in the naive method are likely to be faster than those that are not skipped, even only by a warm-start.
The cost of a training computation with a warm-start is expected to be small
when the amount of trained model parameter updates is small.
For such instance removals, GLRU is easier to determine the predicted label
for the removed instance; therefore its training computation is likely to be skipped by GLRU.

To examine this in more detail, we see the training computation cost per instance removal in Figure \ref{fig:ratio-warmstart-stopping}.
In each plot in Figure \ref{fig:ratio-warmstart-stopping}, line (A) denotes the average training time per instance by GLRU, that is, the training computations are conducted only for instance removals whose predicted labels are not determined by GLRU. On the other hand, line (B) denotes average training time per instance by naive method, that is, the training computations are conducted for all instance removals. Then we can see that the times (A) are larger than (B) for {\tt gisette} dataset, which implies that the training computation for instance removals that GLRU can skip were (relatively) effective. 
Decomposing the computation times of (B) in Figure \ref{fig:ratio-warmstart-stopping} (by warm-start) into instance removals whose labels are determined by GLRU or not: lines (B1) and (B2), respectively, then (B1) is much costless than (A). Note that (B2) must be more costly than (A) because the removed instances are the same and only (A) employs Remark \ref{rm:stop-optimization-earlier}.

\begin{figure}[tp]
\includegraphics[width=0.5\hsize]{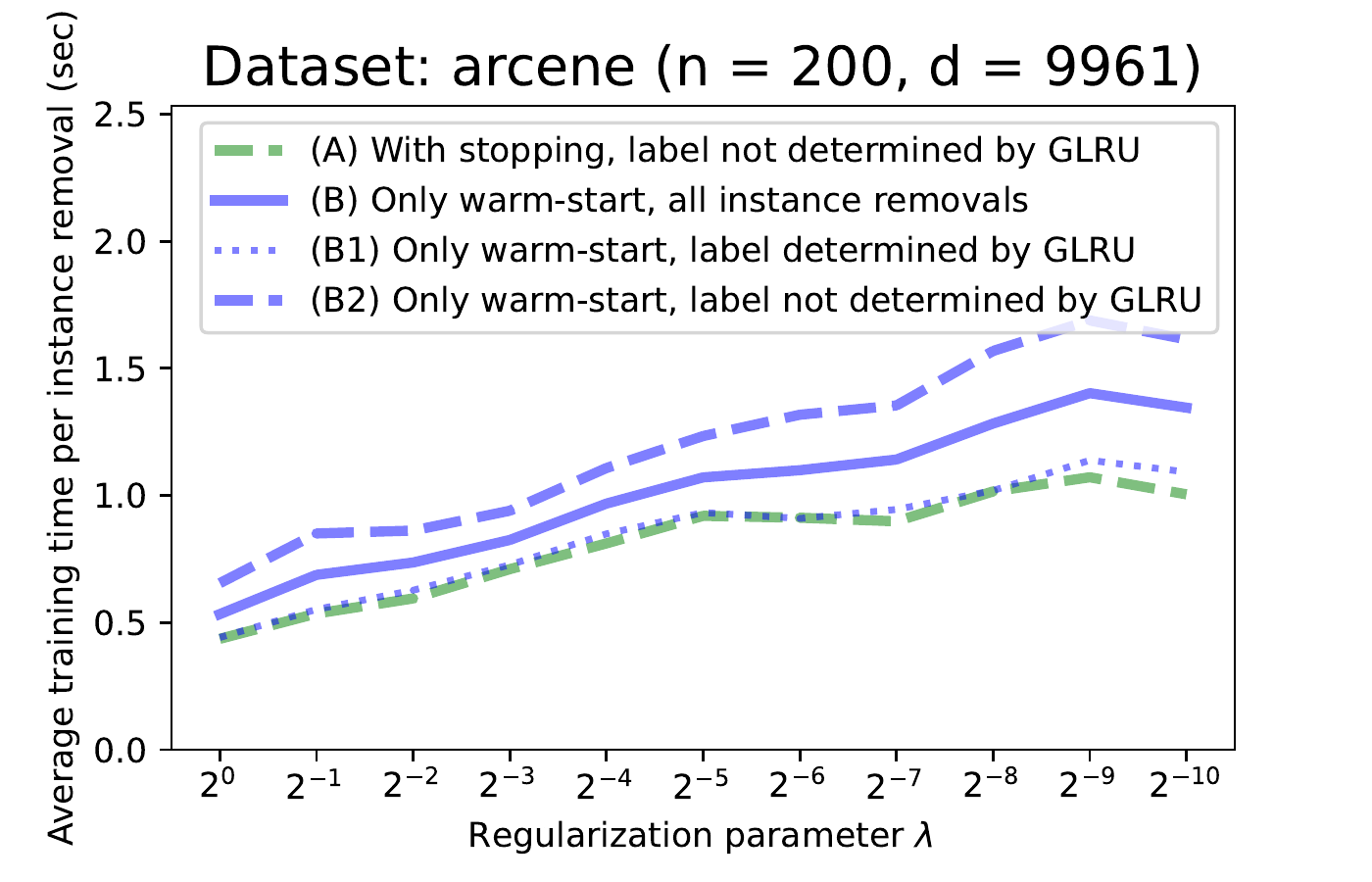}
\includegraphics[width=0.5\hsize]{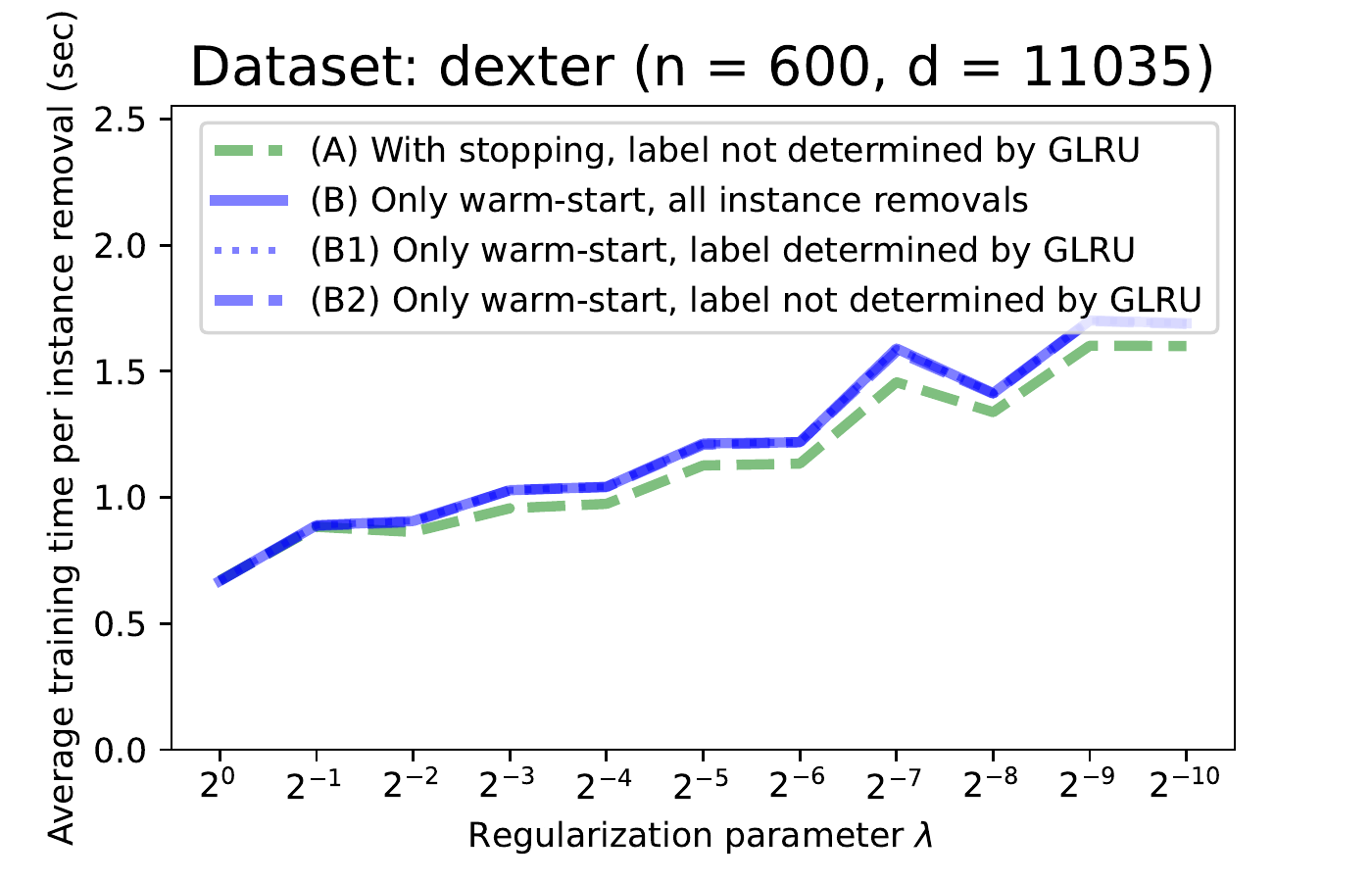}

\includegraphics[width=0.5\hsize]{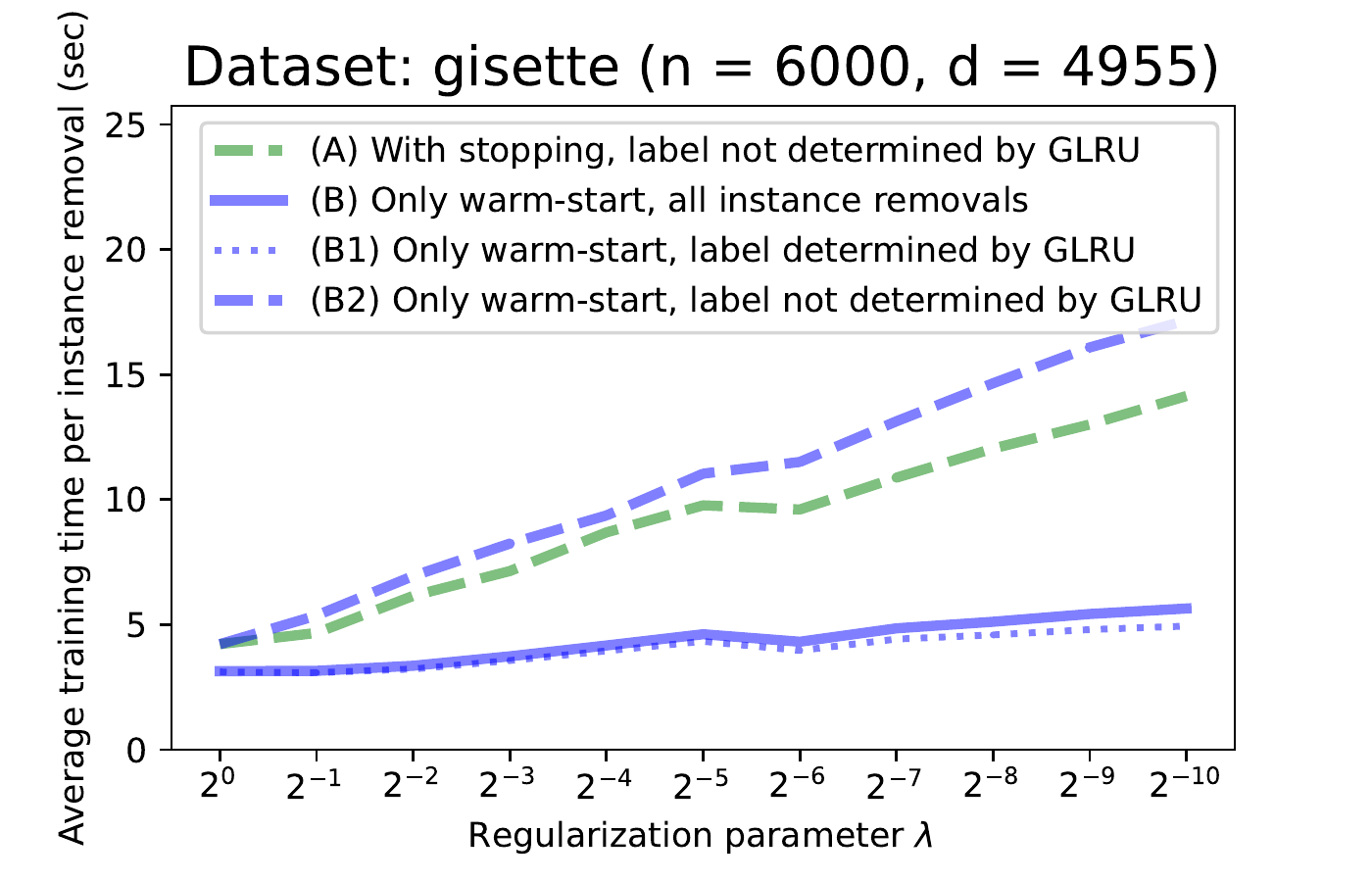}
\caption{The costs for training computations in LOOCV, on average per instance removal. As (A)/(B) becomes smaller, the ratio in ``Exact-GLRU'' (Table \ref{tb:exp-loocv}) becomes accordingly smaller than the one in ``Computation of GLRU -- Number of trainings''. Here (A) is usually smaller than (B2), but not always smaller than (B1) (and therefore (B)) depending on the datasets.}
\label{fig:ratio-warmstart-stopping}
\end{figure}

\subsection{Stepwise feature elimination} \label{sec:exp-stepwise}

\begin{table}[tp]
\caption{Computational costs of stepwise feature elimination by naive and proposed (with GLRU) methods, for the first feature removal.}
\label{tb:exp-stepwise}
\begin{itemize}
\item ``Prep'' (preprocessing) denotes the cost for training for the dataset before feature removals (line \ref{alg:stepwise-GLRU:initial-training} in Algorithm \ref{alg:stepwise-GLRU}).
\item ``(ratio)'' denotes the ratio of time with GLRU to the one without GLRU.
\item ``Number of trainings'' denotes the number of instance removals which the training computation of line \ref{alg:stepwise-GLRU-train2} in Algorithm \ref{alg:stepwise-GLRU} are needed.
\end{itemize}
\begin{center}
{\scriptsize
\begin{tabular}{c|rrrr|rrr|c}
\hline
          &              &                & &                                   & \multicolumn{3}{c|}{Computation of GLRU} \\
          & \ccell{Prep} & \ccell{Naive}  & \multicolumn{2}{c|}{With GLRU [sec]}& Time  & \multicolumn{2}{c|}{Number of} & \#Test errors\\
$\lambda$ & \ccell{[sec]}& \ccell{[sec]}  & \multicolumn{2}{c|}{(ratio)}        & [sec] & \multicolumn{2}{c|}{trainings} & (out of $n/10$) \\
\hline
\multicolumn{9}{l}{Dataset {\tt gisette} ($d=4955$, $n=6000$)} \\
\hline
$2^0$    &  19.12 & {\sf 11431.9} & {\sf  9946.7} & (87.0\%) & {\sf 6.29} & {\sf  2833} & (57.2\%) & 16 \\
$2^{-1}$ &  26.47 & {\sf 15545.5} & {\sf  6928.2} & (44.6\%) & {\sf 4.84} & {\sf  1438} & (29.0\%) & 11 \\
$2^{-2}$ &  37.11 & {\sf 19328.6} & {\sf  7211.7} & (37.3\%) & {\sf 4.72} & {\sf  1270} & (25.6\%) & 10 \\
$2^{-3}$ &  46.64 & {\sf 25486.7} & {\sf 23048.3} & (90.4\%) & {\sf 4.66} & {\sf  3473} & (70.1\%) & 10 \\ 
$2^{-4}$ &  61.52 & {\sf 31060.8} & {\sf 24000.3} & (77.3\%) & {\sf 4.42} & {\sf  2965} & (59.8\%) & 9  \\
$2^{-5}$ &  76.35 & {\sf 38363.7} & {\sf 35414.4} & (92.3\%) & {\sf 4.37} & {\sf  3797} & (76.6\%) & 10 \\
$2^{-6}$ & 104.63 & {\sf 44127.5} & {\sf 42928.4} & (97.3\%) & {\sf 4.34} & {\sf  4179} & (84.3\%) & 10 \\
$2^{-7}$ & 138.16 & {\sf 48223.7} & {\sf 46431.6} & (96.3\%) & {\sf 4.32} & {\sf  4172} & (84.2\%) & 10 \\
$2^{-8}$ & 173.68 & {\sf 55116.0} & {\sf 51843.9} & (94.1\%) & {\sf 4.30} & {\sf  4085} & (82.4\%) & 10 \\
$2^{-9}$ & 233.01 & {\sf 58589.5} & {\sf 57271.4} & (97.8\%) & {\sf 4.29} & {\sf  4331} & (87.4\%) & 11 \\
$2^{-10}$& 305.74 & {\sf 65100.8} & {\sf 63260.1} & (97.2\%) & {\sf 4.20} & {\sf  4302} & (86.8\%) & 12 \\
\hline
\multicolumn{9}{l}{Dataset {\tt arcene} ($d=9961$, $n=200$)} \\
\hline
$2^0$    &   2.08 & {\sf  3448.2} & {\sf  2574.9} & (74.7\%) & {\sf 0.32} & {\sf  7022} & (70.5\%) & 6 \\
$2^{-1}$ &   2.53 & {\sf  3783.3} & {\sf  1325.4} & (35.0\%) & {\sf 0.29} & {\sf  3309} & (33.2\%) & 6 \\
$2^{-2}$ &   3.18 & {\sf  4014.3} & {\sf   706.8} & (17.6\%) & {\sf 0.29} & {\sf  1617} & (16.2\%) & 6 \\
$2^{-3}$ &   3.28 & {\sf  4204.4} & {\sf  1609.6} & (38.3\%) & {\sf 0.29} & {\sf  3486} & (35.0\%) & 6 \\
$2^{-4}$ &   3.96 & {\sf  4389.8} & {\sf  3465.0} & (78.9\%) & {\sf 0.27} & {\sf  7417} & (74.5\%) & 6 \\
$2^{-5}$ &   4.54 & {\sf  4555.8} & {\sf   506.3} & (11.1\%) & {\sf 0.27} & {\sf   760} &  (7.6\%) & 5 \\
$2^{-6}$ &   4.74 & {\sf  4742.6} & {\sf  2175.3} & (45.9\%) & {\sf 0.27} & {\sf  4378} & (44.0\%) & 5 \\
$2^{-7}$ &   5.38 & {\sf  4912.0} & {\sf  3200.1} & (65.1\%) & {\sf 0.27} & {\sf  5737} & (57.6\%) & 5 \\
$2^{-8}$ &   6.09 & {\sf  5332.1} & {\sf  2715.5} & (50.9\%) & {\sf 0.28} & {\sf  4748} & (47.7\%) & 5 \\
$2^{-9}$ &   6.16 & {\sf  5671.4} & {\sf  3510.7} & (61.9\%) & {\sf 0.27} & {\sf  5621} & (56.4\%) & 5 \\
$2^{-10}$&   6.83 & {\sf  6095.0} & {\sf  3239.9} & (53.2\%) & {\sf 0.27} & {\sf  4893} & (49.1\%) & 5 \\
\hline
\multicolumn{9}{l}{Dataset {\tt dexter} ($d=11035$, $n=600$)} \\
\hline
$2^0$    &   2.81 & {\sf  6132.8} & {\sf  5980.0} & (97.5\%) & {\sf 1.12} & {\sf  9561} & (86.6\%) & 5 \\
$2^{-1}$ &   3.59 & {\sf  6476.2} & {\sf   352.4} &  (5.4\%) & {\sf 0.94} & {\sf   535} &  (4.8\%) & 4 \\
$2^{-2}$ &   3.83 & {\sf  5012.7} & {\sf   614.7} & (12.3\%) & {\sf 0.94} & {\sf   762} &  (6.9\%) & 4 \\
$2^{-3}$ &   4.85 & {\sf  8192.7} & {\sf   866.9} & (10.6\%) & {\sf 0.88} & {\sf  1000} &  (9.1\%) & 4 \\
$2^{-4}$ &   5.07 & {\sf  8651.2} & {\sf  1215.8} & (14.1\%) & {\sf 0.88} & {\sf  1301} & (11.8\%) & 4 \\
$2^{-5}$ &   5.99 & {\sf  9390.5} & {\sf  1579.4} & (16.8\%) & {\sf 0.89} & {\sf  1475} & (13.4\%) & 4 \\
$2^{-6}$ &   7.02 & {\sf 10663.6} & {\sf  2214.7} & (20.8\%) & {\sf 0.89} & {\sf  1930} & (17.5\%) & 4 \\
$2^{-7}$ &   6.85 & {\sf 10806.3} & {\sf  3120.7} & (28.9\%) & {\sf 0.91} & {\sf  2350} & (21.3\%) & 4 \\
$2^{-8}$ &   7.75 & {\sf 12570.8} & {\sf  3526.7} & (28.1\%) & {\sf 0.83} & {\sf  2765} & (25.1\%) & 4 \\
$2^{-9}$ &   8.77 & {\sf 13499.1} & {\sf 13415.9} & (99.4\%) & {\sf 0.82} & {\sf 10208} & (92.5\%) & 5 \\
$2^{-10}$&   8.88 & {\sf 14065.4} & {\sf 13967.1} & (99.3\%) & {\sf 0.83} & {\sf 10112} & (91.6\%) & 5 \\
\hline
\end{tabular}
}
\end{center}
\end{table}

Next we present the effect of GLRU on stepwise feature elimination, as explained in Section \ref{sec:GLRU-stepwise}.
We used the dataset in Table \ref{tb:datasets-loocv-stepwise} (the same as those in LOOCV).

In the experiment, we compared the computational costs of the proposed method (Algorithm \ref{alg:stepwise-GLRU}) and naive LOOCV (Algorithm \ref{alg:ordinary-stepwise}) employing only a warm-start (line \ref{alg:ordinary-stepwise-train}).
As stated in line \ref{alg:stepwise-GLRU:end-training} in Algorithm \ref{alg:stepwise-GLRU}, training computations can be omitted when the lower bound of the validation errors for removing a feature becomes equal to or larger than the validation errors for removing another feature.

For simplicity, we compared the computational costs of the first feature elimination.
The results are presented in Table \ref{tb:exp-stepwise}.
We can see that, in 17 out of 33 cases (three datasets $\times$ 11 $\lambda$'s)
the numbers of feature removals that require training computations are reduced to less than 50\%;
computational costs are accordingly reduced.
Furthermore, we can confirm that the computational cost of computing the bounds is significantly smaller than the training costs.

We note that, in contrast to LOOCV, the numbers of omitted training computations are not always reduced by increasing $\lambda$.
This phenomenon can be explained as follows.
As explained in Section \ref{sec:GLRU-stepwise},
we can omit the training of $\bm{w}^{*(S\setminus\{j\})}$
when there exists a feature removal whose true test error computed so far ($E[j_\mathrm{best}]$ in line \ref{alg:stepwise-GLRU:end-training}, Algorithm \ref{alg:stepwise-GLRU})
is smaller than the lower bound of that for $j$ ($I[j]$ in line \ref{alg:stepwise-GLRU:end-training} of Algorithm \ref{alg:stepwise-GLRU}).
Therefore, even if we use large $\lambda$ to provide tighter lower bound $I[j]$,
the omission is delayed if there is no small $E[j_\mathrm{best}]$.
As shown in Table \ref{tb:exp-stepwise},
the computational cost of GLRU tends to be reduced significantly when the value of ``\#Test errors'' becomes small.
An example of {\tt arcene} dataset with $\lambda = 2^{-4}$ (\#Test errors = 6, training required for 74.5\% of features) and
$\lambda = 2^{-5}$ (\#Test errors = 5, training required for 7.6\% of features) are presented in Figure \ref{fig:stepwise-behavior}.

\begin{figure}[tp]
\includegraphics[width=0.5\hsize]{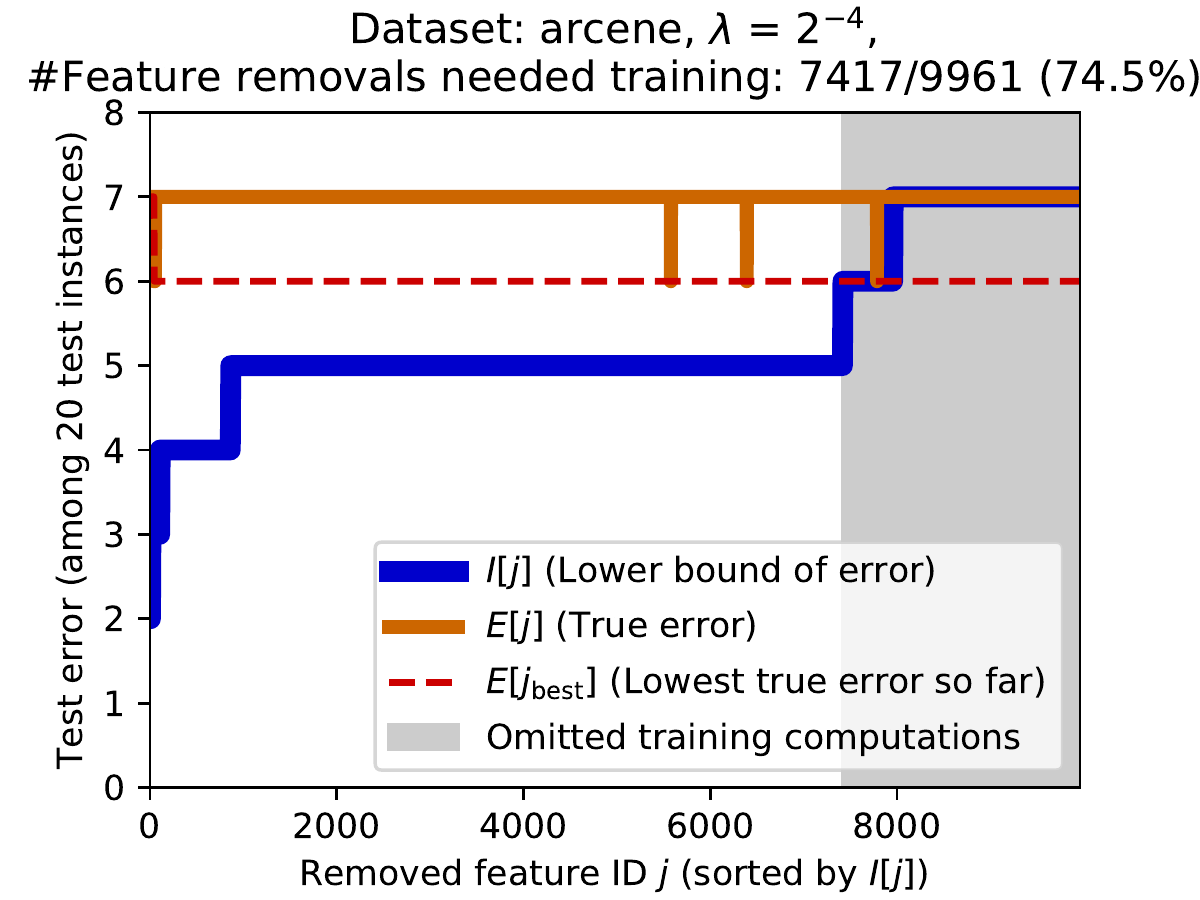}
\includegraphics[width=0.5\hsize]{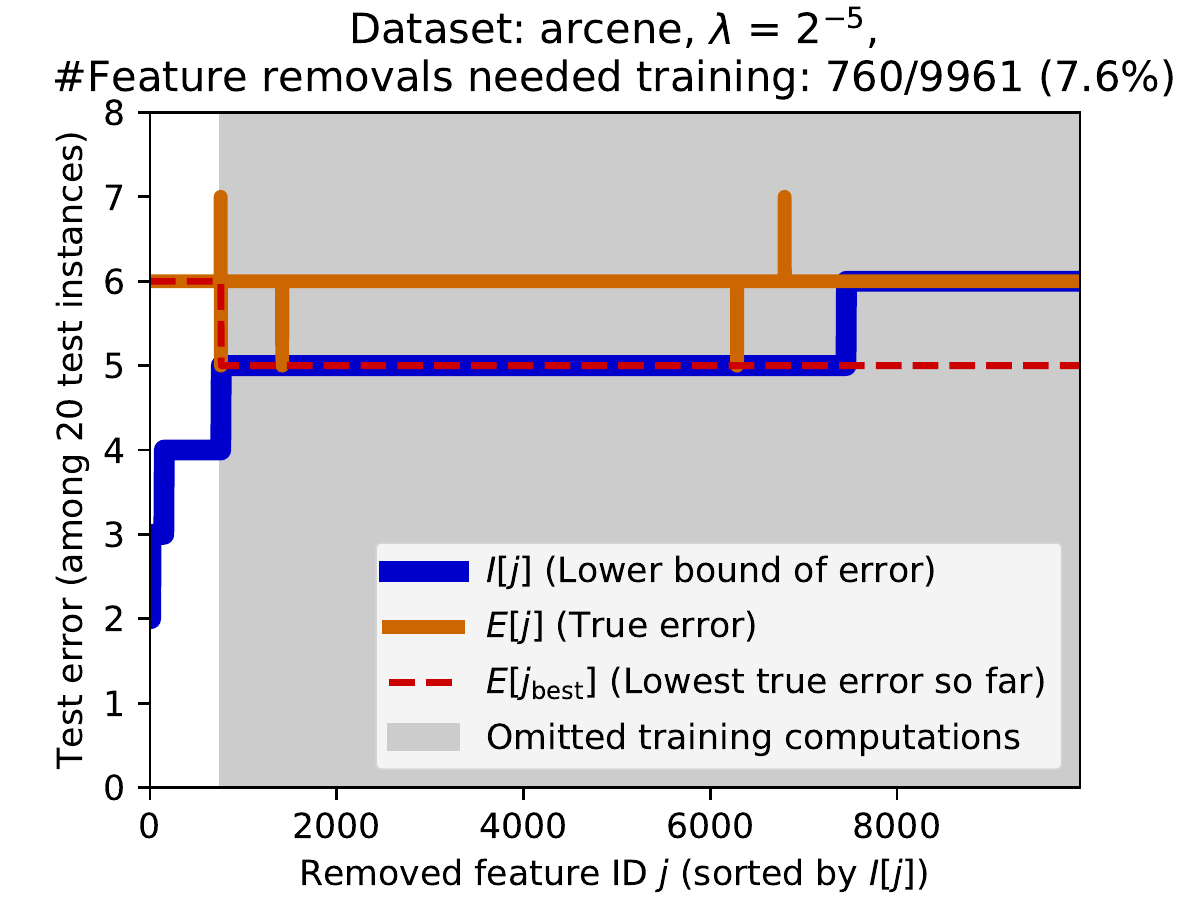}

\caption{Example of how the training computations being omitted in the stepwise feature eliminations with GLRU (Algorithm \ref{alg:stepwise-GLRU})
for the dataset {\tt arcene} and $\lambda = 2^{-4}, 2^{-5}$.
In contrast to the lower bounds of the test errors (blue lines) being not so different,
the test error computed in reality (orange lines) is lower in $\lambda=2^{-5}$ than in $\lambda=2^{-4}$.
As a result, the omission of training computations (line \ref{alg:stepwise-GLRU:end-training} in Algorithm \ref{alg:stepwise-GLRU}) starts earlier in the smaller $\lambda$ (gray area).}
\label{fig:stepwise-behavior}
\end{figure}

\subsection{Tightness of the bounds} \label{sec:exp-tightness}

\begin{figure*}[tp]
\begin{tabular}{cc}
\begin{minipage}{0.48\hsize}
\noindent
\includegraphics[width=\hsize]{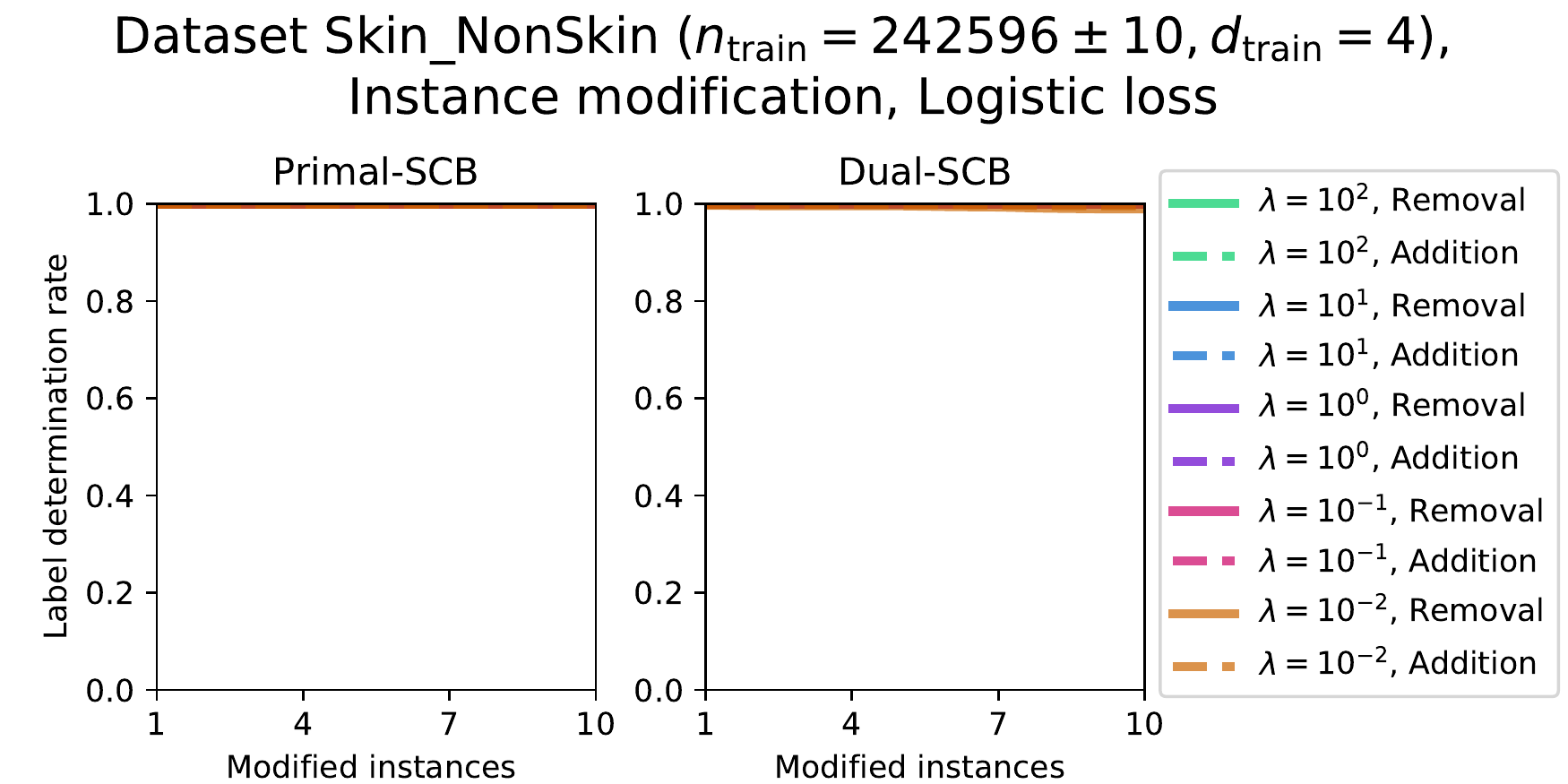}
\\
\noindent
\includegraphics[width=\hsize]{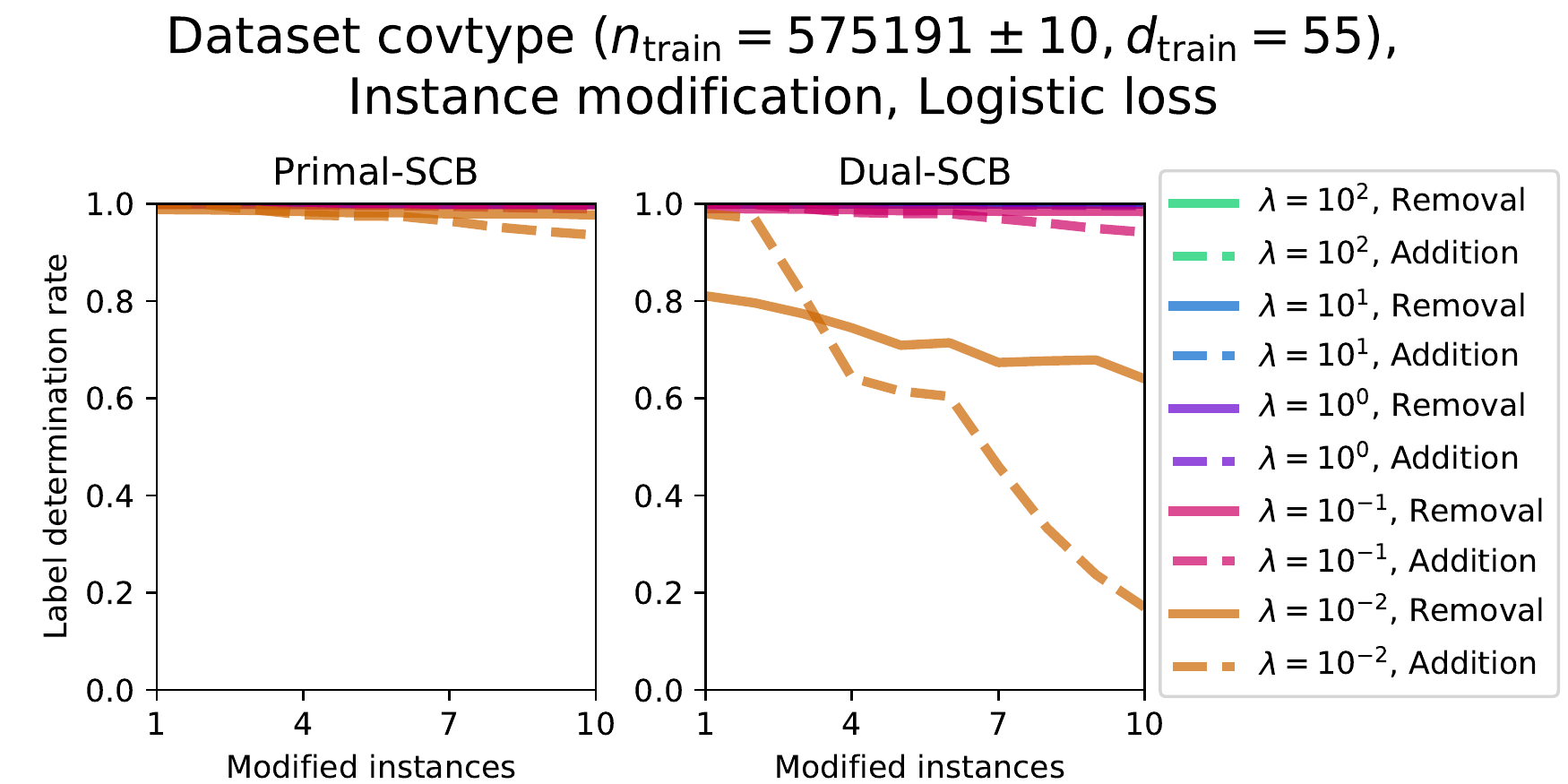}
\\
\noindent
\includegraphics[width=\hsize]{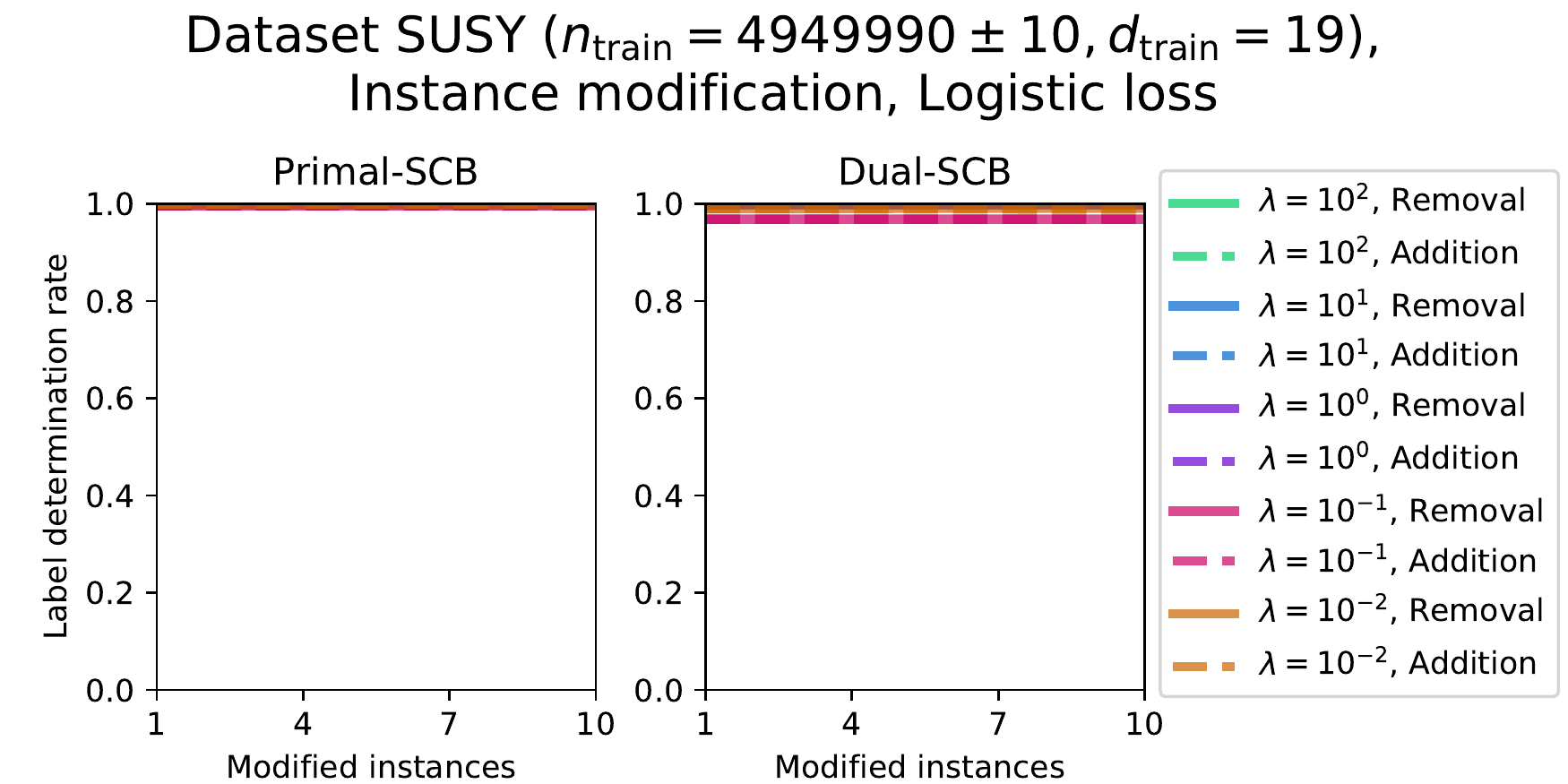}
\\
\noindent
\includegraphics[width=\hsize]{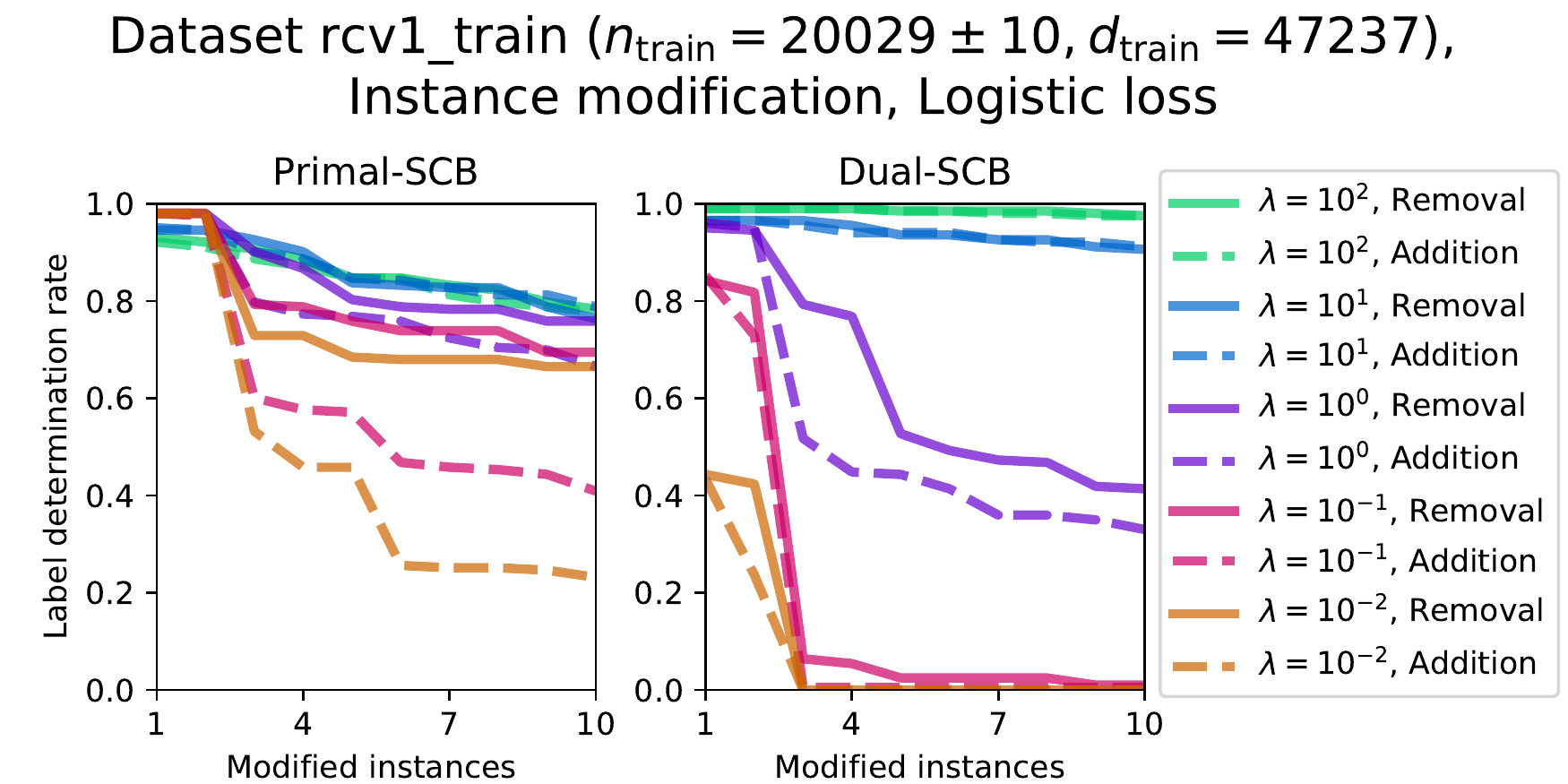}
\caption{Label determination rates for instance modifications}
\label{fig:modif-inst}
\end{minipage}
&
\begin{minipage}{0.48\hsize}
\noindent
\includegraphics[width=\hsize]{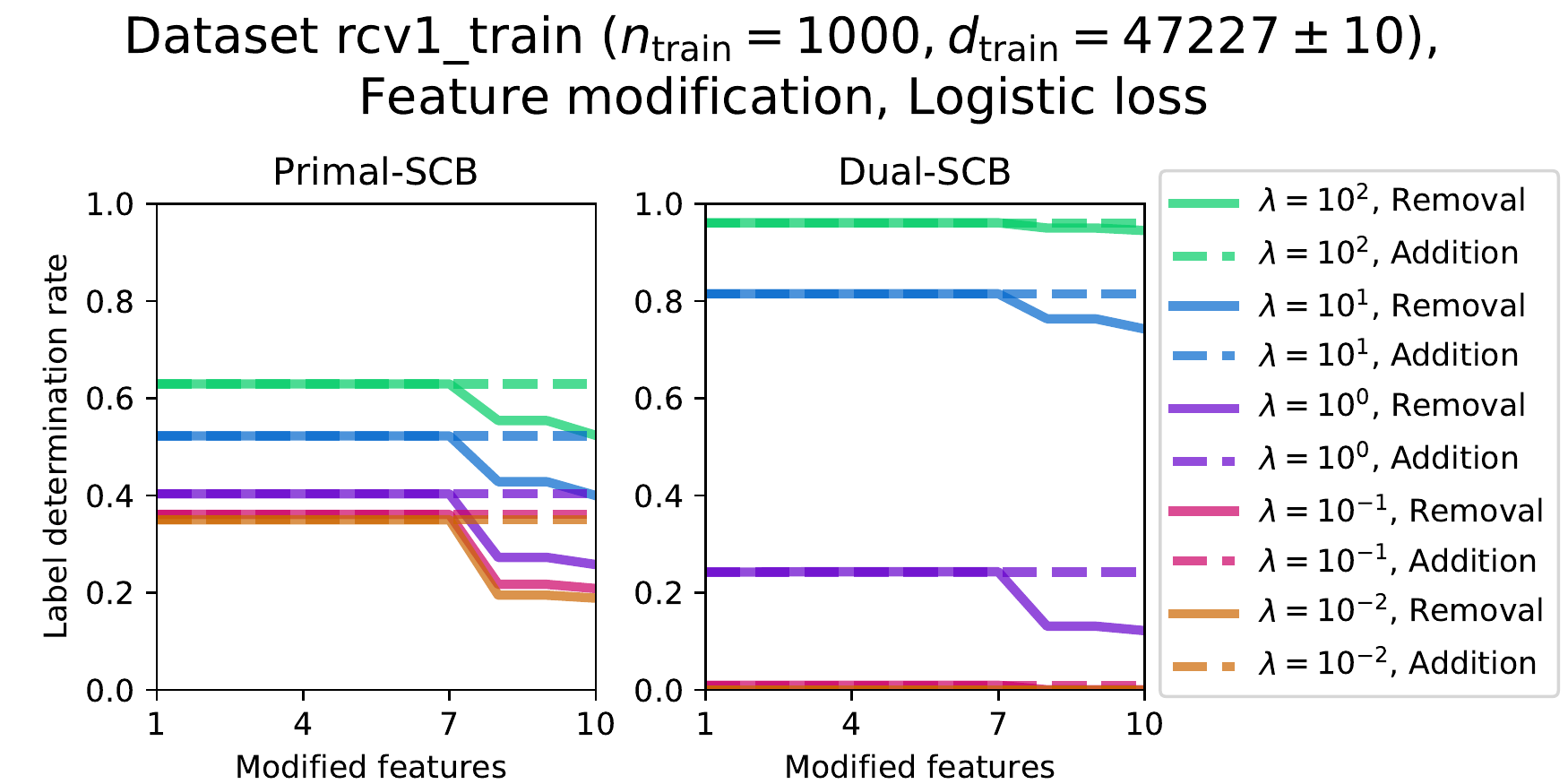}
\\
\noindent
\includegraphics[width=\hsize]{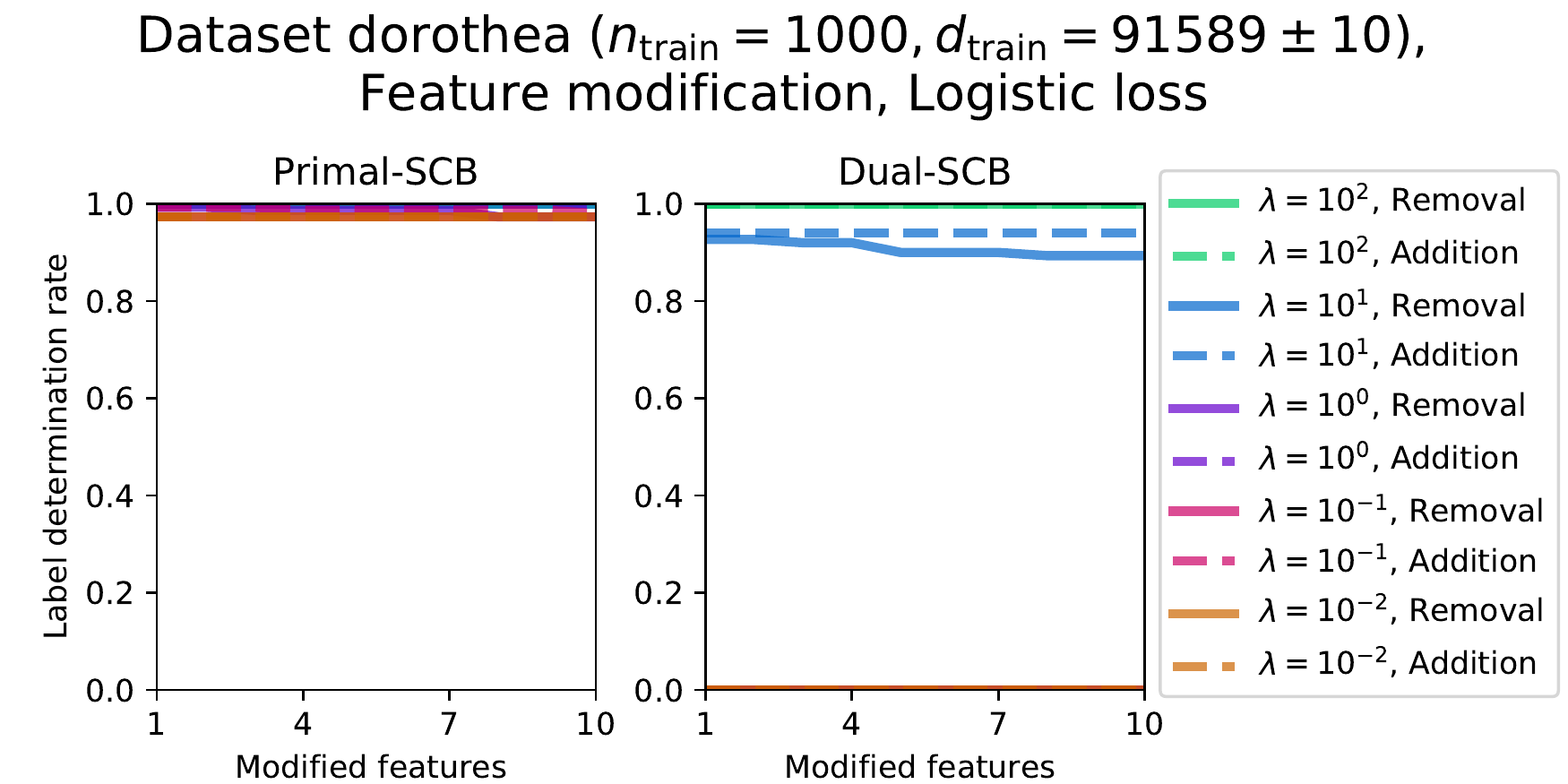}
\\
\noindent
\includegraphics[width=\hsize]{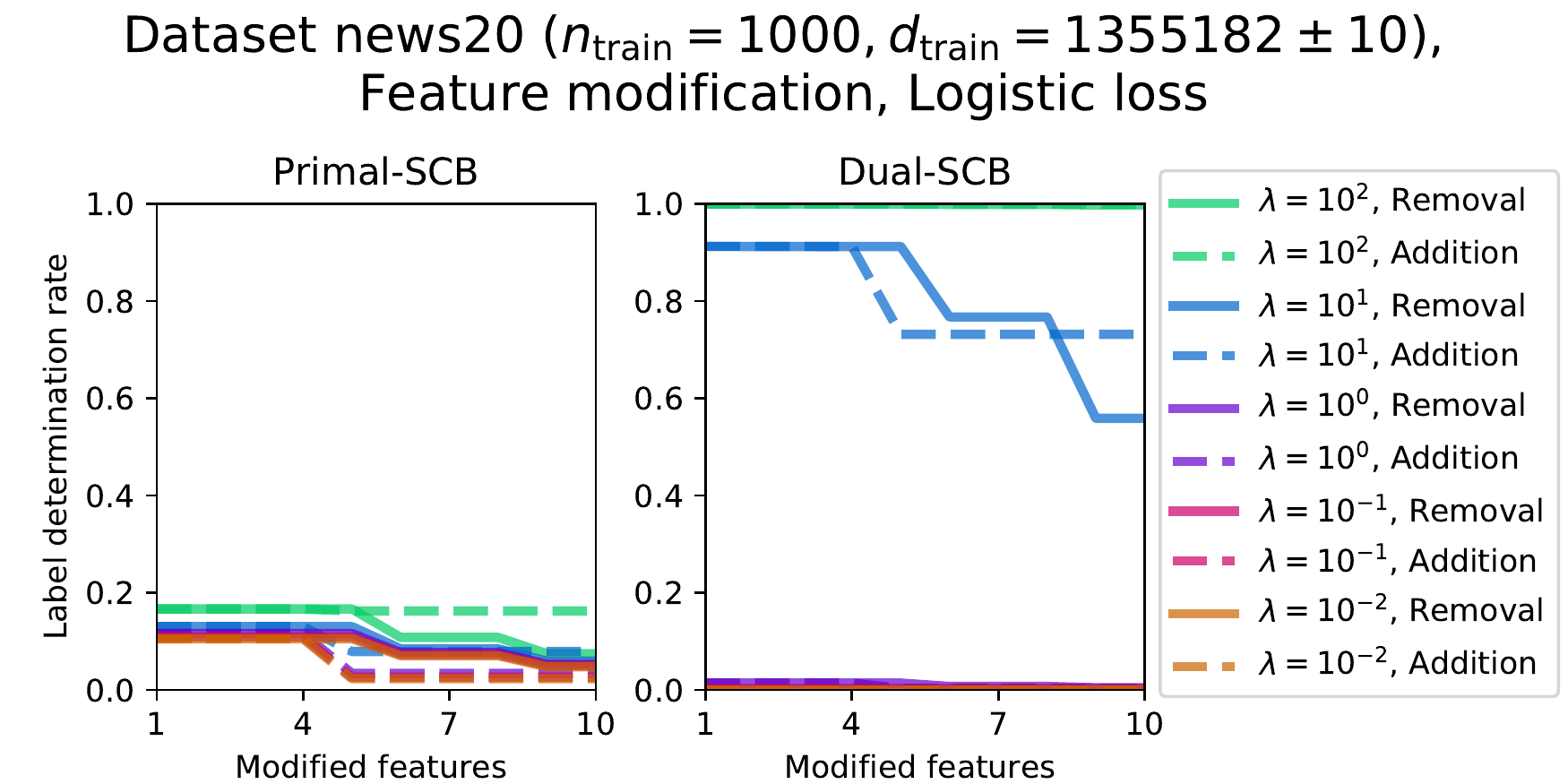}
\\
\noindent
\includegraphics[width=\hsize]{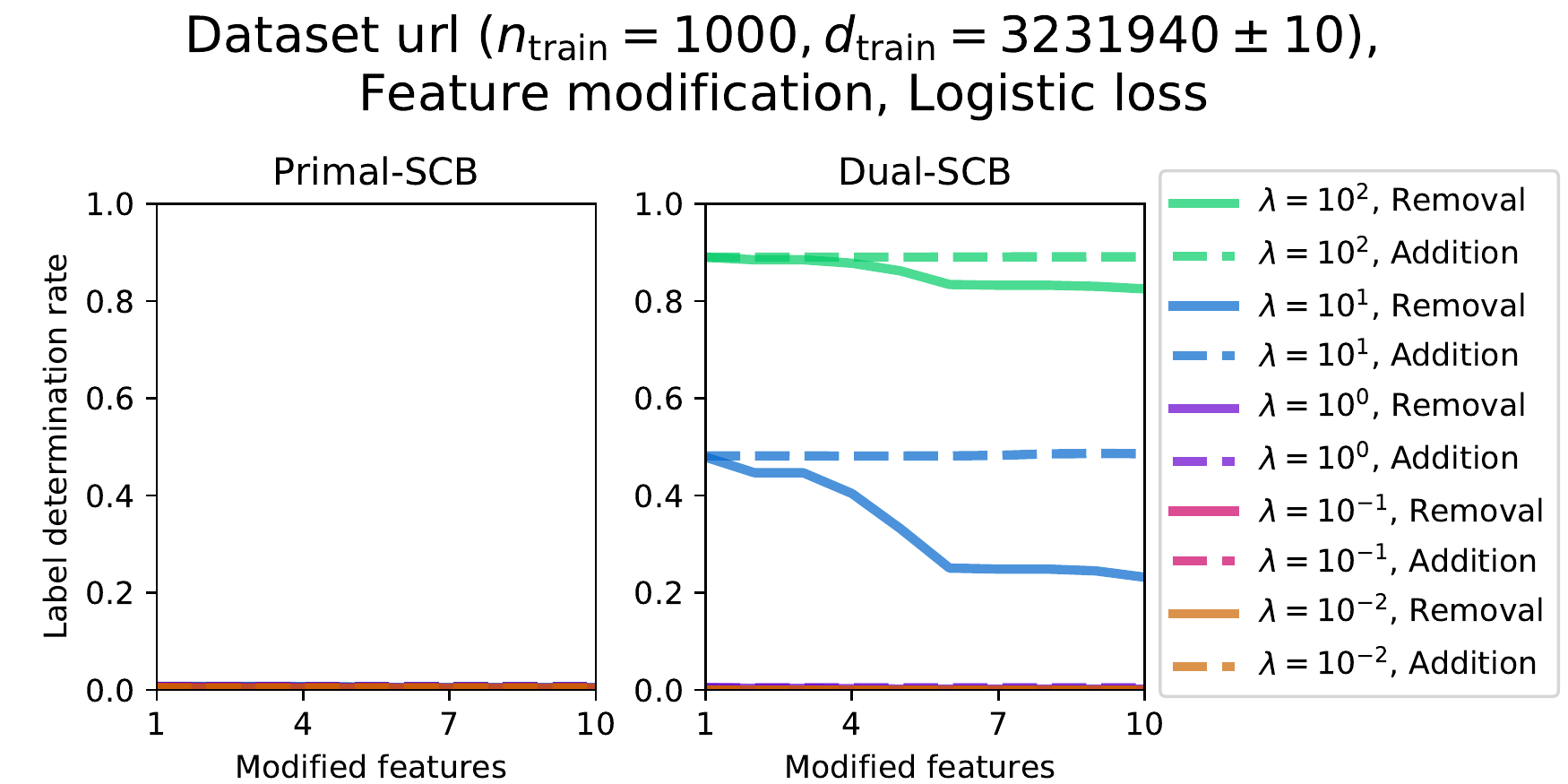}
\caption{Label determination rates for feature modifications}
\label{fig:modif-feat}
\end{minipage}
\end{tabular}
\end{figure*}

\subsubsection{Setup} \label{sec:exp-tightness-setup}

In this section, we examine the tightness of the bounds proposed in Section \ref{ch:bound-predict},
for various instance/feature modifications.
Specifically, we examine two bounds of $\Wnew$ in Corollary \ref{co:bounds-predict} (Primal-SCB and Dual-SCB) using the calculations presented in Section \ref{ch:bound-fast}.

For the experiments of instance modifications, for each dataset in Table \ref{tab:datasets} we used 99\% of instances as the training set and the rest as the test set. For the experiments of feature modifications, we used 1,000 instances as the training set and the rest as the test set. (For only the dataset {\tt url}, we used not the rest of instances but 19,000 instances as the test set since it is large.)

We then assumed the addition or removal of either instances or features from 1 to 10.
For each of these data modifications, we first conducted training, that is, we computed $\Wold$ and $\Alphaold$ by solving \eqref{eq:primal}. Then we computed the bounds of $\Wnew$ using Primal-SCB and Dual-SCB in Section \ref{ch:bound-fast} for these modifications.

To examine the tightness of the bounds of $\bm{x}^\top\Wnew$,
we use the {\em label determination rate} of the binary classification.
Given a test set $\{\bm{x}_i\}_{i\in[m]}$ ($\bm{x}_i\in\mathbb{R}^d$)
and bounds of the model parameters ${\cal W}\subset\mathbb{R}^d$ ($\Wnew\in{\cal W}$),
the label determination rate for the bound is determined by
\begin{align*}
\frac{1}{m}\sum_{i\in[m]} \mathbb{I}\left\{ \max_{\bm{w}\in{\cal W}}\bm{x}_i^\top\bm{w} < 0 \vee \min_{\bm{w}\in{\cal W}}\bm{x}_i^\top\bm{w} > 0 \right\}.
\end{align*}
This means the ratio of test instances whose binary classification results (signs of $\XtestT\Wnew$)
are determined without knowing the exact $\Wnew$.

\subsubsection{Results}

The results are shown in Figures \ref{fig:modif-inst} and \ref{fig:modif-feat}.
With all experiments, we can confirm that label determination rates decrease with the increase of modified instances or features (plots left to right), and by smaller $\lambda$ values (green to orange plots).
They are as what we expected: increasing the instance or feature modifications increases $\Gnew(\hat{\bm{w}}, \hat{\bm{\alpha}})$ in Theorem \ref{th:bounds}, and smaller $\lambda$ increases $r_P$ and $r_D$ in Theorem \ref{th:bounds}.

Furthermore, we can confirm which bounding strategy (Primal-SCB and Dual-SCB) tighten the bounds (higher label determination rate) depends not only on the datasets and $\lambda$'s:
For example, for datasets {\tt rcv1\_train} and {\tt news20}, Dual-SCB provides a higher label determination rate for larger $\lambda$ while Primal-SCB better for smaller $\lambda$.
We theoretically examined this fact in the following discussion.

\subsubsection{Theoretical discussions} \label{sec:exp-tightness-discussion}

In this section we theoretically demonstrate the tendencies of the experimental results discussed in the previous section.

To do this, we compare the size of bounds of the prediction result $\XtestT\Wnew$
between \eqref{eq:prediction-bound-primal} (Primal-SCB) and \eqref{eq:prediction-bound-dual} (Dual-SCB),
under the following conditions which the experiment used: L2-regularization $\rho(t) = (\lambda/2)t^2$,
and each feature having the L2-norm $\|X_{:j}\|_2 = \sqrt{n}$ (Table \ref{tab:datasets}).

For Primal-SCB, the size of the bound of the prediction result $\XtestT\Wnew$,
that is, the difference between the two bounds in \eqref{eq:prediction-bound-primal}, is
\begin{align}
2 r_P \|\Xtest\|_2 = \sqrt{\frac{8}{\lambda}\Gnew(\hat{\bm{w}}, \hat{\bm{\alpha}})} \|\Xtest\|_2 .
	\label{eq:bound-size-primal-SCB}
\end{align}
On the other hand, the size of the bound of the prediction result $\XtestT\Wnew$ for Dual-SCB,
that is, the difference between the two bounds in \eqref{eq:prediction-bound-dual},
is computed as follows under the approximation $\|\Xnew_{:j}\|_2 \approx \sqrt{\nnew}$:
\begin{align}
\sqrt{\frac{8\mu}{\lambda^2} \Gnew(\hat{\bm{w}}, \hat{\bm{\alpha}})} \|\Xtest\|_1.
	\label{eq:bound-size-dual-SCB-norm}
\end{align}
Proofs are given in Appendix \ref{app:exp-tightness-discussion-dual}.
This implies that the tightness of Dual-SCB is more strongly affected by $\lambda$ than Primal-SCB.

\section{Conclusion}

We assumed the case when the training dataset has been slightly modified,
and considered deriving a possible region of the trained model parameters
and then identifying the lower and upper bounds of prediction results.
The proposed GLRU method achieved this for the training methods formulated as
regularized empirical risk minimization
and derived conditions such that GLRU can provide the amount of updates (Section \ref{ch:GLRU-method}).
Moreover, we applied GLRU to LOOCV which requires a large number of training computations
by removing an instance from the dataset (Section \ref{sec:GLRU-LOOCV}),
and to stepwise feature elimination which requires a large number training computations
by removing a feature from the dataset (Section \ref{sec:GLRU-stepwise}).
Experiments demonstrated that GLRU can significantly reduce the computation time for
LOOCV (Section \ref{sec:exp-loocv}) and stepwise feature elimination (Section \ref{sec:exp-stepwise}).
Furthermore, we analyzed two GLRU strategies (Primal-SCB and Dual-SCB)
for tightness of bounds and
their dependency on the strength of regularization $\lambda$ (Section \ref{sec:exp-tightness}).

In the future work, our large interest is how much we can relax the constraint on the training method, which is currently available for regularized empirical risk minimization with linear predictions and convex losses and regularizations.
Possible extensions include handling regularization functions that are not separated (Section \ref{sec:learning}), and handling convex but not strongly convex problems (i.e., the loss function is convex but nonsmooth, and the regularization function is convex but non-strongly-convex).

\subsection*{Acknowledgments}
This work was partially supported by MEXT KAKENHI (20H00601), JST CREST (JPMJCR21D3, JPMJCR21D3), JST Moonshot R\&D (JPMJMS2033-05), JST AIP Acceleration Research (JPMJCR21U2), NEDO (JPNP18002, JPNP20006) and RIKEN Center for Advanced Intelligence Project.

\clearpage
\bibliographystyle{unsrt}
\bibliography{paper-GLRU}

\begin{thebibliography}{10}

\bibitem{orr1996radial}
Mark J.~L. Orr.
\newblock Introduction to radial basis function networks.
\newblock Technical Report, Centre for Cognitive Science, University of
  Edinburgh, 1996.

\bibitem{Nocedal99}
J.~Nocedal and S.~J. Wright.
\newblock {\em Numerical optimization}.
\newblock Springer, 1999.

\bibitem{Boyd04a}
Stephen Boyd and Lieven Vandenberghe.
\newblock {\em Convex Optimization}.
\newblock Cambridge University Press, 2004.

\bibitem{Hanada2018AAAI-Modification}
Hiroyuki Hanada, Atsushi Shibagaki, Jun Sakuma, and Ichiro Takeuchi.
\newblock Efficiently evaluating small data modification effect for large-scale
  classification in changing environment.
\newblock In {\em Proceedings of the 32nd AAAI Conference on Artificial
  Intelligence (AAAI-18)}, 2018.

\bibitem{solomonoff1989incremental}
Ray~J. Solomonoff.
\newblock A system for incremental learning based on algorithmic probability.
\newblock In {\em Proceedings of the 6th Israeli Conference on Artificial
  Intelligence, Computer Vision and Pattern Recognition}, pages 515--527, 1989.

\bibitem{gepperth2016incremental}
Alexander Gepperth and Barbara Hammer.
\newblock Incremental learning algorithms and applications.
\newblock In {\em Proceedings of the 24th European Symposium on Artificial
  Neural Networks (ESANN2016)}, pages 357--368, 2016.

\bibitem{hager1989updating}
W.~Hager.
\newblock Updating the inverse of a matrix.
\newblock {\em SIAM Review}, 31(2):221--239, 1989.

\bibitem{pan1990least}
C.~T. Pan and R.~J. Plemmons.
\newblock Least squares modifications with inverse factorizations: parallel
  implications.
\newblock {\em Advances in Parallel Computing}, 1:109--127, 1990.

\bibitem{davis2005row}
T.~Davis and W.~Hager.
\newblock Row modifications of a sparse cholesky factorization.
\newblock {\em SIAM Journal on Matrix Analysis and Applications},
  26(3):621--639, 2005.

\bibitem{gong2017label}
C.~{Gong}, D.~{Tao}, W.~{Liu}, L.~{Liu}, and J.~{Yang}.
\newblock Label propagation via teaching-to-learn and learning-to-teach.
\newblock {\em IEEE Transactions on Neural Networks and Learning Systems},
  28(6):1452--1465, 2017.

\bibitem{wohlberg2016convolutional}
B.~{Wohlberg}.
\newblock Efficient algorithms for convolutional sparse representations.
\newblock {\em IEEE Transactions on Image Processing}, 25(1):301--315, 2016.

\bibitem{bertsekas2015incremental}
Dimitri~P. Bertsekas.
\newblock Incremental gradient, subgradient, and proximal methods for convex
  optimization: {A} survey.
\newblock arXiv:1507.01030, 2015.

\bibitem{schlimmer1986case}
Jeffrey~C. Schlimmer and Douglas Fisher.
\newblock A case study of incremental concept induction.
\newblock In {\em Proceedings of the 5th AAAI National Conference on Artificial
  Intelligence}, pages 496--501, 1986.

\bibitem{giordano2019swiss}
Ryan Giordano, William Stephenson, Runjing Liu, Michael Jordan, and Tamara
  Broderick.
\newblock A swiss army infinitesimal jackknife.
\newblock In {\em Proceedings of the 22nd International Conference on
  Artificial Intelligence and Statistics (AISTATS2019)}, pages 1139--1147,
  2019.

\bibitem{rad2020scalable}
Kamiar~Rahnama Rad and Arian Maleki.
\newblock {A Scalable Estimate of the Out-of-Sample Prediction Error via
  Approximate Leave-One-Out Cross-Validation}.
\newblock {\em Journal of the Royal Statistical Society Series B: Statistical
  Methodology}, 82(4):965--996, 2020.

\bibitem{okumura2015quick}
Shota Okumura, Yoshiki Suzuki, and Ichiro Takeuchi.
\newblock Quick sensitivity analysis for incremental data modification and its
  application to leave-one-out cv in linear classification problems.
\newblock In {\em Proceedings of the 21th ACM SIGKDD International Conference
  on Knowledge Discovery and Data Mining}, pages 885--894, 2015.

\bibitem{an1989stepwise}
Hongzhi An and Lan Gu.
\newblock Fast stepwise procedures of selection of variables by using aic and
  bic criteria.
\newblock {\em Acta Mathematicae Applicatae Sinica}, 5(1):60--67, 1989.

\bibitem{lin2011vif}
Dongyu Lin, Dean~P. Foster, and Lyle~H. Ungar.
\newblock Vif regression: A fast regression algorithm for large data.
\newblock {\em Journal of the American Statistical Association},
  106(493):232--247, 2011.

\bibitem{zogalasiudem2014fast}
Barbara Zoga{\l}a-Siudem and Szymon Jaroszewicz.
\newblock Fast stepwise regression on linked data.
\newblock In {\em Proceedings of the 1st International Conference on Linked
  Data for Knowledge Discovery}, pages 12--21, 2014.

\bibitem{ghaoui2012safe}
Laurent El~Ghaoui, Vivian Viallon, and Tarek Rabbani.
\newblock Safe feature elimination for the lasso and sparse supervised learning
  problems.
\newblock {\em Pacific Journal of Optimization}, 8(4):667--698, 2012.

\bibitem{wang2013lasso}
Jie Wang, Jiayu Zhou, Peter Wonka, and Jieping Ye.
\newblock Lasso screening rules via dual polytope projection.
\newblock In {\em Advances in Neural Information Processing Systems}, pages
  1070--1078, 2013.

\bibitem{wang2013scaling}
Jie Wang, Peter Wonka, and Jieping Ye.
\newblock Scaling svm and least absolute deviations via exact data reduction.
\newblock In {\em Proceedings of The 31st International Conference on Machine
  Learning}, pages 523--531, 2014.

\bibitem{ogawa2013safe}
Kohei Ogawa, Yoshiki Suzuki, and Ichiro Takeuchi.
\newblock Safe screening of non-support vectors in pathwise svm computation.
\newblock In {\em Proceedings of the 30th International Conference on Machine
  Learning}, pages 1382--1390, 2013.

\bibitem{liu2014safe}
Jun Liu, Zheng Zhao, Jie Wang, and Jieping Ye.
\newblock {Safe Screening with Variational Inequalities and Its Application to
  Lasso}.
\newblock In {\em Proceedings of the 31st International Conference on Machine
  Learning}, pages 289--297, 2014.

\bibitem{wang2014safe}
Jie Wang, Jiayu Zhou, Jun Liu, Peter Wonka, and Jieping Ye.
\newblock A safe screening rule for sparse logistic regression.
\newblock In {\em Advances in Neural Information Processing Systems}, pages
  1053--1061, 2014.

\bibitem{xiang2017screening}
Zhen~James Xiang, Yun Wang, and Peter~J. Ramadge.
\newblock Screening tests for lasso problems.
\newblock {\em IEEE Transactions on Pattern Analysis and Machine Intelligence},
  39(5):1008--1027, 2017.

\bibitem{fercoq2015mind}
Olivier Fercoq, Alexandre Gramfort, and Joseph Salmon.
\newblock Mind the duality gap: safer rules for the lasso.
\newblock In {\em Proceedings of the 32nd International Conference on Machine
  Learning}, pages 333--342, 2015.

\bibitem{ndiaye2015gap}
Eugene Ndiaye, Olivier Fercoq, Alexandre Gramfort, and Joseph Salmon.
\newblock Gap safe screening rules for sparse multi-task and multi-class
  models.
\newblock In {\em Advances in Neural Information Processing Systems}, pages
  811--819, 2015.

\bibitem{Zimmert2015}
Julian Zimmert, Christian~Schroeder de~Witt, Giancarlo Kerg, and Marius Kloft.
\newblock Safe screening for support vector machines.
\newblock NIPS 2015 Workshop on Optimization in Machine Learning (OPT), 2015.

\bibitem{shibagaki2015regularization}
Atsushi Shibagaki, Yoshiki Suzuki, Masayuki Karasuyama, and Ichiro Takeuchi.
\newblock Regularization path of cross-validation error lower bounds.
\newblock In {\em Advances in Neural Information Processing Systems}, pages
  1666--1674, 2015.

\bibitem{nakagawa2016safe}
Kazuya Nakagawa, Shinya Suzumura, Masayuki Karasuyama, Koji Tsuda, and Ichiro
  Takeuchi.
\newblock Safe pattern pruning: An efficient approach for predictive pattern
  mining.
\newblock In {\em Proceedings of the 22nd ACM SIGKDD International Conference
  on Knowledge Discovery and Data Mining}, pages 1785--1794. ACM, 2016.

\bibitem{shibagaki2016simultaneous}
Atsushi Shibagaki, Masayuki Karasuyama, Kohei Hatano, and Ichiro Takeuchi.
\newblock Simultaneous safe screening of features and samples in doubly sparse
  modeling.
\newblock In {\em International Conference on Machine Learning}, pages
  1577--1586, 2016.

\bibitem{gabel2015monitoring}
Moshe Gabel, Daniel Keren, and Assaf Schuster.
\newblock Monitoring least squares models of distributed streams.
\newblock In {\em Proceedings of the 21th ACM SIGKDD International Conference
  on Knowledge Discovery and Data Mining}, pages 319--328. ACM, 2015.

\bibitem{tibshirani2005sparsity}
Robert Tibshirani, Michael Saunders, Saharon Rosset, Ji~Zhu, and Keith Knight.
\newblock Sparsity and smoothness via the fused lasso.
\newblock {\em Journal of the Royal Statistical Society: Series B (Statistical
  Methodology)}, 67(1):91--108, 2005.

\bibitem{rockafellar1970convex}
Ralph~Tyrell Rockafellar.
\newblock {\em Convex analysis}.
\newblock Princeton university press, 1970.

\bibitem{hiriart1993convex}
Jean-Baptiste Hiriart-Urruty and Claude Lemar{\'e}chal.
\newblock {\em Convex Analysis and Minimization Algorithms II: Advanced Theory
  and Bundle Methods}.
\newblock Springer, 1993.

\bibitem{Dua2017UCI}
Dua Dheeru and Efi Karra~Taniskidou.
\newblock {UCI} machine learning repository, 2017.

\bibitem{libsvmDataset}
Chih-Chung Chang and Chih-Jen Lin.
\newblock Libsvm: A library for support vector machines.
\newblock {\em ACM Transactions on Intelligent Systems and Technology (TIST)},
  2(3):27, 2011.
\newblock Datasets are provided in authors' website:
  https://www.csie.ntu.edu.tw/~cjlin/libsvmtools/datasets/.

\bibitem{Fan08b}
Rong-En Fan, Kai-Wei Chang, Cho-Jui Hsieh, Xiang-Rui Wang, and Chih-Jen Lin.
\newblock {LIBLINEAR}: A library for large linear classification.
\newblock {\em Journal of Machine Learning Research}, 9:1871--1874, 2008.

\end{thebibliography}

\clearpage
\appendix
\section*{Appendix}

\section{Examples of loss functions and regularization functions} \label{app:losses-regularizations}

In this appendix, we present some famous examples of loss and regularization functions
for regularized ERMs (Section \ref{sec:learning}) available for the proposed method,
including the derivatives and convex conjugates required for the proposed method.

In this appendix, for notational simplicity, we describe a subgradient as a scalar if it is a singleton set.

\subsection{Loss functions for regressions}

We assume $y\in\mathbb{R}$.

\begin{itemize}
\item {\bf Squared loss} \\
	$\ell_y(t) = \frac{1}{2}(t - y)^2$ \\
	$\ell_y(t)$: 1-smooth. \\
	$\partial\ell_y(t) = t - y$,\quad
	$\ell_y^*(t) = \frac{1}{2}t(t+2y)$.
\item {\bf Huber loss} ($\gamma > 0$: hyperparameter)\\
	$\ell_y(t) = \frac{1}{2}\left\{(t - y)^2 - ([|t - y| - \gamma]_+)^2\right\}$ \\
	$\ell_y(t)$: 1-smooth. \\
	$\partial\ell_y(t) = \mathrm{sign}(t-y)\min\{\gamma, |t-y|\}$. \\
	$\displaystyle\ell_y^*(t) = \begin{cases}
		\frac{1}{2}t(t+2y) & (|t|\leq\gamma) \\
		+\infty & (\text{otherwise})
		\end{cases}$.
\end{itemize}

\subsection{Loss functions for binary classifications}

We assume $y\in\{-1, +1\}$.

\begin{itemize}
\item {\bf Squared hinge loss} \\
	$\ell_y(t) = ([1 - yt]_+)^2$. \\
	$\ell_y(t)$: 2-smooth. \\
	$\partial\ell_y(t) = -2y[1 - yt]_+$. \\
	$\displaystyle\ell_y^*(t) = \begin{cases}
		\frac{t^2 + 4yt}{4} & (yt \leq 0) \\
		+\infty & (\text{otherwise})
		\end{cases}$
\item {\bf Smoothed hinge loss} ($\gamma > 0$: hyperparameter) \\
	$\ell_y(t) := \begin{cases}
		0, & \text{(if $t \geq 1$)} \\
		\frac{1}{2\gamma}(1 - yt)^2, & \text{(if $1 - \gamma \leq yt \leq 1$)} \\
		1 - yt - \frac{\gamma}{2}. & \text{(if $yt \leq 1 - \gamma$)}
	\end{cases}$ \\
	$\ell_y(t)$: $\gamma$-smooth. \\
	$\partial\ell_y(t) = \begin{cases}
		0, & \text{(if $yt \geq 1$)} \\
		-\frac{y}{\gamma}(1 - yt), & \text{(if $1 - \gamma \leq yt \leq 1$)} \\
		-y. & \text{(if $yt \leq 1 - \gamma$)}
	\end{cases}$ \\
	$\ell_y^*(t) = \begin{cases}
		\frac{1}{2\gamma}[(\gamma yt + y)^2 - 1], & \text{(if $-1\leq yt\leq 0$)} \\
		+\infty. & (\text{otherwise})
	\end{cases}$
\item {\bf Logistic loss} \\
	$\ell_y(t) = \log(1 + e^{-yt})$. \\
	$\ell_y(t)$: (1/4)-smooth. \\
	$\displaystyle \partial\ell_y(t) = -\frac{e^{-yt}}{1 + e^{-yt}}$. \\
	$\ell_y^*(t) = \begin{cases}
		0, & \text{(if $yt\in\{-1, 0\}$)} \\
		\multicolumn{2}{l}{(yt+1)\log(yt+1) - yt\log(-yt),} \\& \text{(if $-1 < yt < 0$)} \\
		+\infty. & \text{(otherwise)} \\
	\end{cases}$
\end{itemize}

\subsection{Regularization functions} \label{ch:regfuncs}

\begin{itemize}
\item {\bf L2-regularization} ($\lambda>0$: hyperparameter) \\
	$\rho_j(t) = \frac{\lambda}{2}t^2$. \\
	$\rho_j(t)$: $\lambda$-strongly convex.\\
	$\rho_j^*(t) = \frac{t^2}{2}$,\quad
	$\partial\rho_j^*(t) = t$.
\item {\bf Elastic net regularization} ($\lambda>0$, $\kappa>0$: hyperparameters) \\
	$\rho_j(t) = \frac{\lambda}{2}t^2 + \kappa|t|$. \\
	$\rho_j(t)$: $\lambda$-strongly convex.\\
	$\rho_j^*(t) = \frac{1}{2}([|t| - \kappa]_+)^2$.\\
	$\partial\rho_j^*(t) = \mathrm{sign}(t)\max\{|t| - \kappa, 0\}$.
\item {\bf L2-regularization \& Elastic net regularization with intercept} ($\lambda>0$, $\kappa\geq 0$: hyperparameters) \\
	$\rho_j(t) = \begin{cases}
		\frac{\lambda}{2}t^2 + \kappa|t|, & (j\in[d-1]) \\
		0. & (j = d)
		\end{cases}
	$\\
	$\rho_j(t)$: $\lambda$-strongly convex for $j\in[d-1]$, not strongly convex for $j=d$.
	\begin{itemize}
	\item If $j\in[d-1]$, \\
		$\rho_j^*(t) = \frac{1}{2\lambda}([|t| - \kappa]_+)^2$,\\
		$\partial\rho_j^*(t) = \frac{1}{\lambda}\mathrm{sign}(t)\max\{|t| - \kappa, 0\}$.
	\item If $j = d$, \\
		$\rho_j^*(t) = \begin{cases}
			0, & (t = 0) \\
			+\infty, & (t\neq 0)
			\end{cases}
		$\\
		$\partial\rho_j^*(t) = \begin{cases}
			-\infty,            & (t < 0) \\
			[-\infty, +\infty], & (t = 0) \\
			+\infty.            & (t > 0) \\
			\end{cases}
		$
	\end{itemize}
\item {\bf L1-regularization with intercept} ($\lambda>0$: hyperparameter) \\
	$\rho_j(t) = \begin{cases}
		\lambda|t|, & (j\in[d-1]) \\
		0. & (j = d)
		\end{cases}
	$\\
	$\rho_j(t)$: not strongly convex.
	\begin{itemize}
	\item If $j\in[d-1]$, \\
		$\rho_j^*(t) = \begin{cases}
			0, & (|t|\leq\lambda) \\
			+\infty, & (|t| > \lambda)
			\end{cases}
		$\\
		$\partial\rho_j^*(t) = \begin{cases}
			-\infty, & (t < -\lambda) \\
			[-\infty, 0], & (t = -\lambda) \\
			0, & (|t|\leq\kappa) \\
			[0, +\infty], & (t = \lambda) \\
			+\infty. & (t > \lambda)
			\end{cases}
		$
	\item If $j = d$, \\
		$\rho_j^*(t) = \begin{cases}
			0, & (t = 0) \\
			+\infty, & (t\neq 0)
			\end{cases}
		$\\
		$\partial\rho_j^*(t) = \begin{cases}
			-\infty,            & (t < 0) \\
			[-\infty, +\infty], & (t = 0) \\
			+\infty.            & (t > 0) \\
			\end{cases}
		$
	\end{itemize}
\end{itemize}

\section{Proofs}

\subsection{Convex conjugates of $\PrimL{n}$ and $\PrimR$ in Section \ref{sec:dual-problem}} \label{app:primal-part-conjugate}

\begin{align*}
& \PrimL{n}^*(\bm{u}) = \sup_{\bm{u}^\prime\in\mathbb{R}^n}\Bigl\{\bm{u}^\top\bm{u}^\prime - \PrimL{n}(\bm{u}^\prime) \Bigr\}
	= \sup_{\bm{u}^\prime\in\mathbb{R}^n}\Bigl\{\sum_{i\in[n]} \Bigl[ u_i u^\prime_i - \frac{1}{n} \ell_{y_i}(u^\prime_i) \Bigr] \Bigr\} \\
& = \sup_{\bm{u}^\prime\in\mathbb{R}^n}\Bigl\{\frac{1}{n} \sum_{i\in[n]} [ n u_i u^\prime_i - \ell_{y_i}(u^\prime_i) ] \Bigr\}
	= \frac{1}{n} \sum_{i\in[n]} \sup_{u_i^\prime\in\mathbb{R}} \{ n u_i u^\prime_i - \ell_{y_i}(u^\prime_i) \}
	= \frac{1}{n} \sum_{i\in[n]} \ell^*_{y_i}(n u_i). \\
& \PrimR^*(\bm{v}) = \sup_{\bm{v}^\prime\in\mathbb{R}^d} \{ \bm{v}^\top\bm{v}^\prime - PR(\bm{v}) \}
	= \sup_{\bm{v}^\prime\in\mathbb{R}^d}\Bigl\{ \sum_{j\in[d]} [ v_j v^\prime_j - \rho_j(v^\prime_j) ] \Bigr\} \\
& = \sum_{j\in[d]} \sup_{v^\prime_j\in\mathbb{R}} \{ v_j v^\prime_j - \rho_j(v^\prime_j) \}
	= \sum_{j\in[d]} \rho^*_j(v_j).
\end{align*}

\subsection{Proof of Theorem \ref{th:bounds}} \label{app:proof-th:bounds}

\begin{lemma} \label{lm:distance-to-optimum}
For a $\lambda$-strongly convex function $f(\bm{v}): \mathbb{R}^k\to\mathbb{R}\cup\{+\infty\}$,
let $\bm{v}^* := \targmin_{\bm{v}\in\mathrm{dom}f} f(\bm{v})$ be the minimizer of $f$.
Then we have
\begin{align*}
& \|\bm{v} - \bm{v}^*\|_2 \leq \sqrt{\frac{2}{\lambda}[f(\bm{v}) - f(\bm{v}^*)]} \qquad \forall \bm{v}\in\mathbb{R}^k.
\end{align*}
\end{lemma}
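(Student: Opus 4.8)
The plan is to apply the strong-convexity inequality \eqref{eq:strongly-convex} with the minimizer $\bm{v}^*$ as the base point, exploiting the fact that the zero vector is a subgradient there. First I would dispose of the degenerate case $\bm{v}\notin\mathrm{dom}f$: then $f(\bm{v})=+\infty$, so the right-hand side of the claimed bound is $+\infty$ and the inequality holds trivially. Hence I may restrict attention to $\bm{v}\in\mathrm{dom}f$, and note that $\bm{v}^*\in\mathrm{dom}f$ since it is the minimizer of $f$.

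The crucial observation is that $\bm{0}\in\partial f(\bm{v}^*)$. Indeed, reading the subgradient definition \eqref{eq:subgradient} at the point $\bm{v}^*$ with candidate subgradient $\bm{0}$, the condition $\bm{0}\in\partial f(\bm{v}^*)$ says exactly that $f(\bm{v}')-f(\bm{v}^*)\geq \bm{0}^\top(\bm{v}'-\bm{v}^*)=0$ for every $\bm{v}'\in\mathbb{R}^k$, which is precisely the statement that $\bm{v}^*$ is a global minimizer of $f$. Since $\bm{v}^*:=\targmin_{\bm{v}\in\mathrm{dom}f} f(\bm{v})$ by hypothesis, this holds.

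Next I would instantiate \eqref{eq:strongly-convex} by choosing the base point (denoted $\bm{v}$ there) to be $\bm{v}^*$, the associated subgradient to be $\bm{g}=\bm{0}\in\partial f(\bm{v}^*)$, and the free point (denoted $\bm{u}$ there) to be the arbitrary $\bm{v}$ of the lemma. This yields
\begin{align*}
f(\bm{v}) - f(\bm{v}^*) \geq \bm{0}^\top(\bm{v}-\bm{v}^*) + \frac{\lambda}{2}\|\bm{v}-\bm{v}^*\|_2^2 = \frac{\lambda}{2}\|\bm{v}-\bm{v}^*\|_2^2.
\end{align*}
Rearranging gives $\|\bm{v}-\bm{v}^*\|_2^2 \leq \frac{2}{\lambda}[f(\bm{v})-f(\bm{v}^*)]$, and because $\bm{v}^*$ minimizes $f$ the right-hand side is nonnegative, so taking square roots delivers the claimed bound.

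I expect no real obstacle in this argument; it is essentially a one-line consequence of strong convexity evaluated at the optimum. The only points demanding care are the justification that $\bm{0}\in\partial f(\bm{v}^*)$ (immediate from the subgradient definition together with global optimality) and the handling of the case $\bm{v}\notin\mathrm{dom}f$, both of which are routine.
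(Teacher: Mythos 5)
Your proof is correct and is essentially identical to the paper's own argument: both establish $\bm{0}\in\partial f(\bm{v}^*)$ from optimality and then instantiate the strong-convexity inequality \eqref{eq:strongly-convex} with $\bm{g}\gets\bm{0}$ at the minimizer. Your additional remarks (the $\bm{v}\notin\mathrm{dom}f$ case and the nonnegativity of the right-hand side before taking square roots) are sound refinements that the paper leaves implicit.
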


\begin{proof}
Because $\bm{v}^*$ is a minimizer of $f$,
by the definition of the subgradient \eqref{eq:subgradient},
the zero vector $\bm{0}$ must be a subgradient of $f$ at $\bm{v}^*$ \cite{rockafellar1970convex}.
Thus we apply \eqref{eq:strongly-convex} as $\bm{g} \gets \bm{0}$ and obtain
\begin{align*}
& f(\bm{v}) - f(\bm{v}^*)
	\geq \bm{0}^\top(\bm{v} - \bm{v}^*) + \frac{\lambda}{2}\|\bm{v} - \bm{v}^*\|_2^2
	\geq \frac{\lambda}{2}\|\bm{v} - \bm{v}^*\|_2^2.
\end{align*}
\end{proof}

\begin{lemma} \label{lm:sphere-to-interval}
For any vectors $\bm{v}, \bm{v}^\prime, \bm{\xi}\in\mathbb{R}^k$ and $R > 0$,
\begin{align*}
& \|\bm{v}^\prime - \bm{v}\| \leq R \\
& \Rightarrow
	\bm{\xi}^\top\bm{v} - R\|\bm{\xi}\|_2 \leq \bm{\xi}^\top\bm{v}^\prime \leq \bm{\xi}^\top\bm{v} + R\|\bm{\xi}\|_2.
\end{align*}
\end{lemma}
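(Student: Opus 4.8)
The plan is to reduce the claim to a single application of the Cauchy--Schwarz inequality. First I would rewrite the quantity to be bounded as a perturbation of the reference value $\bm{\xi}^\top\bm{v}$ by decomposing
\begin{align*}
\bm{\xi}^\top\bm{v}^\prime = \bm{\xi}^\top\bm{v} + \bm{\xi}^\top(\bm{v}^\prime - \bm{v}).
\end{align*}
With this decomposition, the entire statement reduces to controlling the size of the correction term $\bm{\xi}^\top(\bm{v}^\prime - \bm{v})$, since once we show $|\bm{\xi}^\top(\bm{v}^\prime - \bm{v})| \leq R\|\bm{\xi}\|_2$, the two-sided inequality in the conclusion follows immediately by adding $\bm{\xi}^\top\bm{v}$ to each side of $-R\|\bm{\xi}\|_2 \leq \bm{\xi}^\top(\bm{v}^\prime - \bm{v}) \leq R\|\bm{\xi}\|_2$.

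Next I would bound the correction term by Cauchy--Schwarz, giving
\begin{align*}
|\bm{\xi}^\top(\bm{v}^\prime - \bm{v})| \leq \|\bm{\xi}\|_2 \, \|\bm{v}^\prime - \bm{v}\|_2,
\end{align*}
and then invoke the hypothesis $\|\bm{v}^\prime - \bm{v}\| \leq R$ to replace the second factor by $R$. This yields exactly $|\bm{\xi}^\top(\bm{v}^\prime - \bm{v})| \leq R\|\bm{\xi}\|_2$, which combined with the decomposition above completes the argument.

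The only point requiring attention — and it is minor rather than a genuine obstacle — is the convention on the unsubscripted norm appearing in the hypothesis $\|\bm{v}^\prime - \bm{v}\| \leq R$. For Cauchy--Schwarz to produce precisely the factor $\|\bm{\xi}\|_2$ in the conclusion, this norm must be interpreted as the Euclidean ($L^2$) norm, consistent with the $\|\cdot\|_2$ used throughout the surrounding results; I would state this explicitly at the outset. Beyond that, the proof is entirely elementary and carries no real difficulty, as it rests only on the standard inner-product inequality.
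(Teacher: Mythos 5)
Your proof is correct and follows essentially the same route as the paper's: both bound $\bm{\xi}^\top(\bm{v}^\prime - \bm{v})$ via the Cauchy--Schwarz inequality, invoke the hypothesis $\|\bm{v}^\prime - \bm{v}\|_2 \leq R$, and then add $\bm{\xi}^\top\bm{v}$ to both sides. Your remark about reading the unsubscripted norm as the Euclidean norm matches the paper's implicit convention, so there is nothing to fix.
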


\begin{proof}
From the definition of the inner product between $\bm{\xi}$ and $(\bm{v}^\prime - \bm{v})$,
\begin{align*}
& -\|\bm{\xi}\|_2 \|\bm{v}^\prime - \bm{v}\|_2 \leq \bm{\xi}^\top (\bm{v}^\prime - \bm{v}) \leq \|\bm{\xi}\|_2 \|\bm{v}^\prime - \bm{v}\|_2.
\end{align*}
By the assumption we have
\begin{align*}
& -R \|\bm{\xi}\|_2 \leq \bm{\xi}^\top (\bm{v}^\prime - \bm{v}) \leq R \|\bm{\xi}\|_2, \\
& \bm{\xi}^\top\bm{v} - R \|\bm{\xi}\|_2 \leq \bm{\xi}^\top \bm{v}^\prime \leq \bm{\xi}^\top\bm{v} + R \|\bm{\xi}\|_2.
\end{align*}
\end{proof}

\begin{proof}[Proof of Theorem \ref{th:bounds} (i-P) and (i-D)]
\eqref{eq:primal-bound} is proved as follows. Because $\rho_j(t)$ is $\lambda$-strongly convex and $\ell$ is convex, $\Pnew$ is $\lambda$-strongly convex. Thus
\begin{align*}
& \|\Wnew - \hat{\bm{w}}\|_2 \\
& \leq \sqrt{(2/\lambda)[\Pnew(\hat{\bm{w}}) - \Pnew(\Wnew)]} \quad (\because \text{Lemma \ref{lm:distance-to-optimum}})\\
& \leq \sqrt{(2/\lambda)[\Pnew(\hat{\bm{w}}) - \Dnew(\Alphanew)]} \quad (\because \eqref{eq:strong-duality})\\
& \leq \sqrt{(2/\lambda)[\Pnew(\hat{\bm{w}}) - \Dnew(\hat{\bm{\alpha}})]} := r_P. \quad (\because \text{$\Alphanew$ is the maximizer of $\Dnew$})
\end{align*}

\eqref{eq:primal-dual-bound} is proved as follows. First, we evaluate $\Xnew_{i:}\Wnew$ using Lemma \ref{lm:sphere-to-interval} as
\begin{align*}
& \Xnew_{i:}\hat{\bm{w}} - r_P \|\Xnew_{i:}\|_2 \leq \Xnew_{i:}\Wnew \leq \Xnew_{i:}\hat{\bm{w}} + r_P \|\Xnew_{i:}\|_2.
\end{align*}
Moreover, because $\ell_{y_i}$ is assumed to be convex, $\partial \ell_{y_i}$ monotonically increases\footnote{
Here we must carefully define the monotonicity because $\partial \ell_{y_i}$ is a multivalued function. Formally, $\partial \ell_{y_i}$ is defined as monotonically increasing if $p \leq q$ holds for any $s < t$ and for any choice of $p\in\partial \ell_{y_i}(s)$ and $q\in\partial \ell_{y_i}(t)$.
}. Thus
\begin{align*}
\minpartial \ell_{y_i}(\Xnew_{i:}\hat{\bm{w}} - r_P \|\Xnew_{i:}\|_2)
& \leq \minpartial \ell_{y_i}(\Xnew_{i:}\Wnew)
	\leq \maxpartial \ell_{y_i}(\Xnew_{i:}\Wnew) \\
& \leq \maxpartial \ell_{y_i}(\Xnew_{i:}\hat{\bm{w}} + r_P \|\Xnew_{i:}\|_2).
\end{align*}
Finally, we apply the KKT condition \eqref{eq:KKT-primal2dual} to $\Pnew$ and $\Dnew$, that is, $-\alpha_i \in \partial \ell_{y_i}(\Xnew_{i:}\Wnew)$. This proves that \eqref{eq:primal-dual-bound} holds.
\end{proof}

\begin{proof}[Proof of Theorem \ref{th:bounds} (ii-D)]
Because $\ell$ is $\mu$-smooth and $\rho_j$ is convex, $\ell^*$ is $(1/\mu)$-strongly convex and $\Dnew$ is $(1/\nnew\mu)$-strongly convex. Thus, with a similar discussion to proving \eqref{eq:primal-bound} (See ``Proof of Theorem \ref{th:bounds} (i-P) and (i-D)'' above) we have $\|\Alphanew - \hat{\bm{\alpha}}\|_2 \leq r_D := \sqrt{2\nnew\mu[\Pnew(\hat{\bm{w}}) - \Dnew(\hat{\bm{\alpha}})]}$.
\end{proof}

\begin{proof}[Proof of Theorem \ref{th:bounds} (ii-P)]
First we evaluate $\Xnew_{:j}\Alphanew$ using Lemma \ref{lm:sphere-to-interval} as
\begin{align*}
& {\Xnew_{:j}}^\top\hat{\bm{\alpha}} - r_D \|\Xnew_{:j}\|_2 \leq {\Xnew_{:j}}^\top\Alphanew \leq {\Xnew_{:j}}^\top\hat{\bm{\alpha}} + r_D \|\Xnew_{:j}\|_2.
\end{align*}
Moreover, because $\rho_j$ is assumed to be convex, $\rho^*_j$ is also convex and thus $\partial \rho^*_j$ is monotonically increasing. Thus
\begin{align*}
& \minpartial \rho^*_j\Bigl(\frac{1}{\nnew} \Bigl[ {\Xnew_{:j}}^\top\hat{\bm{\alpha}} - r_D \|\Xnew_{:j}\|_2 \Bigr] \Bigr) \\
& \leq \minpartial \rho^*_j\Bigl(\frac{1}{\nnew} {\Xnew_{:j}}^\top\Alphanew \Bigr)
	\leq \maxpartial \rho^*_j\Bigl(\frac{1}{\nnew} {\Xnew_{:j}}^\top\Alphanew \Bigr) \\
& \leq \maxpartial \rho^*_j\Bigl(\frac{1}{\nnew} \Bigl[ {\Xnew_{:j}}^\top\hat{\bm{\alpha}} + r_D \|\Xnew_{:j}\|_2 \Bigr] \Bigr).
\end{align*}
Finally, we apply the KKT condition \eqref{eq:KKT-dual2primal} to $\Pnew$ and $\Dnew$, that is, $\wnew{j}\in\partial\rho_j^*\Bigl(\frac{1}{\nnew}{\Xnew_{:j}}^\top\Alphanew\Bigr)$. This proves case (ii-P).
\end{proof}

\subsection{Calculation of Remark \ref{rm:dual2primal-input-feasible}} \label{app:dual2primal-input-feasible}

In the proof in Appendix \ref{app:proof-th:bounds} for (ii-P) of Theorem \ref{th:bounds},
to calculate the lower and upper bounds of $\wnew{j}$,
we used only Lemma \ref{lm:sphere-to-interval} to calculate $\underline{F}_j(\hat{\bm{\alpha}})$ and $\overline{F}_j(\hat{\bm{\alpha}})$.
However, we can derive tighter bounds of $\wnew{j}$ by additionally assuming that $\hat{\bm{\alpha}}$ is in the domain of $\Dual_{\Xnew}$.

First, we compute $\Dual_{\Xnew}$. From \eqref{eq:conjugate-finite} and \eqref{eq:dual} we have:
\begin{align}
& \mathrm{dom} D_X = \{ \bm{\alpha}\in\mathbb{R}^d \mid \nonumber\\
& \phantom{\mathrm{dom} D_X = \{}
	-\maxranpartial{\ell_{y_i}} \leq \alpha_i \leq -\minranpartial{\ell_{y_i}} ~\forall i\in[n], \label{eq:loss-conj-feasible}\\
& \phantom{\mathrm{dom} D_X = \{}
	\minranpartial{\rho_j} \leq \frac{1}{n}X_{:j}^\top\bm{\alpha} \leq \maxranpartial{\rho_j}~\forall j\in[d]
	\}. \label{eq:reg-conj-feasible}
\end{align}

Then, we can redefine $\underline{F}_j$ and $\overline{F}_j$ as:
\begin{subequations}
\label{eq:dual2primal-input-feasible}
\begin{align}
\underline{F}_j(\hat{\bm{\alpha}}) & := \max\{ \XnewT_{:j}\hat{\bm{\alpha}} - r_D \|\XnewT_{:j}\|_2, \\
	& \phantom{:= \max\{} \min_{\bm{\alpha}\in\mathrm{dom}\Dual_{\Xnew}} \XnewT_{:j}\bm{\alpha} \} \\
& = \max\{ \XnewT_{:j}\hat{\bm{\alpha}} - r_D \|\XnewT_{:j}\|_2, \\
	& \phantom{:= \max\{} \min_{\bm{\alpha}:~\text{satisfying}~\eqref{eq:loss-conj-feasible}} \XnewT_{:j}\bm{\alpha}, \\
	& \phantom{:= \max\{} \min_{\bm{\alpha}:~\text{satisfying}~\eqref{eq:reg-conj-feasible}} \XnewT_{:j}\bm{\alpha} \} \\
& = \max\{ \XnewT_{:j}\hat{\bm{\alpha}} - r_D \|\XnewT_{:j}\|_2, \\
	& \phantom{:= \max\{} \MinLin{\{-\maxranpartial{\ell_{y_i}}\}_i}{\{-\minranpartial{\ell_{y_i}}\}_i}(\Xnew_{:j}), \\
	& \phantom{:= \max\{} \nnew\cdot\minranpartial{\rho_j} \} \\
\overline{F}_j(\hat{\bm{\alpha}})
	& := \min\{ \XnewT_{:j}\hat{\bm{\alpha}} + r_D \|\XnewT_{:j}\|_2, \\
	& \phantom{:= \max\{} \MaxLin{\{-\maxranpartial{\ell_{y_i}}\}_i}{\{-\minranpartial{\ell_{y_i}}\}_i}(\Xnew_{:j}), \\
	& \phantom{:= \max\{} \nnew\cdot\maxranpartial{\rho_j} \}.
\end{align}
\end{subequations}

\section{Proofs of calculations of duality gaps} \label{app:computation-dualitygap}

In this appendix, we present the proofs for the results given in Section \ref{ch:bound-fast}, that is,
the calculation of $\Gnew(\hat{\bm{w}}, \hat{\bm{\alpha}})$
when $\hat{\bm{w}}$ and $\hat{\bm{\alpha}}$ were obtained from $\Wold$ and $\Wnew$, respectively.

\subsection{For instance removals: $\nnew < \nold$}

\begin{align*}
& \nnew\Pnew(\Wold) - \nold\PrimL{\nold}(\Xold\Wold) \\
= & \sum_{i\in[\nnew]} \ell_{y_i}(\Xnew_{i:}\Wold) - \sum_{i\in[\nold]} \ell_{y_i}(\Xold_{i:}\Wold) + \nnew\PrimR(\Wold) \\
= & -\sum_{i=\nnew+1}^{\nold} \ell_{y_i}(\Xold_{i:}\Wold) + \nnew\PrimR(\Wold).
\end{align*}
\begin{align*}
& \nnew\Dnew(\Alphaoldh) + \nold\PrimL{\nold}^*\Bigl( -\frac{1}{\nold}\Alphaold \Bigr) \\
= & -\sum_{i\in[\nnew]} \ell^*_{y_i}(-\alphaoldh{i}) + \sum_{i\in[\nold]} \ell^*_{y_i}(-\alphaold{i}) - \nnew \PrimR^*\Bigl(\frac{1}{\nnew}\XnewT\Alphaoldh\Bigr) \\
= & \sum_{i=\nnew+1}^{\nold} \ell^*_{y_i}(-\alphaold{i}) - \nnew \PrimR^*\Bigl(\frac{1}{\nnew}\Bigl[\XoldT\Alphaold - \sum_{i=\nnew+1}^{\nold}\alphaold{i}\Xold_{i:}\Bigr]\Bigr).
\end{align*}
\begin{align}
\therefore & \Gnew(\Wold, \Alphaoldh) := \Pnew(\Wold) - \Dnew(\Alphaoldh) \nonumber\\
= & \frac{1}{\nnew} \biggl[ -\sum_{i=\nnew+1}^{\nold} [\ell_{y_i}(\Xold_{i:}\Wold) + \ell^*_{y_i}(-\alphaold{i})] \nonumber\\
	& + \nold\PrimL{\nold}(\Xold\Wold) + \nnew\PrimR(\Wold) \nonumber\\
	& + \nold\PrimL{\nold}^*\Bigl( -\frac{1}{\nold}\Alphaold \Bigr) \nonumber\\
	& + \nnew \PrimR^*\Bigl(\frac{1}{\nnew}\Bigl[\XoldT\Alphaold - \sum_{i=\nnew+1}^{\nold}\alphaold{i}\Xold_{i:}\Bigr]\Bigr)
	\biggr].
	\tag{\eqref{eq:bound-instance-removal} restated}
\end{align}

\subsection{For instance additions: $\nnew > \nold$}

\begin{align*}
& \nnew\Pnew(\Wold) - \nold\PrimL{\nold}(\Xold\Wold) \\
= & \sum_{i\in[\nnew]} \ell_{y_i}(\Xnew_{i:}\Wold) - \sum_{i\in[\nold]} \ell_{y_i}(\Xold_{i:}\Wold) + \nnew\PrimR(\Wold) \\
= & \sum_{i=\nold+1}^{\nnew} \ell_{y_i}(\Xnew_{i:}\Wold) + \nnew\PrimR(\Wold).
\end{align*}
\begin{align*}
& \nnew\Dnew(\Alphaoldh) + \nold\PrimL{\nold}^*\Bigl( -\frac{1}{\nold}\Alphaold \Bigr) \\
= & -\sum_{i\in[\nnew]} \ell^*_{y_i}(-\alphaoldh{i}) + \sum_{i\in[\nold]} \ell^*_{y_i}(-\alphaold{i}) - \nnew \PrimR^*\Bigl(\frac{1}{\nnew}\XnewT\Alphaoldh\Bigr) \\
= & -\sum_{i=\nold+1}^{\nnew} \ell^*_{y_i}(-\alphaoldh{i}) - \nnew \PrimR^*\Bigl(\frac{1}{\nnew}\Bigl[\XoldT\Alphaold + \sum_{i=\nold+1}^{\nnew}\alphaoldh{i}\Xnew_{i:}\Bigr]\Bigr).
\end{align*}
\begin{align}
\therefore & \Gnew(\Wold, \Alphaoldh) := \Pnew(\Wold) - \Dnew(\Alphaoldh) \nonumber\\
= & \frac{1}{\nnew} \biggl[ \sum_{i=\nold+1}^{\nnew} [\ell_{y_i}(\Xnew_{i:}\Wold) + \ell^*_{y_i}(-\alphaoldh{i})] \nonumber\\
	& + \nold\PrimL{\nold}(\Xold\Wold) + \nnew\PrimR(\Wold) \nonumber\\
	& + \nold\PrimL{\nold}^*\Bigl( -\frac{1}{\nold}\Alphaold \Bigr) \nonumber\\
	& + \nnew \PrimR^*\Bigl(\frac{1}{\nnew}\Bigl[\XoldT\Alphaold + \sum_{i=\nold+1}^{\nnew}\alphaoldh{i}\Xnew_{i:}\Bigr]\Bigr)
	\biggr].
	\tag{\eqref{eq:bound-instance-addition} restated}
\end{align}

\subsection{For feature removals: $\dnew < \dold$}

\begin{align*}
& \Pnew(\Woldh) - \PrimR(\Wold) \\
= & \PrimL{n}(\Xnew\Woldh) + \sum_{j\in[\dnew]} \rho_j(\woldh{j}) - \sum_{j\in[\dold]} \rho_j(\wold{j}) \\
= & \PrimL{n}\Bigl(\Xold\Wold - \sum_{j = \dnew+1}^{\dold} \wold{j}\Xold_{:j} \Bigr) - \sum_{j = \dnew+1}^{\dold} \rho_j(\wold{j})
\end{align*}
\begin{align*}
& \Dnew(\Alphaold) - \Dold(\Alphaold) \\
= & -\PrimR^*\Bigl( \frac{1}{n}\XnewT\Alphaold \Bigr) - \PrimR^*\Bigl( \frac{1}{n}\XoldT\Alphaold \Bigr) 
= & \sum_{j = \dnew+1}^{\dold}\rho_j^*\Bigl( \frac{1}{n}\XoldT_{:j}\Alphaold \Bigr)
\end{align*}
\begin{align}
\therefore & \Gnew(\Woldh, \Alphaold) := \Pnew(\Woldh) - \Dnew(\Alphaold) \nonumber\\
= & - \sum_{j = \dnew+1}^{\dold} \Bigl[ \rho_j(\wold{j}) + \rho_j^*\Bigl( \frac{1}{n}\XoldT_{:j}\Alphaold \Bigr) \Bigr] \nonumber\\
	& + \PrimL{n}\Bigl(\Xold\Wold - \sum_{j = \dnew+1}^{\dold} \wold{j}\Xold_{:j} \Bigr) + \PrimR(\Wold) - \Dold(\Alphaold)
	\tag{\eqref{eq:bound-feature-removal} restated}
\end{align}

\subsection{For feature additions: $\dnew > \dold$}

\begin{align*}
& \Pnew(\Woldh) - \PrimR(\Wold) \\
= & \PrimL{n}(\Xnew\Woldh) + \sum_{j\in[\dnew]} \rho_j(\woldh{j}) - \sum_{j\in[\dold]} \rho_j(\wold{j}) \\
= & \PrimL{n}\Bigl(\Xold\Wold + \sum_{j = \dold+1}^{\dnew} \woldh{j}\Xnew_{:j} \Bigr) + \sum_{j = \dold+1}^{\dnew} \rho_j(\woldh{j})
\end{align*}
\begin{align*}
& \Dnew(\Alphaold) - \Dold(\Alphaold) \\
= & -\PrimR^*\Bigl( \frac{1}{n}\XnewT\Alphaold \Bigr) + \PrimR^*\Bigl( \frac{1}{n}\XoldT\Alphaold \Bigr) \\
= & -\sum_{j = \dold+1}^{\dnew}\rho_j^*\Bigl( \frac{1}{n}\XnewT_{:j}\Alphaold \Bigr)
\end{align*}
\begin{align}
\therefore & \Gnew(\Woldh, \Alphaold) := \Pnew(\Woldh) - \Dnew(\Alphaold) \nonumber\\
= & \sum_{j = \dold+1}^{\dnew} \Bigl[ \rho_j(\woldh{j}) + \rho_j^*\Bigl( \frac{1}{n}\XnewT_{:j}\Alphaold \Bigr) \Bigr] \nonumber\\
	& + \PrimL{n}\Bigl(\Xold\Wold + \sum_{j = \dold+1}^{\dnew} \woldh{j}\Xnew_{:j} \Bigr) + \PrimR(\Wold) -  \Dold(\Alphaold)
	\tag{\eqref{eq:bound-feature-addition} restated}
\end{align}

\section{GLRU for the implementations of experiments}

In this section, we describe the expressions used for the implementation of the experiment.

For LOOCV (Section \ref{sec:GLRU-LOOCV}), we require GLRU for one-instance removal
\eqref{eq:bound-instance-removal} in Section \ref{sec:bound-inst-change}.
Let $n$ be the number of instances in the entire dataset; therefore $\nold = n$ and $\nnew = n - 1$.
Additionally, let $(\bm{w}^*, \bm{\alpha}^*)$ be the optimal solution for the entire dataset
(see Algorithm \ref{alg:loocv-GLRU}).
For the index of the removed instance $i\in[n]$,
\eqref{eq:bound-instance-removal} is computed as:
\begin{align}
 & \Gnew(\bm{w}^*, \hat{\bm{\alpha}}^*) \nonumber\\
= & \frac{1}{n-1} \biggl[ - \ell_{y_i}(X_{i:}\bm{w}^*) - \ell^*_{y_i}(-\alpha_i) \nonumber\\
	& + n \PrimL{n}(X \bm{w}^*) + (n-1)\PrimR(\bm{w}^*) + n\PrimL{n}^*\Bigl( -\frac{1}{n}\bm{\alpha}^* \Bigr) \nonumber\\
	& + (n-1)\PrimR^*\Bigl(\frac{1}{n-1}\Bigl[X^\top\bm{\alpha}^* - \alpha_i X_{i:}\Bigr]\Bigr)
	\biggr].
	\label{eq:bound-1instance-removal}
\end{align}

For stepwise feature elimination (Section \ref{sec:GLRU-stepwise}),
we require GLRU for one-feature removal \eqref{eq:bound-feature-removal} in Section \ref{sec:bound-feat-change}.
For the set of currently remaining features $S\subseteq[d]$,
let $(\bm{w}^{*(S)}, \bm{\alpha}^{*(S)})$ be the optimal solution trained from $(X_{:S}, \bm{y})$
(see Algorithm \ref{alg:stepwise-GLRU}).
Given the index of the removed feature $j\in S$,
\eqref{eq:bound-feature-removal} is computed as:
\begin{align}
 & \Gnew(\hat{\bm{w}}^{*(S)}, \bm{\alpha}^{*(S)}) \nonumber\\
= & - \Bigl[ \rho(w^{*(S)}_j) + \rho^*\Bigl( \frac{1}{n}X_{:j}^\top\bm{\alpha}^{*(S)} \Bigr) \Bigr] \nonumber\\
	& + \PrimL{n}\Bigl(X_{:S}\bm{w}^{*(S)} - w^{*(S)}_j X_{:j} \Bigr) + \PrimR(\bm{w}^{*(S)}) - \Dual_{X_{:S}}(\bm{\alpha}^{*(S)}).
	\label{eq:bound-1feature-removal}
\end{align}

In addition, because we use $L_2$-regularization $\rho(t) := (\lambda/2)t^2$ in the experiment,
we can compute $\PrimR(\bm{v}) = (\lambda/2)\|\bm{v}\|_2^2$ and
$\PrimR^*(\bm{v}) = (1/2\lambda)\|\bm{v}\|_2^2$.
Thus, \eqref{eq:bound-1instance-removal} and \eqref{eq:bound-1feature-removal} are
respectively computed as follows:
\begin{align}
 & \Gnew(\bm{w}^*, \hat{\bm{\alpha}}^*) \quad\text{(for \eqref{eq:bound-1instance-removal})} \nonumber\\
= & \frac{1}{n-1} \biggl[ - \ell_{y_i}(X_{i:}\bm{w}^*) - \ell^*_{y_i}(-\alpha_i) \nonumber\\
	& + n \PrimL{n}(X \bm{w}^*) + (n-1)\PrimR(\bm{w}^*) + n\PrimL{n}^*\Bigl( -\frac{1}{n}\bm{\alpha}^* \Bigr) \nonumber\\
	& + \frac{1}{2(n-1)\lambda}\Bigl\| X^\top\bm{\alpha}^* - \alpha_i X_{i:} \Bigr\|_2^2
	\biggr].
	\label{eq:bound-1instance-removal-l2} \\
 & \Gnew(\hat{\bm{w}}^{*(S)}, \bm{\alpha}^{*(S)}) \quad\text{(for \eqref{eq:bound-1feature-removal})} \nonumber\\
= & - \Bigl[ \frac{\lambda}{2}(w^{*(S)}_j)^2 + \frac{1}{2 n^2 \lambda} \Bigl( X_{:j}^\top\bm{\alpha}^{*(S)} \Bigr)^2 \Bigr] \nonumber\\
	& + \PrimL{n}\Bigl(X_{:S}\bm{w}^{*(S)} - w^{*(S)}_j X_{:j} \Bigr) + \PrimR(\bm{w}^{*(S)}) - \Dual_{X_{:S}}(\bm{\alpha}^{*(S)}).
	\label{eq:bound-1feature-removal-l2}
\end{align}

\section{Detailed experimental setups}

In this appendix, we present the detailed setups of the experiments in Section \ref{sec:experiment}.

\subsection{(Exact) training computation} \label{app:exp-training}

In the training procedure (computing $\bm{w}^*$ in \eqref{eq:primal}), we employed the trust region Newton method implemented in LIBLINEAR (named ``TRON'') \cite{Fan08b}. The trust region Newton method runs Newton methods in a limited domain for multiple times to achieve faster convergence.
In our implementation, the convergence criterion was different from that in LIBLINEAR. We stop the training procedure if the ``relative'' duality gap $[\Prim_X(\bm{w}) - \Dual_X(\bm{\alpha})]/\Prim_X(\bm{w})$ becomes smaller than threshold $10^{-6}$.

\subsection{Approximate training computation} \label{app:exp-training-approx}

Following \cite{rad2020scalable}, we implemented an approximate training computation for LOOCV by applying the Newton's method for only one step as follows.

Suppose the same notations as Section \ref{sec:GLRU-LOOCV}, and suppose that $\bm{w}^*$ (the model parameter trained with all instances) is already computed.
Then $\bm{w}^{*(-i)}$ can be approximated as:
\begin{align}
\bm{w}^{*(-i)} \approx \bm{w}^* - \Bigl.\Bigl\{\Bigl[\frac{\partial^2}{\partial \bm w^2}\Prim_{X^{(-i)}}(\bm w)\Bigr]^{-1} \frac{\partial}{\partial \bm w}\Prim_{X^{(-i)}}(\bm w)\Bigr\}\Bigr|_{\bm w = \bm{w}^*}. \label{eq:approx-new}
\end{align}
Then, the gradient in \eqref{eq:approx-new} can be computed as follows:
\begin{align}
& \frac{\partial}{\partial \bm w}\Prim_{X^{(-i)}}(\bm w)
	= \frac{1}{n-1}\sum_{k\in[n]\setminus\{i\}} X^\top_{k:} \ell_{y_k}(X_{k:}\bm w) + \lambda \bm w \nonumber\\
& = \frac{n}{n-1}\frac{\partial}{\partial \bm w}\Prim_X(\bm w) - \frac{\ell^\prime_{y_i}(X_{i:}\bm w)}{n-1}X^\top_{i:} - \frac{\lambda}{n-1} \bm w. \label{eq:new-gradient} \\
& \therefore\quad \Bigl. \frac{\partial}{\partial \bm w}\Prim_{X^{(-i)}}(\bm w)\Bigr|_{\bm w = \bm{w}^*}
	= - \frac{1}{n-1}\Bigl[ X^\top_{i:} \ell^\prime_{y_i}(X_{i:}\bm{w}^*) + \lambda \bm{w}^* \Bigr].
\end{align}
Moreover, the hessian in \eqref{eq:approx-new} can be computed as follows:
\begin{align*}
& \Bigl. \frac{\partial^2}{\partial \bm w^2}\Prim_{X^{(-i)}}(\bm w) \Bigr|_{\bm w = \Wold} = \tilde{H} - s X^\top_{i:}X_{i:}, \quad\text{where}\\
& \tilde{H} := \frac{n}{n-1}\Bigl.\Bigl\{\frac{\partial^2}{\partial \bm w^2}\Pold(\bm w)\Bigr\}\Bigr|_{\bm w = \Wold} - \frac{\lambda}{n-1} I ~~(\in\mathbb{R}^{d\times d}), \\
& s := \frac{\ell^{\prime\prime}_{y_i}(X_{i:}\Wold)}{n-1} ~~(\in\mathbb{R}).
\end{align*}
So, the inverse of the hessian can be computed by Sherman-Morrison's formula as:
\begin{align}
\left[\tilde{H} - s X^\top_{i:}X_{i:} \right]^{-1} = \tilde{H}^{-1} + \frac{s \tilde{H}^{-1} X^\top_{i:}X_{i:} \tilde{H}^{-1}}{1 - s X_{i:} \tilde{H}^{-1} X^\top_{i:}}. \label{eq:inverse-hessian-loocv}
\end{align}
This implies that, if we compute $\tilde{H}^{-1}$ beforehand with $O(d^3)$ time, we can compute \eqref{eq:inverse-hessian-loocv} for each instance removal in LOOCV with a relatively small computational cost of $O(d^2)$ time.

In the implementation, for the computation of the inverse we used the method {\tt pseudoInverse} of the class {\tt CompleteOrthogonalDecomposition} in {\em Eigen} 3.4.0 (\url{https://eigen.tuxfamily.org/}), a matrix library for C++.

\subsection{Data preprocesses and normalizations} \label{app:exp-normalization}

First, for each dataset, we removed all features with unique value. Specifically, the datasets {\tt dexter} and {\tt dorothea} originally had 20,000 and 100,000 features, respectively, but as a result of the removal, they had 11,035 and 91,598 features, respectively.

In addition, we conducted data normalizations for the benchmark datasets in Tables \ref{tb:datasets-loocv-stepwise} and \ref{tab:datasets}.
To normalize a benchmark dataset, we used one of the following schemes. Let $Z\in\mathbb{R}^{n\times d}$ be the original data values and $X\in\mathbb{R}^{n\times d}$ be the normalized values. In addition, let $\mathbb{E}[\cdot]$ and $\mathbb{V}[\cdot]$ be the mean and the variance\footnote{Not the ``unbiased variance'' but the ``sample variance''; we define $\mathbb{V}[\bm{u}]$ ($\bm{u}\in\mathbb{R}^n$) as $(1/n)\sum_{i\in[n]}(u_i - \mathbb{E}(\bm{u}))^2$.}.
\begin{description}
\item[Strategy ``Dense'' (for dense or small datasets)] ~\\
	For each feature, we conducted a linear transformation to make its mean zero and variance one. The values in $Z$ are normalized by $x_{ij} = (z_{ij} - \mathbb{E}[Z_{:j}])/\sqrt{\mathbb{V}[Z_{:j}]}$.
\item[Strategy ``Sparse'' (For sparse and large datasets)] ~\\
	For each feature, we scaled (multiplication, no addition) to obtain the L2-norm $\|X_{:j}\|_2 = \sqrt{n}$. We used this to maintain the sparsity of the original dataset. We normalized to $\sqrt{n}$ since the value is also $\sqrt{n}$ by the ``Dense'' strategy above. The values in $Z$ are normalized by $x_{ij} = \sqrt{n}\cdot z_{ij}/\|Z_{:j}\|_2$.
\end{description}

\section{Size of Bounds by Primal-SCB and Dual-SCB} \label{app:exp-tightness-discussion-dual}

In this appendix, we show detailed calculations for the theoretical results in Section \ref{sec:exp-tightness-discussion}.

The size of the bounds of the prediction result $\XtestT\Wnew$ for Dual-SCB \eqref{eq:bound-size-dual-SCB-norm},
that is, the difference of the two bounds in \eqref{eq:prediction-bound-dual},
is computed as follows.

First, since $\rho(t) = (\lambda/2)t^2$ in the setup of Section \ref{sec:exp-tightness},
$\rho^*(t) := (1/2\lambda)t^2$ and $\partial\rho^*(t) = \{(1/\lambda)t\}$.
Thus, the difference between the two bounds in \eqref{eq:prediction-bound-dual} is calculated as
\begin{align}
	& \frac{1}{\lambda\nnew}\Biggl[ \sum_{j\in[\dnew],~\xtest{j} < 0} \xtest{j} (\underline{F}_j(\hat{\bm{\alpha}}) - \overline{F}_j(\hat{\bm{\alpha}}) )
	+ \sum_{j\in[\dnew],~\xtest{j} > 0} \xtest{j} (\overline{F}_j(\hat{\bm{\alpha}}) - \underline{F}_j(\hat{\bm{\alpha}}) ) \Biggr] \nonumber\\
= & \frac{2}{\lambda\nnew}\Biggl[ \sum_{j\in[\dnew],~\xtest{j} < 0} \xtest{j} (-r_D \|\XnewT_{:j}\|_2)
	+ \sum_{j\in[\dnew],~\xtest{j} > 0} \xtest{j} (r_D \|\XnewT_{:j}\|_2) \Biggr] \nonumber\\
= & \frac{2 r_D}{\lambda\nnew} \sum_{j\in[\dnew]} |\xtest{j}| \|\XnewT_{:j}\|_2 \nonumber\\
= & \sqrt{\frac{8\mu}{\lambda^2\nnew} \Gnew(\hat{\bm{w}}, \hat{\bm{\alpha}})} \sum_{j\in[\dnew]} |\xtest{j}| \|\XnewT_{:j}\|_2,
	\label{eq:bound-size-dual-SCB}
\end{align}
where $\underline{F}_j$ and $\overline{F}_j$ in Theorem \ref{th:bounds} are defined not by \eqref{eq:dual2primal-input-feasible} but by \eqref{eq:dual2primal-input-naive}.

In addition, because we assume that the dataset is normalized as
$\|X_{:j}\|_2 = \sqrt{n}$ (Section \ref{sec:exp-tightness-setup}),
we approximate $\|\Xnew_{:j}\|_2 \approx \sqrt{\nnew}$.
Thus, \eqref{eq:bound-size-dual-SCB} can be rewritten as
\begin{align}
\sqrt{\frac{8\mu}{\lambda^2} \Gnew(\hat{\bm{w}}, \hat{\bm{\alpha}})} \|\Xtest\|_1.
	\tag{\eqref{eq:bound-size-dual-SCB-norm} restated}
\end{align}

\begin{remark}
The result \eqref{eq:bound-size-dual-SCB} seemingly implies that we can make the size of bounds by Dual-SCB tighter compared to Primal-SCB by scaling down $X$.
However, it is not true because, when $X$ is scaled down, we should scale down $\lambda$ accordingly.

If we scale down $X$ and $\Xtest$ to $\eta X$ and $\eta \Xtest$, respectively ($0 < \eta < 1$), 
then the size of the bounds by Dual-SCB is scaled down by $\eta^2$ times while Primal-SCB by $\eta$ times.
This is because the size of the bounds by Dual-SCB \eqref{eq:bound-size-dual-SCB} has the term $|\xtest{j}|\|\Xnew_{:j}\|_2$,
whereas that of Primal-SCB \eqref{eq:bound-size-primal-SCB} has only $\|\Xtest\|_2$.

However, if $X$ is scaled down to $\eta X$, then $\lambda$ should be scaled down to $\eta^2 \lambda$ to obtain the same training result in the sense that the prediction result remains the same.
Given a test instance $\Xtest$ and training result $\bm{w}^*$, we make a prediction using $\XtestT\bm{w}^*$: Thus, if $\Xtest$ is scaled down to $\eta\Xtest$, $\bm{w}^*$ must be scaled up to $\bm{w}^*/\eta$. As a result, to maintain the regularization term $(\lambda/2)\|\bm{w}^*\|_2^2$, we scale down $\lambda$ to $\eta^2 \lambda$. Therefore, the result of \eqref{eq:bound-size-dual-SCB} and \eqref{eq:bound-size-primal-SCB} are consistent for the scaling of $X$ as long as we scale $\lambda$ accordingly.
\end{remark}

\end{document}